\numberwithin{equation}{section}
\newcommand{\R}{\mathbb{R}}
\newcommand{\N}{\mathbb{N}}
\newcommand{\X}{\mathbb{S}}
\newcommand{\E}{\mathbb{E}}
\newcommand{\U}{\mathcal{U}}
\newcommand{\I}{\mathcal{I}}
\newcommand{\W}{\mathcal{W}}
\theoremstyle{plain}
\newtheorem{theorem}{Theorem}[section]
\newtheorem{lemma}[theorem]{Lemma}
\newtheorem{corollary}[theorem]{Corollary}
\newtheorem{proposition}[theorem]{Proposition}
\newtheorem{assumption}[theorem]{Assumption}
\theoremstyle{definition}
\newtheorem{definition}[theorem]{Definition}
\newtheorem{remark}[theorem]{Remark}
\title{Robust Policies for Proactive ICU Transfers}
\author{%
  Julien Grand-Cl\'ement\\
  IEOR Department, Columbia University\\
  \texttt{jg3728@columbia.edu} \\
   \And
   Carri W. Chan \\
   Columbia Business School, Columbia University \\
   \texttt{cwchan@gsb.columbia.edu} \\
      \And
   Vineet Goyal \\
   IEOR Department, Columbia University \\
   \texttt{vg2277@columbia.edu} \\
       \And
   Gabriel Escobar \\
  Kaiser Permanente Division of Research, \\
   \texttt{gabriel.escobar@kp.org} \\
}
\begin{document}

\maketitle

\begin{abstract}
Patients whose  transfer to the Intensive Care Unit (ICU) is unplanned are prone to higher mortality rates and longer length-of-stay. Recent advances in machine learning to predict patient deterioration have introduced the possibility of \emph{proactive transfer} from the ward to the ICU.  In this work, we study the problem of finding \emph{robust} patient transfer policies which account for uncertainty in statistical estimates due to data limitations when optimizing to improve overall patient care. We propose a Markov Decision Process model to capture the evolution of patient health, where the states represent a measure of patient severity.  Under fairly general assumptions, we show that an optimal transfer policy has a threshold structure, i.e., that it transfers all patients above a certain severity  level to the ICU (subject to available capacity). As  model parameters are typically determined based on statistical estimations from real-world data, they are inherently subject to misspecification and estimation errors. We account for this parameter uncertainty by deriving a robust policy that optimizes the worst-case reward across all plausible values of the model parameters. {\color{black} We are able to prove structural properties on the optimal robust policy and to compare it to the optimal nominal policy.} In particular, we show that the robust policy also has a threshold structure under fairly general assumptions. Moreover, it is more aggressive in transferring patients  than the optimal nominal policy, which does not take into account parameter uncertainty.  We present computational experiments using a dataset of hospitalizations at 21 Kaiser Permanente Northern California hospitals, and present empirical evidence of the sensitivity of various hospital metrics (mortality, length-of-stay, average ICU occupancy) to small changes in the parameters. While threshold policies are a simplification of the actual complex sequence of decisions leading (or not) to a transfer to the ICU, our work provides useful insights into the impact of parameter uncertainty on deriving simple policies for proactive ICU transfer that have strong empirical performance and theoretical guarantees.
\end{abstract}

\textbf{Keywords: }{Intensive Care Units, Markov Models, Robust Optimization, Threshold policies.}

\section{Introduction.}
In a hospital, critically ill patients are treated in the Intensive Care Unit (ICU), where they require a significant amount of human and material resources \citep{Milbrand}. Effective management of ICUs has substantial implications, both for the patient outcomes and for the operational costs of the hospital.
The sudden health deterioration of a patient in the general medical/surgical ward can result in an unplanned transfer to the ICU and a severe downturn in the chance of survival of the patient. Such unplanned transfers  typically have worse outcomes than patients who are directly admitted to the ICU (e.g. \cite{Barnett,Escobar-2013}). Developing strategies to effectively manage the limited ICU beds \citep{Green} is becoming even more critical as demand for ICU care is increasing \citep{Mullins}.  The primary focus of this work is to derive and evaluate \emph{robust proactive ICU transfer policies} in order to improve patient flow and patient outcomes.

Recent advances in machine learning have brought real-time risk scores of a patient's likelihood of deterioration available to clinicians' use in hospitals \citep{Escobar-2012}. Consequently, understanding the impact of intervening on patients based on such scores (which is currently occurring based primarily on informed clinical judgment and empirical observation) needs to be better understood from a theoretical perspective. In practice, alerts based on such scores are known to trigger multiple types of response, which can range from simple maneuvers (e.g., increased monitoring, one time fluid boluses) to immediate transfer to the ICU. In some cases, alerts trigger discontinuation of life support. In this work, as a first step towards better theoretical understanding of the  pathways involved in early warning systems and in order to focus on the impact of parameter uncertainty, we focus on proactive ICU admissions, while recognizing this is just one of many potential interventions that could take place.
 Using the data of nearly $300,000$ hospitalizations at KPNC,  the authors in \cite{ICU-wenqi} provide empirical evidence that proactively transferring patients to the ICU can significantly reduce the average mortality risk and the Length-Of-Stay (LOS). However, using simulation to consider the system-wide effect of various proactive transfer policies, the authors  provide a cautionary tale that overly aggressive transfers can have a significant impact on increasing ICU occupancy, which is associated with worse outcomes (e.g. \cite{bump-2}).

While \cite{ICU-wenqi} demonstrates that there is promising potential in the use of proactive transfers, there are some limitations with respect to the insights developed at the system-wide level.
First, in practice, the actual ICU admission decision relies on a complex sequence of events that are activated when a patient's severity score reaches the alert threshold. Therefore, the model of ICU transfer based only on the severity scores is a simplification. Second, there is limited theoretical basis for the class of policies
 (threshold and random) which are considered.  But, perhaps more critically, the core parameters of the simulation model are calibrated from real data and are subject to uncertainty. In particular, the transition rates of a Markov chain are estimated from a finite dataset of patient hospitalizations and are an approximation of the true parameters. This is concerning because the performance of a  policy can significantly deteriorate, even under small variations from the true parameters (e.g. Section \ref{sec:exp} of this paper). Consequently, an optimal transfer policy for the estimated parameters might perform very poorly in practice even if the true parameters are close but different. This limitation is widely acknowledged in the healthcare community and is typically addressed by conducting sensitivity analysis. This is the approach taken in \cite{ICU-wenqi}.   However, when models have many parameters -- as ours does -- the comprehensiveness of these types of sensitivity analysis can be limited due to computational reasons.  This paper proposes to look at the problem by optimizing  the worst-case  performance over an uncertainty set by using tools from robust optimization.

Our goal in this paper is to develop \textit{robust} transfer policies, i.e., transfer policies with guarantees of good performance over a given set of plausible hospital parameters, which are consistent with available data.  This is in contrast to \textit{nominal} transfer policies, which are only guaranteed to have good performance for known fixed values of the parameters and could have very bad performance for close, but different, parameters. In doing so, we will leverage results from the Robust MDP literature (e.g. \cite{Iyengar, Mannor, Kuhn, Goh,GGC}) to develop a theoretical and empirical basis for our proposed transfer policies.

Our main contributions, both methodological and practical, can be summarized as follows:

\begin{itemize}
\item[] {\textbf{Markov model for a single patient.}}
We propose an approximation of the full hospital dynamics, using the health evolution of a \textit{single} patient. In particular, we present a Markov Decision Process (MDP) to model the patient health evolution. This MDP is able to capture the fundamental trade-off between the benefit of proactive transfer for individual patients versus suboptimal use of limited ICU resources for patients who may not ``really need it''.  We also show that our single-patient MDP can be interpreted as a relaxation of a more expressive, but intractable, multi-patient MDP which directly incorporates the ICU capacity constraints in the decision-making.

\item[] {\textbf{Structure of optimal nominal policies.}}
Under fairly general and interpretable assumptions that we expect to hold in practice, we show that an optimal proactive transfer policy in our single-patient MDP is a threshold policy. In particular, there exists an optimal policy that transfers all patients above a certain severity score. This threshold structure is particularly nice because of its interpretability and implementability.

\item[] {\textbf{Robustness of transfer policies.}} Building upon the nominal model, we incorporate the real-world limitations of parameter uncertainty, by considering parameter misspecification for the transition matrix. Prior work \citep{Iyengar,Kuhn} assume \textit{rectangularity}, where transitions out of different health states are unrelated. However, underlying factors, such as genetics, demographics, and/or physiologic characteristics of certain diseases, could dictate the health evolution of patients in specific health states.  As such, we consider a model of uncertainty where the transition probabilities of different health states are \emph{correlated} and depend on a factor model \citep{Goh,GGC}.

We present an efficient algorithm to compute an \textit{optimal robust policy} that maximizes the worst-case possible outcomes over all plausible transition matrices. Moreover, we prove structural results for the optimal robust policy and compare it to the nominal policy. In particular, an optimal robust policy is always deterministic and -- under the same assumptions as in our nominal model -- of threshold type. {\color{black}This is in contrast to the general situation in robust MDPs, where there may not exist a optimal robust policy that is deterministic.} Additionally, the threshold of the optimal robust policy is lower than the threshold of the optimal nominal policy. Therefore, the optimal robust policy transfers more patients to the ICU than the optimal nominal policy.

\item[] {\textbf{Numerical experiments.}}
We present detailed numerical experiments to compare the performance of the optimal nominal and robust transfer policies, making use of the hospitalization data of almost $300,000$ patients at Kaiser Permanente.
We  observe that, for our single-patient MDP, the performance of the optimal nominal policy can deteriorate even for small variations of the model parameters. Moreover, there are significant differences in the recommended thresholds between the nominal and  robust policies, which reflects that these polices could have substantial differences in the proportion of patients who are  proactively transferred. When considering the full hospital model, we observe similar deterioration in performance (as measured by mortality, length-of-stay, average ICU occupancy) even for small parameters deviations. Additionally, we find that correlated uncertainty in the transition matrix results in different and more useful insights than when considering uncorrelated uncertainty which can be overly conservative. We also highlight the contrast between this worst-case analysis and more standard sensitivity analysis approaches. {\color{black} Our work suggests that proactively transferring the patients with the riskiest severity scores has the potential to improve the hospital mortality and LOS, without significantly increasing the ICU occupancy, even in the worst-case.}
\end{itemize}

The rest of the paper is structured as follows. We finish this section with a brief overview of related literature. In Section \ref{sec:hosp}, we present the hospital model and the Markov chain that describes the evolution of a patient's health.
In Section \ref{sec:MDP}, we introduce a Markov Decision Process  to approximate the full hospital model and we theoretically characterize the structure of optimal nominal policies. We address parameter uncertainty in Section \ref{sec:rob}, where we introduce our model of uncertainty and  we prove some theoretical results on the structure of optimal robust policies. In Section \ref{sec:exp}, we present computational experiments based on a dataset from Kaiser Permanente Northern California and we examine the contrast between the optimal nominal and optimal robust policies.

\vspace{2mm}
\noindent \textbf{Notations.} For an integer $n \geq 0$, we denote by $[n]$ the set $\{1,...,n\}.$ Vectors and matrices are in bold font whereas scalars are in regular font, except for policies $\pi$ which are also in regular font. The vector $\boldsymbol{e}$ has every component equal to one and its dimension depends on the context.

\subsection{Related work.}
Our work mainly involves three topics of research: (i) ICU management and proactive care in hospitals, (ii) Markov Decision Process in healthcare and (iii) robust Markov Decision Process,  particularly those applied to problems in healthcare.

\paragraph{ICU management and proactive care in hospitals.} There is a large and growing body of literature in both the operations and medical literature on the management of ICUs. For instance, the impact of congestion and demand-driven discharges has been considered both empirically (e.g. \cite{bump} and \cite{bump-2}) and theoretically (e.g. \cite{chan2012optimizing}). More closely related to our work is admission into the ICU.  A number of papers, including \cite{adm-decision-1}, \cite{simu-1}, and \cite{adm-decision-2}  consider the impact  of ICU admission decisions on patient outcomes when patients arrive to the hospital.  {\color{black} Threshold policies have been investigated in various admission control settings (including the ICU), but most prior works either focus on a simple transition model (e.g. a patient in severity class $i$ can only transition to condition $i+1$ or $i-1$), or only focus on the \textit{empirical} performance of threshold policies (e.g. \cite{barron2016performance,barron2018threshold} for related approaches in machine maintenance). \cite{altman} prove the optimality of threshold policies, but rely on a submodularity assumption, which is less interpretable than ours in a healthcare setting.
In contrast to these works, we consider a setting where patients can be admitted to the ICU at any point while they are in the general medical/surgical ward and can transition to any other severity condition. In such a setting we give interpretable conditions for the optimality of threshold policies.}

Given the limited number of hospital resources and the adverse impact of strained ICUs on the quality of care provided (e.g. \cite{Alberta}), there has been a growing interest in the development of predictive models for patients dynamics and outcomes, including LOS, death and readmission rates to the ICU. For instance,  \cite{chronic} develop a risk-adjustment metric to predict patient hospitalization. \cite{score-ward} study the performance of the National Early Warning Score (NEWS) as a risk score for ED patients, while the NEWS risk score has been specifically developed for patients in the hospital ward.
 \cite{pred-2} utilize expert opinion, naive Bayes and logistic regression to predict the number of patients in the Emergency Department (ED) who will be admitted to a particular inpatient unit. \cite{score-icu-1} develop ICU admission scores aimed at predicting mortality risk after coronary artery bypass grafting, while \cite{score-icu-2} develop a scoring system for predicting ICU admission after major lung resection; see \cite{score-icu-survey} for a review of the medical severity scores for ICU patients.

The operations community has studied how preventative/proactive care can be used to improve patient care. For instance,  \cite{pred-1} design efficient proactive ED admission control policies  based on predictions of potential patient arrivals, while proactive care using Markov models can be dated back to at least \cite{ozekici}, where the authors introduce an MDP to compute an optimal inspection schedule in the case of post-operative periumbilical pruritus and breast cancer.

This paper focuses on the dynamic decision of whether and when to proactively transfer a patient to the ICU based on a patient's risk of deterioration (e.g. \cite{Escobar-2012}). The particular problem we study is  related to that  in \cite{ICU-wenqi} which uses simulation to investigate the impact of proactive transfers to the ICU on the patients' flow in the hospital and on the in-hospital mortality, LOS and ICU occupancy. While a substantial focus of \cite{ICU-wenqi} is to rigorously estimate the causal effect of proactive transfers on individual patients, we focus on utilizing MDP approaches to derive theoretically justified transfer policies.  Interestingly,  \cite{ICU-wenqi} conduct a sensitivity analysis by considering random deviations in the parameters of their model over the confidence intervals of the parameter estimates as an acknowledgement of the potential impact of parameter uncertainty.
{\color{black} Most of the healthcare literature primarily focuses on sensitivity analysis of the selected policy.  Unfortunately, sensitivity analysis is unable to capture possible \textit{adversarial} deviations of these parameters and, as we will show in this work, such deviations can substantially impact system performance. In contrast, robust optimization explicitly accounts for the uncertainty (and adversarial deviations) when deriving good policies. }

\paragraph{Markov Decision Process in healthcare.} In this work, we will leverage the methodology of Markov Decision Processes (MDP).  This modeling framework has been used extensively in many healthcare applications including early detection, prevention, screening and treatment of diseases. MDPs are particularly efficient to analyze chronic diseases and decisions that are made sequentially over time in a stochastic environment. In particular, MDPs have been used, among others, for kidney transplantation \citep{mdp-kidney}, HIV treatment recommendation \citep{mdp-HIV}, breast cancer detections \citep{mdp-breast-cancer},  cardiovascular controls for patients with Type 2 diabetes \citep{mdp-steimle} and determining the optimal stopping time for medical treatment \citep{cheng2019}. We refer the reader to \cite{mdp-med-1} and \cite{mdp-med-2} for reviews of applications of MDP to medical decision making.

\paragraph{Robust Markov Decision Processes.}
In most medical applications, we only have access to observational data. Consequently, we can only obtain a \emph{noisy estimate} of the true parameters of the MDP, and the decision-maker may  recommend a treatment that performs poorly with respect to the true parameters.
{\color{black}
Partially Observable MDPs (POMDPs) assume that the system dynamics are determined by an MDP, but the agent cannot fully observe the underlying states. Instead, the decisions must be based on the \textit{observed} states. POMDPs are generally hard to solve \citep{madani1999undecidability}. Robust POMDPs are known to be even harder to solve and may lead to randomized optimal policies \citep{rasouli2018robust}, making POMDPs unusable in our healthcare application.}
Robust MDPs address the issue of parameter misspecification in the MDP \citep{Iyengar,Nilim,Kuhn, Xu-Mannor}. The goal is to compute an optimal \textit{robust} policy, i.e., a policy that maximizes the worst-case expected outcome over the set of all plausible parameters. More specifically, the authors in \cite{Iyengar, Nilim} and \cite{Kuhn} present algorithms to efficiently compute an optimal robust policy, provided that the parameters related to different state-action pairs are unrelated. Such \textit{rectangular} uncertainty sets are quite conservative and do not allow relations across transition probabilities from different  states. This is potentially very conservative especially if the transition probabilities depend on a common set of underlying factors, as could be the case in healthcare applications.
{\color{black}
In principle one could also use \textit{distributionally} robust MDPs \citep{xu2010distributionally} to ameliorate the issues with parameter uncertainty, but this leads harder optimization problems than the linear program of our own robust MDP formulations.  Additionally,  it is not clear how to build ambiguity sets around the nominal density estimation for distributionally robust MDPs, while it is reasonably easy to incorporate confidence intervals  in the uncertainty sets of robust MDPs.}

\paragraph{Robust Markov Decision Processes in healthcare.}
In light of the limitations of the \textit{rectangularity} assumption for modeling  parameters uncertainty, the authors in \cite{steimle} develop a \textit{multi-model} MDP approach and apply it to a case study of cholesterol management. However, computing the optimal robust policy of a multi-model MDP is intractable in general. We use the model of \textit{factor matrix uncertainty}, introduced in \cite{Goh} and further analyzed in \cite{GGC}. In particular, the authors in \cite{Goh} use a model of uncertainty (later referred to as \textit{factor matrix} uncertainty set) which captures transitions that are jointly varying in the set of all plausible parameters. They show how to compute the worst-case reward for a given policy, and apply these methods to a cost-effectiveness analysis of fecal immunochemical testing for detecting colorectal cancer. \cite{GGC} show that for a factor matrix uncertainty set, one can also compute the optimal robust policy, i.e., the policy that maximizes the worst-case reward. {\color{black} They also prove important structural properties on the optimal value vectors, which we can use to prove structural properties on the optimal nominal and robust policies.} Our work builds upon \cite{Goh} and \cite{GGC} by making use of a factor matrix uncertainty set in the specific setting of a Markov chain to model the patient's trajectory in a hospital and capture different levels of connectivity in parameter deviations.   Specifically,  we are able to (i) compute an optimal robust policy, and (ii) give theoretical guarantees on the \textit{structure} of the optimal policies, namely the optimal nominal and the optimal robust policies  both have a threshold structure. {\color{black}

 While \cite{Goh} consider similar uncertainty formulation and present an algorithm to compute the \textit{worst-case} performance of a given policy, they do not consider the problem of finding \textit{optimal} robust policies.


Building upon the results in \cite{GGC}, we focus on the specific problem of proactive transfer to the ICU.  Our single-patient and multi-patient MDP models and interpretable assumptions appear to be novel in the literature. Our detailed numerical experiments with our hospital model emphasize the advantage of our robustness analysis (compared to a simple sensitivity analysis) in highlighting the potential detrimental impact of adversarial parameter deviations. We also present extensive details into the construction of a factor matrix uncertainty sets directly from the data, in contrast to the simulations in \cite{GGC} which consider that the factor matrix uncertainty set is already given.}



\section{Hospital model and proactive transfer policies.}\label{sec:hosp}
We formally present our discrete time hospital patient flow model. This model is similar to \cite{ICU-wenqi} and is depicted in Figure \ref{fig:hosp_model}. We consider a hospital with two levels of care, the Intensive Care Unit (ICU) and the general medical/surgical ward (ward). In order to focus on the management of the ICU, we assume that the ward has unlimited capacity while  the ICU has a limited capacity $C < + \infty$.

 \begin{figure}[htb]
 \begin{center}
 \includegraphics[scale=0.3]{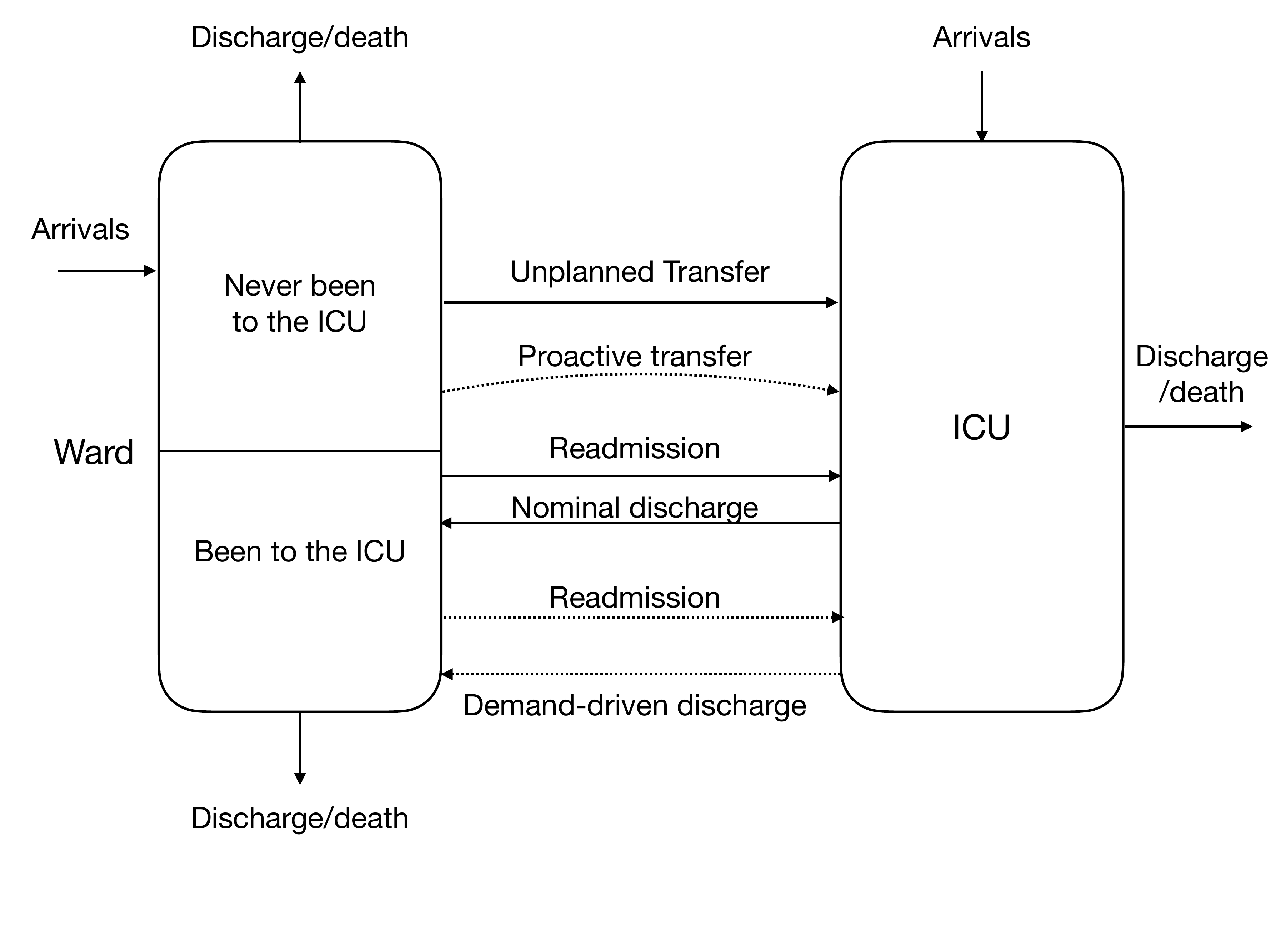}
 \caption{Simulation model for the hospital.}
 \label{fig:hosp_model}
  \end{center}
 \end{figure}

 \subsection{Model Dynamics}
{\bf \emph{ Ward patients:}}
We start by describing the dynamics of the patients on the ward.
These patients are divided into those who have already been to the ICU during their hospital stay and those who have not. The state of a patient who has never been to the ICU is captured by a severity score $i \in \{1,...,n\}$, for a given number of severity scores $n \in \N$. Each patient is assigned  a severity score at arrival in the hospital, and this score is then updated in each time slot. We model the arrivals of patients with  severity score $i$ as a non-homogeneous Poisson process $\lambda_{i}(t)$.
We model the evolution of the severity scores as a Markov Process with transition matrix $\bm{T}^{0} \in \R^{n \times (n+3)}$.

In each time slot, a patient whose current severity score is $i \in [n] $ transitions to another severity score in $j \in [n]$ with probability $T_{ij}^{0}$. In addition, one of the three following events may happen at the end of each time slot:

1) With probability $T_{i,n+1}^{0}$, the patient may \textit{crash} and require a \emph{reactive} transfer to the ICU.

2) With probability $T_{i,n+2}^{0}$, the patient may fully recover and leave the hospital.

3) With probability $T_{i,n+3}^{0}$, the patient may die.

 {\color{black} We verify the consistency of this approach by comparing the empirical probability of crashing, dying in the ward, and surviving to hospital discharge in the data with the results of our hospital simulations using our Markov chain. We find that with our parameters calibrated as in Appendix \ref{app:detail_hosp}, the key metrics are comparable. For instance the mortality rate in the ward is 1.93 \% in our simulations, which is comparable to the 2.2 \% computed from the data directly.}

\emph{Reactive transfer to the ICU.} If a patient crashes on the ward, s/he will be admitted to the ICU. Upon ICU admission, the patient's remaining hospital Length-Of-Stay (LOS) is modeled as being log-normally distributed with mean $1/\mu_{C}$ and standard deviation $\sigma_{C}$. We consider a model where a proportion $p_{W}$ (following a distribution with density $f_{p_{W}}$) of this LOS is spent in the ICU, while the remaining proportion of time, $1-p_{W}$, is spent in the ward. During this time, the patient may again require ICU admission, with a rate of $\rho_{C}$. Any patient still in the ward at the end of this LOS is assumed to die  with probability $d_{C}$, or to fully recover and be discharged with probability $1-d_{C}$.
If there are no available beds in the ICU when a patient crashes, the ICU patient with the shortest remaining service time in the ICU is discharged back to the ward in order to accommodate the incoming patient. We refer to this event as a \textit{demand-driven discharge}. A demand-driven discharged patient  has an ICU readmission rate of $\rho_{D}$. The authors in \cite{bump-2} suggest that $\rho_{D}$ is higher than  $\rho_{E}$, the readmission rate of patients who are naturally discharged from the ICU.
{\color{black} We note that in \cite{chan2012optimizing}, the ratio $\rho_{D}/\rho_{E}$ fluctuates between 1.11 and 1.18.  Therefore, we set $\rho_{D}=1.15 \cdot \rho_{E}$. Note that this choice of parameters is stress-tested in the numerical experiments in \cite{ICU-wenqi} (Section 5 and Appendix B).}
{\color{black}
\begin{remark} 
We do not model with a Markov chain the health dynamics of patients discharged from the ICU. This would require using a transition matrix different than $\bm{T}^{0}$, since these patients are sicker than the population of patients who have not been to the ICU. This would have two issues.  First, there are way fewer observations for patients who have been to the ICU than those who have never been to the ICU, and this would result in unreasonably large confidence intervals for the coefficients of the transition matrix.  Second, and perhaps most importantly,  using a different Markov chain would not allow us to explicitly capture the impact of ICU discharge on LOS/mortality (whereas our current approach does). We find it more relevant to directly use simpler statistics of mortality rate and LOS to summarize the impact of ICU admission onto these patients, rather than using a different Markov chain.
\end{remark}
}


\emph{Proactive transfers to the ICU.} If there are beds available in the ICU, a patient can be  \textit{proactively transferred} from the ward to the ICU. Such patients typically have better outcomes than those who crash and require a reactive ICU transfer \citep{ICU-wenqi}. If a patient with severity score $i \in [n]$ is proactively transferred, the LOS is modeled as being log-normally distributed with mean $1/\mu_{A,i}$ and standard deviation $\sigma_{A,i}$, while a proportion $p_{W} \sim f_{p_{W}}$ of this LOS is spent in the ICU. As in the case of reactive transfer, the patient will then survive to hospital discharge with probability $1-d_{A,i}$. We assume that  $1-d_{A,i}\geq 1-d_{C}$, i.e., the patient is more likely to survive if proactively transferred. If the patient is naturally discharged from the ICU, the readmission rate is $\rho_{A,i}$, otherwise it is $\rho_{D}$. We set $\rho_{A,i}=\rho_{C}$,  as these two types of patients are transferred to the ICU from the ward, in contrast to direct admits patients.

Note that in practice when a patient reaches an alert threshold he may enter an evaluation state where further tests and examinations are required before an admission decision is made. In some instances, the patient may never be admitted to the ICU; e.g. if the alert is an error or the patient has a directive to not provide rescue measures. In order to focus on the impact of parameter uncertainty, our model assumes that ICU admission decisions are made right after the alert threshold is attained.

{\bf \emph {Direct admits to the ICU:}} In addition to reactive and proactive ICU transfers from the ward, patients can also be directly admitted to the ICU.  We model the arrivals of these patients  with a non-homogeneous Poisson process with rate $\lambda_{E}(t)$. Their LOS is log-normally distributed with mean $1/\mu_{E}$ and standard deviation $\sigma_{E},$ and a proportion $p_{E}$ (following a distribution with density $f_{p_{E}}$) of this LOS is spent in the ICU, while the remaining time is spent in the ward. At the end of this LOS, the patient fully recovers and leaves the hospital with probability $1-d_{E}$, or dies with probability $d_{E}$.

Details about the distribution laws of the different stochastic processes (arrivals in the ward and in the ICU, transition matrix across severity scores, distribution of LOS and mortality rate, etc.) involved in the hospital model of Figure \ref{fig:hosp_model} can be found in Appendix \ref{app:detail_hosp}.
\subsection{Transfer policies}
A transfer policy $\pi$ is a decision rule that, for each patient in the ward, decides whether and when to  proactively transfer the patient to the ICU (subject to bed availability). Our goal is to study the impact of the proactive transfer policies on  hospital performance  as measured by the mortality rate, average LOS and average ICU occupancy.  A particular class of simple and interpretable transfer policies is the class of \textit{threshold} policies. A policy is said to be threshold if it transfers to the ICU all patients whose severity score are higher than a certain fixed threshold. Such proactive transfers are subject to bed availability in the ICU.

\subsection{Challenges}\label{ssec:challenges}
The hospital model just described captures many salient features of real patient flows. Moreover, it is able to capture the core trade-off we are interested in studying between the benefits of proactive transfers for individual patients and needlessly utilizing expensive ICU resources.  That said, the model also  suffers from some limitations that we elaborate below.

\emph{Tractability.} While our model could be described as a Markov Decision Process (MDP, e.g. \cite{Puterman}), the state space is prohibitively large. For instance, with $p$ patients in the ward and $n$ severity scores, one needs a state space of cardinality $n^{p}$ to describe the state of the ward. Thus, numerically solving this MDP is highly intractable.

Alternatively, one could take the approach in \cite{ICU-wenqi} and use simulation.  However, there are $2^{n}$ deterministic, state-independent, transfer policies (with $n$ the number of severity scores). The number of state-dependent policies grows by the size of the ICU and/or the large ward state-space. Expanding to allow for randomized policies results in an uncountable number of potential transfer policies. Therefore, it is intractable to simply compare all the deterministic transfer policies using simulations\footnote{For each transfer policy, computing the hospital performance (average mortality, LOS, ICU occupancy) takes a couple of hours on a laptop with 2.2 GHz Intel Core i7 with 8 GB of RAM.}.

\emph{Parameter uncertainty.} The hospital model is specified by the stochastic processes detailed above, including a transition matrix $\bm{T}^{0} \in \R^{n \times (n+3)}_{+}$. The coefficients of this matrix are estimated from historical data and consequently suffer from statistical estimation errors. The hospital performance could be highly sensitive to variations in coefficients of the transition matrix $\bm{T}^{0}$ and therefore, optimizing the policies using the estimated transition matrix could lead to suboptimal policies.  In particular, the optimal transfer policy for the hospital model with transition matrix $\bm{T}^{0}$ might lead to significantly suboptimal performance for the true, underlying transition matrix, even for small deviations in $\bm{T}^{0}$. Additionally, the hospital model is itself an approximation of the true hospital dynamics and is therefore, subject to further misspecification errors.

Given these limitations of the model, we turn our attention to the development of insights using an approximation of the hospital dynamics.

\section{A single-patient Markov model.}\label{sec:MDP}
In light of the discussion in Section \ref{ssec:challenges}, we propose a  tractable approximation of the full hospital model using an MDP  that captures the health dynamics of a \textit{single} patient.

\subsection{Single-patient MDP.}\label{subsec:MDP}

\paragraph{State and Action spaces.}
We consider an MDP with $(n+4)$ states. The set of states is
$$ \X = \{1,...,n\} \bigcup \left\{n+1=CR,n+2=RL,n+3=D,n+4=PT \right\}.$$
The states $i \in \{1,...n\}$ model the severity scores of the patient. There are $4$ terminal states, $CR, RL, D$ and $PT$. The state $CR$ models the \textit{crash} of a patient, its subsequent reactive transfer to the ICU, as well as the outcome when the crashed patient finally exits the hospital (i.e., fully recovering or dying). The state $RL$ corresponds to \textit{Recover and Leave}, the state $D$ corresponds to in-hospital \textit{Death}, and the state $PT$ corresponds to a patient who has been \textit{Proactively Transferred}, as well as the outcome when the patient finally exits the hospital (i.e., fully recovering or dying).


For each state $i=1,...,n$, there are $2$ possible actions, which model the decision of proactively transferring the patient (action $1$) or not (action $0$).

Figure \ref{fig:mdp_model} depicts the single-patient MDP.
 \begin{figure}[htb]
 \centering
 \includegraphics[scale=0.3]{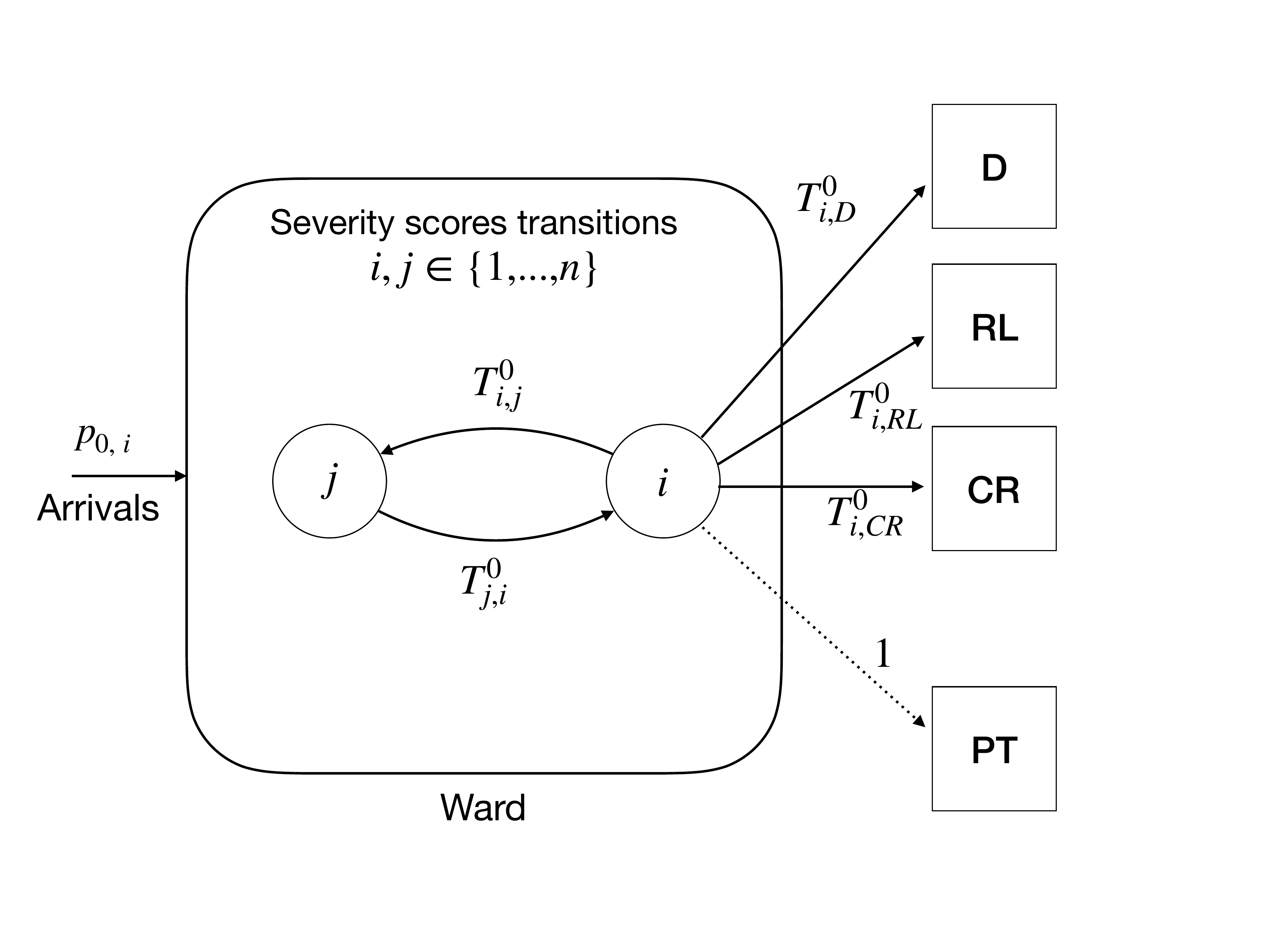}
 \caption{Single-patient MDP model. Terminal states are indicated as square. The patient arrives in the ward with a severity score of $i \in \{1,...,n\}$ with an initial probability $p_{0,i}$. The solid arcs correspond to transitions where no transfer decision is taken (action $0$), and the patient can transition to another severity score $j$ with probability $T_{i,j}^{0}$ or to the terminal states $CR, RL$ or $D$. When the patient is in state $i$ and the decision-maker takes the decision to proactively transfer the patient (action 1), the patient transitions with probability $1$ to the terminal state $PT$ (dashed arc).}
 \label{fig:mdp_model}
 \end{figure}

\paragraph{Policies.}
A policy consists of a map $\pi:\X \rightarrow [0,1]$, where for each severity score $i \in [n]$, $\pi(i) \in [0,1]$ represents the probability of proactive transfer of a patient with current severity score $i$. For terminal states $ i \in \{CR,RL,D,PT\}$, we set $\pi(i)=0$.

A policy $\pi$ is said to be of \textit{threshold} type when
$\pi(i) = 1 \Rightarrow \pi(i+1) = 1, \forall \; i \in [n-1].$ In other words, the policy proactively transfers patients at all severity scores above a given threshold. Therefore, the policy $\pi$ can be described by an integer $\tau \in [1,n+1]$, such that all patients with severity score higher (or equal) than $\tau$ is transferred. Note that a threshold of $\tau=n+1$ means that no patient is proactively transferred, while a threshold of $\tau=1$ means that all patients are proactively transferred. We write $\pi^{[\tau]}$ to denote the threshold policy parametrized by threshold $\tau$. For any threshold policy $\pi$, we write its threshold $\tau(\pi)$.

\paragraph{Transitions.}

The states $i \in [n]$ model the $n$ possible severity scores of a patient. \textit{ Every six hours (a \textit{period}), when a patient is in state $i \in [n]$, the decision-maker can choose to proactively transfer the patient to the ICU (action $1$) or not (action $0$). }
\begin{itemize}
\item
If the patient is proactively transferred from $i \in [n]$, s/he transitions with probability $1$ to the state $PT$. The state $PT$ is a terminal state, the decision-maker receives a terminal reward.
\item
If the patient is not proactively transferred from state $i \in [n]$, the patient transitions to state $j \in [n]$ with probability $T_{ij}^{0}$ in the next $6$ hours (where $\bm{T}^{0}$ is the transition matrix among severity scores). Alternatively, the patient can either transition to state $CR$ with probability $T_{i,n+1}^{0},$ to state $RL$, with probability $T_{i,n+2}^{0}$ or the patient transitions to $D$ with probability $T_{i,n+3}^{0}$. When the patient reaches one of the terminal states -- $CR, RL$ or $D$ -- s/he receives the associated terminal reward.
Note that the patient exits the ward almost surely, assuming that
$ \theta = \min_{i \in [n]} \min \{T_{i,CR}^{0},T_{i,RL}^{0},T_{i,D}^{0} \} >0.$
\end{itemize}
\paragraph{Rewards.} The discount factor $\lambda \in (0,1)$ captures the decreasing importance of future rewards compared to present rewards. The goal of the decision-maker is to pick a policy $\pi$ that maximizes the expected discounted cumulated rewards, defined as $
 R(\pi,\boldsymbol{T}^{0})=E^{\pi, \boldsymbol{T}^{0}} \left[ \sum_{t=0}^{\infty} \lambda^{t}r_{i_{t}a_{t}} \right],
$ where $r_{i_{t}a_{t}}$ is the reward associated with visiting state $i_{t}$ and choosing action $a_{t}$ at time $t \in \N$.

 For each policy $\pi$, we can associate a \textit{value-vector} $\bm{V}^{\pi} \in \R^{n+4}$, defined as
 \begin{equation}\label{eq-valuevec}
V^{\pi}_{i}=E^{\pi, \boldsymbol{T}} \left[ \sum_{t=0}^{\infty} \lambda^{t}r_{_{t}a_{t}}  \; \bigg| \; i_{0} = i \right], \forall \; i \in \{ 1,...,n+4\}.
 \end{equation}

We want our MDP model to capture the trade-off between the benefits of proactive transfers for the patients' health and the costly use of resources and staff in the ICU. We achieve this by choosing the rewards in order to reflect the preference of the decision-maker who is balancing between improving patient outcomes by transferring them to the ICU proactively and the risk of such transfers resulting in a congested ICU.

Without loss of generality, we can consider that all rewards are non-negative.
 We consider a uniform reward across both actions for all states, i.e., $r_{i,0} = r_{i,1}=r_{i}, \forall \; i \in \X.$ Therefore, the reward only depends of the current state, while the action dictates the likelihood of transitioning to states with different rewards.
There is a reward of $r_{W}$ associated with being in the ward: $r_{i} = r_{W}, \forall \; i \in [n].$
If a patient is proactively transferred, s/he transitions to state $PT$ with probability $1$. In state PT, the patient either dies with probability $d_{A}$ or recovers.
Hence, the reward $r_{PT}$ is
$r_{PT} = d_{A} \cdot (r_{PT-D}) + (1-d_{A}) \cdot (r_{PT-RL}),$
where $r_{PT-RL}$ (respectively, $r_{PT-D}$) corresponds to the rewards for a patient recovering (respectively, dying) after having been proactively transferred. The scalar $d_{A}$ is the probability to die when having been proactively transferred and is calibrated to be the same as in the Markov model for the hospital in Section \ref{sec:hosp}.

Similarly, there is a reward of $r_{CR}$ associated with the state $CR$. A patient who crashes (and does not die immediately) will be transferred to the ICU before recovering or dying. We have that,
$r_{CR} = d_{C} \cdot (r_{CR-D}) + (1-d_{C}) \cdot (r_{CR-RL}),$
where $r_{CR-RL}$ (respectively, $r_{CR-D}$) corresponds to the rewards for a patient recovering (respectively, dying) after having been proactively transferred and $d_{C}$ is the probability that a patient dies after  crashing.

We would like to note that the rewards $r_{W}, r_{D},r_{RL},r_{CR}$ and $r_{PT}$ are a priori policy-dependent. For instance, for a policy that proactively transfers many patients, the reward $r_{PT}$ should take into account the (a priori) detrimental increase in ICU occupancy. Moreover, estimating the exact values of the rewards can be challenging (see \cite{mdp-cost-1} for a solution for a two-state hospital model, and see \cite{mdp-cost-2} for a Reinforcement Learning approach). Therefore, we focus on the relative \textit{ordering} of these rewards, in order to capture the trade-off between better health outcomes by proactive transfers and increased congestion in ICU. For the same outcomes (e.g. mortality, LOS), the decision-maker favors the policy which uses the fewest ICU resources. Conditional on the patient recovering and leaving the hospital, it is natural to assume that $r_{RL} \geq r_{PT-RL} \geq r_{CR-RL}$. This is because leaving the hospital after recovering in the ward uses less ICU resources than recovering after being proactively transferred, which in turn uses less ICU resources than if the patient crashes (see \cite{ICU-wenqi} for empirical evidence of this relationship). For similar reasons, $r_{D} \geq r_{PT-D} \geq r_{CR-D}$. We assume that $r_{CR-RL} \geq r_{D}$, since the decision-maker wants to achieve a low in-hospital mortality rate.
{\color{black} Note that we focus here on improving the hospital metrics, as measured by mortality, length-of-stay and ICU occupancy.}

%
%
%
%
%

Note that, as expected, this ordering of the rewards implies that
$r_{RL} \geq r_{PT} \geq r_{CR}.$

In the rest of the paper, for any state $i \in [n]$ we define the \textit{outside option} as
$$out(i)= r_{CR} \cdot T_{i,n+1}^{0} + r_{RL} \cdot T_{i,n+2}^{0} + r_{D} \cdot T_{i,n+3}^{0}.$$
The outside option $out(i)$ represents the expected one-step reward if a patient with severity score $i$ is not proactively transferred and leaves the ward in the next period, i.e., if this patient transitions to one of the states in $CR,RL,$ or $D$ in the next period.

We make a first mild assumption, which has the following interpretation: the total cumulated reward is higher when the patient recovers and leaves after one period in the ward than if s/he stays in the ward forever.
\begin{assumption}\label{assumption-0}
$ \dfrac{r_{W}}{1-\lambda} \leq  r_{W}+\lambda \cdot r_{RL}.$
\end{assumption}
This natural assumption implies the natural relationship that it is most desirable for a patient in the ward to recover and leave the hospital at the next period. This is stated formally in the  following lemma:
\begin{lemma}[Upper bound on the value vector]\label{lem-Vinfty} Let $\boldsymbol{V}^{\pi}$ be the value vector of a policy $\pi$. Under Assumption \ref{assumption-0}, we have $V_{i}^{\pi} \leq r_{W}+\lambda \cdot r_{RL}, \forall \; i \in [n].$
\end{lemma}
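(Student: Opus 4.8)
The plan is to exhibit the constant value $U:=r_W+\lambda\,r_{RL}$ as a \emph{super-solution} of the policy-evaluation equations for $\pi$ on the ward states $[n]$, and then to transfer this pointwise upper bound onto the genuine value vector $\boldsymbol V^\pi$ by monotonicity of the evaluation operator. The first thing I would do is put Assumption~\ref{assumption-0} into directly usable form: multiplying $\tfrac{r_W}{1-\lambda}\le r_W+\lambda r_{RL}$ by $(1-\lambda)>0$ and simplifying gives the equivalent inequality $r_{RL}(1-\lambda)\ge r_W$, and this is the only consequence of the assumption the argument needs.

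Next I would introduce the candidate vector $\bar{\boldsymbol V}$ defined by $\bar V_i=U$ for $i\in[n]$, together with the terminal values $\bar V_{CR}=r_{CR}$, $\bar V_{RL}=r_{RL}$, $\bar V_D=r_D$, $\bar V_{PT}=r_{PT}$ (the terminal states are absorbing and collect only their terminal reward, so $V^\pi_s=r_s$ there, making those coordinates fixed points of the evaluation operator $\mathcal T^\pi$). It then suffices to verify $\bar V_i\ge(\mathcal T^\pi\bar{\boldsymbol V})_i$ on each ward state, where, writing $q_i:=T^0_{i,n+1}+T^0_{i,n+2}+T^0_{i,n+3}$ for the exit probability (so that $\sum_{j\in[n]}T^0_{ij}=1-q_i$, since the rows of $\boldsymbol T^0$ sum to one),
\[
(\mathcal T^\pi\bar{\boldsymbol V})_i=r_W+\lambda\Big[\pi(i)\,r_{PT}+(1-\pi(i))\big((1-q_i)\,U+out(i)\big)\Big].
\]
Here I would invoke the reward ordering established before the lemma: since $r_{RL}\ge r_{PT}\ge r_{CR}$ and $r_{RL}\ge r_{PT-RL}\ge r_{CR-RL}\ge r_D$, the reward $r_{RL}$ dominates every terminal reward, so $r_{PT}\le r_{RL}$ and $out(i)\le q_i\,r_{RL}$. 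Substituting these bounds and writing $\beta:=(1-\pi(i))(1-q_i)\in[0,1]$, the target inequality collapses to $U(1-\lambda\beta)\ge r_W+\lambda r_{RL}(1-\beta)$; plugging in $U=r_W+\lambda r_{RL}$, the difference between the two sides equals $\lambda\beta\big(r_{RL}(1-\lambda)-r_W\big)$, which is nonnegative by the reformulated Assumption~\ref{assumption-0}. Hence $\bar{\boldsymbol V}\ge\mathcal T^\pi\bar{\boldsymbol V}$.

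To conclude, I would use that $\mathcal T^\pi$ is monotone and a $\lambda$-contraction with unique fixed point $\boldsymbol V^\pi$: iterating $\bar{\boldsymbol V}\ge\mathcal T^\pi\bar{\boldsymbol V}\ge(\mathcal T^\pi)^2\bar{\boldsymbol V}\ge\cdots$ and passing to the limit yields $\boldsymbol V^\pi\le\bar{\boldsymbol V}$, i.e.\ $V^\pi_i\le r_W+\lambda r_{RL}$ for all $i\in[n]$. I expect the main obstacle to be the middle step: the operator at a ward state still refers to $U$ through the $(1-q_i)U$ term, so the inequality is self-referential in $U$, and one must check that the reward-ordering relaxations are loose enough to apply yet tight enough that the residual collapses to exactly $r_{RL}(1-\lambda)-r_W$ — that is, that Assumption~\ref{assumption-0} is not merely sufficient but the precise break-even condition (equality is in fact attained when all exit mass is routed to $RL$ and the assumption holds tightly). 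A self-contained alternative that avoids operator language is to let $i^\star$ maximize $V^\pi_i$ over $[n]$, bound $V^\pi_{i^\star}$ by the same scalar recursion using $V^\pi_j\le V^\pi_{i^\star}$, and solve the resulting inequality for $\max_i V^\pi_i$ directly.
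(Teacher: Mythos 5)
Your proof is correct, and it takes a genuinely different route from the paper's. The paper argues path-wise: it fixes a trajectory of the Markov chain and splits into two cases — the patient stays in the ward forever (reward $r_W/(1-\lambda)$, bounded by Assumption \ref{assumption-0} directly) or exits at some finite time $t$, in which case the trajectory reward is at most $\tfrac{r_W(1-\lambda^t)}{1-\lambda}+\lambda^{t+1}r_{RL}$ because $r_{RL}$ is the largest terminal reward; a short computation using Assumption \ref{assumption-0} bounds this by $r_W+\lambda r_{RL}$, and taking expectations over trajectories finishes. You instead verify that the constant $U=r_W+\lambda r_{RL}$ (with the terminal rewards on the absorbing coordinates) is a super-solution of the policy-evaluation operator $\mathcal{T}^\pi$ and conclude by monotonicity and $\lambda$-contraction. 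Both arguments rest on exactly the same two ingredients — that $r_{RL}$ dominates every terminal reward (which indeed follows from the paper's stated orderings, since $r_{PT-D}\le r_D\le r_{CR-RL}\le r_{PT-RL}\le r_{RL}$ gives $r_{PT},r_{CR},r_D\le r_{RL}$) and Assumption \ref{assumption-0} in the equivalent form $r_{RL}(1-\lambda)\ge r_W$ — and your algebra checks out: $\pi(i)+(1-\pi(i))q_i=1-\beta$, and the residual $\lambda\beta\bigl(r_{RL}(1-\lambda)-r_W\bigr)\ge 0$ is exactly where the assumption enters, confirming your observation that it is the break-even condition. What each approach buys: your operator argument handles the ``stays in the ward forever'' event uniformly rather than as a separate case (in the path-wise proof it is effectively the $t\to\infty$ boundary case), works verbatim for randomized policies and for \emph{any} row-stochastic transition matrix (so the bound automatically holds uniformly over the uncertainty set $\U$, which is how the lemma is later used), and is modular; the paper's trajectory argument is more elementary — no fixed-point or monotone-operator machinery — and makes the source of the bound transparent on individual sample paths.
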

We present the proof in Appendix \ref{app-lem-Vinfty}.
We are ready to state the main result in this section,Theorem \ref{th:nom-thr}. Namely, under a mild assumption, the optimal nominal policy in our single-patient MDP is a threshold policy. In particular, we consider the following assumption.
\begin{assumption}\label{assumption-1}
We assume that
\begin{equation}\label{eq:assumption:sub-1}
out(i) \geq out(i+1), \forall \; i \in [n-1].
\end{equation}
and we assume that
 \begin{equation}\label{eq:assumption:sub-2}
     \dfrac{r_{W} + \lambda \cdot r_{PT}}{r_{W} + \lambda \cdot r_{RL}} \geq  \dfrac{\left( \sum_{j=1}^{n} T_{i+1,j}^{0} \right) }{\left( \sum_{j=1}^{n} T_{ij}^{0} \right)}, \forall \; i \in [n-1].
\end{equation}
\end{assumption}
Condition \eqref{eq:assumption:sub-1} implies that $out(i)$ is decreasing in the severity score $i$. This is meaningful since we expect that in practice, the severity score $i$ captures the health condition of a patient, from $i=1$ (as healthy as possible in the hospital) to $i=n$ (a very severe health condition). Therefore, it is reasonable to assume that the outside option of a patient is worse than the outside option of a patient with a better health condition (e.g. the healthier patient is more likely to leave the ward in a better state). {\color{black} Note that  $T^{0}_{i,CR}$ and $T^{0}_{i,DL}$ should be increasing in $i$ and $T^{0}_{i,RL}$ should be decreasing in $i$. Therefore, $out(i+1) \geq out(i)$ is implicitly assuming that the reward $r_{RL}$ (reward for recovering) is significantly larger than $r_{CR}$ (``reward'' for crashing) and $r_{DL}$ (``reward'' for dying), as we expect to hold in practice.} 

Condition \eqref{eq:assumption:sub-2} assumes that the chance of staying in the system  in  risk score $i$ is non-increasing in severity score $i$.  Additionally, the rate at which the chance of staying in the system decreases is higher than the ratio $\dfrac{r_{W} + \lambda \cdot r_{PT}}{r_{W} + \lambda \cdot r_{RL}}$, which captures the preference between the reward for proactively transferring a patient ($r_{W} + \lambda \cdot r_{PT}$) and an optimistic reward in the case that the patient is not transferred ($r_{W} + \lambda \cdot r_{RL}$). Similar to condition \eqref{eq:assumption:sub-1}, we expect condition \eqref{eq:assumption:sub-2} to hold in practice, since patients with more severe health states are more likely to crash or die (and therefore exit the ward) than patients with better health conditions. {\color{black} We are implicitly assuming that the increase in $ T^{0}_{i,CR} + T^{0}_{i,DL}$ outweighs the decrease in $ T_{i,RL}$. when $i$ increases. We can expect this in practice if the changes in these coefficients are of the same order of magnitude.} In particular, it holds for our dataset of nearly $300,000$ patients across $21$ hospitals.

We would like to note that both condition \eqref{eq:assumption:sub-1} and condition \eqref{eq:assumption:sub-2} are homogeneous: they hold if we scale all rewards by the same (positive) scalar. Moreover, condition \eqref{eq:assumption:sub-2} is invariant by translation, i.e., it holds if we add the same scalar to all rewards. However, this is not the case for condition \eqref{eq:assumption:sub-1} (see Lemma \ref{lem:assumption-simple-tr-sc} in Appendix \ref{app:Lemmas-tr-sc}).

\subsection{Optimality of threshold policies.} We are now in a position to characterize structural properties of the optimal transfer policy for our single-patient MDP. Using standard arguments, without loss of generality we can restrict our attention to stationary deterministic policies. We show that there exists an optimal policy that is a threshold policy in the single-patient MDP. Formally, we have the following theorem.
\begin{theorem} \label{th:nom-thr} Under Assumptions \ref{assumption-0} and \ref{assumption-1}, there exists a threshold policy that is optimal in the single-patient MDP.
\end{theorem}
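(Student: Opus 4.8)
The plan is to work with the optimal value vector $\boldsymbol{V}^{*}$ and to show that the \emph{advantage} of transferring is monotone in the severity score, so that the set of states where transferring is optimal is an upper interval. Since $\lambda \in (0,1)$, the Bellman operator is a contraction and there exists a deterministic stationary optimal policy whose value $\boldsymbol{V}^{*}$ satisfies, for each $i \in [n]$, the equation $V_{i}^{*} = \max\{ Q(i,1), Q(i,0)\}$, where $Q(i,1) = r_{W} + \lambda r_{PT}$ (transferring sends the patient to $PT$ with probability one) and $Q(i,0) = r_{W} + \lambda\big( \sum_{j=1}^{n} T_{ij}^{0} V_{j}^{*} + out(i)\big)$, recognizing the outside option $out(i)$ as the contribution of the exit states $CR, RL, D$. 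Transferring is optimal at $i$ precisely when $Q(i,1) \geq Q(i,0)$, i.e.\ when $r_{PT} \geq g(i)$, where I define $g(i) := out(i) + \sum_{j=1}^{n} T_{ij}^{0} V_{j}^{*}$.

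The first key step is a two-sided ``sandwich'' bound on the optimal values of the severity states: $r_{W} + \lambda r_{PT} \leq V_{j}^{*} \leq r_{W} + \lambda r_{RL}$ for every $j \in [n]$. The upper bound is exactly Lemma \ref{lem-Vinfty} applied to the optimal policy. The lower bound is immediate from optimality, since $V_{j}^{*} \geq Q(j,1) = r_{W} + \lambda r_{PT}$ (the decision-maker can always choose to transfer).

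The heart of the argument is then to show that $g$ is non-increasing on $[n]$, for then $r_{PT} \geq g(i)$ implies $r_{PT} \geq g(i) \geq g(i+1)$, i.e.\ $\pi(i)=1 \Rightarrow \pi(i+1)=1$, which is the threshold property. By condition \eqref{eq:assumption:sub-1} the term $out(i)$ is already non-increasing, so it suffices to prove $\sum_{j} T_{i+1,j}^{0} V_{j}^{*} \leq \sum_{j} T_{ij}^{0} V_{j}^{*}$. Here the weights are sub-stochastic and their row sums $\sum_{j} T_{ij}^{0}$ differ across $i$, so I cannot appeal to a standard monotone-value argument; instead I push the two sandwich bounds through. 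Bounding each $V_{j}^{*}$ above by $r_{W}+\lambda r_{RL}$ on the left and below by $r_{W}+\lambda r_{PT}$ on the right (both legitimate since $\boldsymbol{T}^{0} \geq 0$) reduces the desired inequality to
\[
(r_{W} + \lambda r_{RL}) \sum_{j=1}^{n} T_{i+1,j}^{0} \leq (r_{W} + \lambda r_{PT}) \sum_{j=1}^{n} T_{ij}^{0},
\]
which is exactly condition \eqref{eq:assumption:sub-2} after cross-multiplying. This is the step I expect to be the crux: recognizing that \eqref{eq:assumption:sub-2} is precisely the inequality needed to absorb the mismatch between the differing row sums, and that it is the uniform sandwich (rather than monotonicity of $\boldsymbol{V}^{*}$ itself, which need not hold) that makes the weighted comparison go through.

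Finally I would assemble the pieces. Since $g$ is non-increasing and transferring is optimal at $i$ exactly when $r_{PT} \geq g(i)$, the set of states where transfer is (weakly) optimal is an upper interval; setting $\tau := \min\{ i \in [n] : g(i) \leq r_{PT}\}$ (and $\tau = n+1$ if this set is empty) and breaking ties in favor of transfer yields the deterministic threshold policy $\pi^{[\tau]}$, which is optimal. This completes the argument.
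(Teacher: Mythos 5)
Your proof is correct, and it takes a genuinely different route from the paper's. The paper (Appendix \ref{app:pf-main-th}) argues algorithmically: it initializes value iteration at a particular vector $\boldsymbol{V}^{0}$, shows by induction that every policy $\pi^{t}$ generated along the way is threshold --- splitting the sum $\sum_{j} T^{0}_{ij} V^{t}_{j}$ at the current threshold $\tau$, using the exact identity $V^{t}_{j} = r_{W} + \lambda r_{PT}$ for $j \geq \tau$ together with the upper bound $V^{t}_{j} \leq r_{W} + \lambda r_{RL}$ from Lemma \ref{lem-Vinfty} --- and then invokes convergence of value iteration to an optimal policy. You instead work directly at the fixed point: your uniform sandwich $r_{W} + \lambda r_{PT} \leq V^{*}_{j} \leq r_{W} + \lambda r_{RL}$ (lower bound from the always-available transfer action, upper bound again from Lemma \ref{lem-Vinfty} under Assumption \ref{assumption-0}) lets you compare rows $i$ and $i+1$ with no threshold split at all, and you correctly identify that condition \eqref{eq:assumption:sub-2}, after cross-multiplying, is exactly what absorbs the mismatch between the sub-stochastic row sums, with \eqref{eq:assumption:sub-1} handling the outside options. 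Your decomposition of the Bellman equation (terminal-state values $r_{CR}, r_{RL}, r_{D}$ folding into $out(i)$) matches the paper's operator $F$, the greedy-with-ties-toward-transfer policy is optimal by standard discounted-MDP theory, and the monotonicity of $g$ makes the transfer region an upper interval, so the argument is complete. Your route is shorter and avoids both the induction over iterates and the paper's bookkeeping of the iterates' threshold structure; note also that your sandwich reduces the required inequality precisely to the weaker combined condition \eqref{assumption-2}, so your proof delivers verbatim the strengthening mentioned in the paper's remark. What the paper's argument buys in exchange is the additional algorithmic fact that all the value-iteration iterates (suitably initialized) are themselves threshold policies, not merely the limiting optimal policy.
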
 The proof relies on proving that all the policies generated by a Value Iteration algorithm (Section 6.3, \cite{Puterman}) are threshold policies, provided that we initialize the algorithm with a threshold policy. Since Value Iteration is known to converge to an optimal policy of the MDP, we can conclude that there exists an optimal policy that is threshold. We present the detailed proof in Appendix \ref{app:pf-main-th}.

We give  some intuition on why Assumptions \ref{assumption-0} and \ref{assumption-1} are sufficient to prove the existence of an optimal policy that is threshold. In order to show that a policy $\pi$ is threshold, it is sufficient to show that for any state $i \in [n-1]$,
$\pi(i)=1 \Rightarrow \pi(i+1)=1.$
We note that the reward associated with a proactive transfer, $\left( r_{W} + \lambda \cdot r_{PT}\right)$, is constant across the severity scores. However, for a patient in severity score $i \in [n-1]$,   the optimal policy is comparing the expected reward associated with a proactive transfer and the expected reward without proactive transfer, which decomposes into $out(i)$ (when the patient transfers to $CR, D$ or $RL$ in the next period) and the expected reward if the patient remains in the ward in the next period. Conditions \eqref{eq:assumption:sub-1} and \eqref{eq:assumption:sub-2} ensure that the reward for not proactively transferring a patient is non-increasing in the severity scores. Indeed, the outside option, i.e., the expected reward when exiting the ward, is non-increasing (condition \eqref{eq:assumption:sub-1}), while the probability to exit the ward is increasing (condition \eqref{eq:assumption:sub-2}).
If the decision-maker chooses to proactively transfer a patient with severity score $i$, this means that the reward for proactive transfer, $r_{W} + \lambda \cdot r_{PT}$, is larger than the reward for not proactively transferring the patient. This last quantity is non-increasing, and therefore, for any severity score $j > i$, the optimal policy should also choose to proactively transfer the patient.

Several remarks are in order.
\begin{remark}
We would like to note that Theorem \ref{th:nom-thr} holds if we replace Assumption \ref{assumption-1} by the following weaker, but less interpretable condition:
\begin{equation}\label{assumption-2}
     \left( \sum_{j=1}^{n} T_{ij}^{0} \right) \cdot ( r_{W}+ \lambda \cdot r_{PT}) + out(i) \geq \left( \sum_{j=1}^{n} T_{i+1,j}^{0} \right) \cdot (r_{W} + \lambda \cdot r_{RL}) +out(i+1), \forall \; i \in [n-1].
\end{equation}
Note that condition \eqref{assumption-2} is implied by Assumption \ref{assumption-1}, simply by summing up \eqref{eq:assumption:sub-1} and \eqref{eq:assumption:sub-2}. Moreover, condition \eqref{assumption-2} is homogeneous and holds under translation of the rewards (see Lemma \ref{lem:assumption-1-tr-sc} in Appendix \ref{app:Lemmas-tr-sc}).
 However, even though condition \eqref{assumption-2} is more general than Assumption \ref{assumption-1}, it is much less interpretable.
 \end{remark}

 \begin{remark} {\color{black}In Appendix \ref{app:counterex}, we show that if Assumption \ref{assumption-1} is not satisfied, there may not exist an optimal policy that is threshold in general.}
  We provide an instance of a single-patient MDP, whose parameters do not satisfy Assumption \ref{assumption-1} where there is no optimal policy that is threshold. The instance has two severity scores $\{1,2\}$, and $out(2) > out(1)$, which violates condition \eqref{eq:assumption:sub-1}. By selecting parameters such that the outside option $out(2)$ is high,  the optimal policy does not proactively transfer the patient in state $2$. However, by setting a sufficiently small  discount factor $\lambda$, the reward in state $1$ becomes unrelated to the reward in state $2$ and the optimal policy will proactively transfer a patient in severity score $1$. We would like to note that condition \eqref{assumption-2} does not hold either for this instance.
 \end{remark}

 \begin{remark}
By assumption, the rewards in the ward and for proactively transferring a patient do not depend of the current severity score. We can relax this assumption and consider a model where the rewards in the ward and the rewards for proactive transfers are heterogeneous across different severity scores and still establish the optimality of a threshold policy under assumptions which are generalizations of Assumptions \ref{assumption-0} and  \ref{assumption-1}. {\color{black} It is straightforward to extend our proof for uniform rewards to the case of non-uniform rewards.}
 \end{remark}

 {\color{black}
\begin{remark}[Incorporating the ICU capacity.]
Our single-patient MDP does not explicitly account for the ICU occupancy or the ICU capacity.
In order to account for capacity constraints, our single-patient MDP penalizes aggressive proactive transfers by assuming a lower reward $r_{PT}$ compared to $r_{RL}$. Our single-patient MDP attempts to find a good balance between a model that is both (i) complex and expressive enough to capture important trade-offs (e.g. transfer or wait, capacity constraint by penalizing the reward $r_{PT}$ for proactive transfer), (ii) sufficiently tractable to allow studying structural properties both in the nominal and in the robust case and (iii) simple enough so that we can evaluate its parameters with satisfying accuracy (see Section \ref{sec:exp}).

In order to take the next step toward incorporating ICU occupancy, we introduce an $N$-patient MDP model, which is a special case of \textit{weakly interacting stochastic processes} \citep{adelman2008relaxations}. There are $N$ single-patient MDPs that run in parallel, with a binding constraint that no more than $m$ patients can be transferred to the ICU at the same time. While the $N$-patient MDP is closer to real hospital operations than the single-patient MDP, this comes at the price of an exponential number of states/actions. In Appendix \ref{app:N-patient-MDP}, we show that our single-patient MDP is a \textit{Lagrangian relaxation} (in the sense of \cite{adelman2008relaxations}) of the $N$-patient MDP, where the reward for proactive transfer $r_{PT}$ is adequately penalized.
Therefore, we study the impact of the variation of $r_{PT}$ onto the threshold of an optimal policy. We relate our work to the \textit{Whittle index} theory \citep{whittle1988restless}  and show that our single-patient MDP is \textit{indexable} (see Appendix \ref{app:whittle}). In more practical terms, this means that the threshold of an optimal policy is \textit{monotonically} increasing from ``\textit{transfer no patient}'' when $r_{PT}=0$ to ``\textit{transfer all patients}'' when $r_{PT}=r_{RL}$. Here, deriving closed-form solutions of the Whittle index appears challenging due to the complexity of our transition model (a patient in severity state $i$ can transition to any other severity state, unlike other models in literature).
Finally, coming back to our single-patient MDP as a relaxation of the $N$-patient MDP, we show that as the number, $m$,  of patients that can be proactively transferred at the same time increases (in the $N$-patient MDP), the threshold of the optimal policy in the single-patient MDP decreases, i.e. an optimal policy transfers more patients.  Thus, one can expect more aggressive proactive transfers when there is ample ICU bed availability versus when capacity is scarce.
\end{remark}
}

\section{Robustness analysis of the single-patient MDP.}\label{sec:rob}
Thus far, we have assumed the health state of a patient evolves according to a Markov chain with known transition matrix $\bm{T}^{0}$. However, the parameters of the transition matrix are estimated from historical data and are subject to statistical estimation errors. At best, we have a noisy estimate of $\bm{T}^{0}$. Therefore, an important practical consideration is to develop an understanding of the impact of small deviations in the hospital parameters. We start by focusing on the robustness of the optimal policy for the single-patient MDP.

\subsection{Robust MDP and model of uncertainty set.}

In a classical MDP, it is assumed that the transition kernel $\bm{T}^{0}$ is known so that one finds a policy that maximizes the expected nominal reward. That is, one solves the optimization problem:
$ \max_{\pi \in \Pi} R(\pi,\bm{T}^{0}).$
In Theorem \ref{th:nom-thr}, we show that the optimal nominal policy for the matrix $\bm{T}^{0}$ is a threshold policy, under certain fairly reasonable assumptions.
In order to tackle model misspecification, we consider a robust MDP framework, where the true transition matrix is unknown. We model the uncertainty as adversarial deviations from the nominal matrix in some \textit{uncertainty set} $\U$, that can be interpreted as a safety region. The goal is to compute a transfer policy that maximizes the worst-case reward over the set of all possible transition matrices $\U$, i.e., our goal is to solve:
$\max_{\pi \in \Pi} \min_{\boldsymbol{T} \in \U} R(\pi,\bm{T}).$

A solution $\pi^{\sf rob}$ to this optimization problem will be called an \textit{optimal robust policy}.

The choice of uncertainty set is important and dictates the conservatism and usefulness of the model.
In this paper we consider a \textit{factor matrix uncertainty set} \citep{Goh,GGC}. In this model we consider that the transition probabilities are convex combination of some factors, which can themselves be uncertain. Such a model allows us to capture correlations across transitions probabilities, unlike rectangular uncertainty sets \citep{Iyengar,Kuhn} that allow unrelated adversarial deviations. In particular, we assume that $\U$ is of the following form:
\begin{equation}\label{eq:fmus}
\U = \left\lbrace  \bm{T} \in \R_{+}^{n \times (n+3)} \; \bigg| \; T_{ij} = \sum_{\ell =1}^{r} u^{\ell}_{i}w_{\ell,j} \forall \; (i,j) \in [n] \times [n+3] ,  \; \boldsymbol{w}_{\ell} \in \W{^{\ell}}, \forall \; \ell \in [r] \right\rbrace
\end{equation}
where $ \; \boldsymbol{u}_{1},..., \boldsymbol{u}_{n}$ are fixed vectors in $\R_{+}^{r}$ and $\W^{\ell}, \ell = 1,...,r$ are  convex, compact subsets of $\R_{+}^{(n+3) \times r}$ such that
$\sum_{\ell=1}^{r} u^{\ell}_{i}  = 1, \; \forall \; i \in [n],
\sum_{j=1}^{n+3} w_{\ell,j}  = 1, \; \forall \; \ell \in [r],\forall \; \bm{w}_{\ell} \in \W^{\ell}.$

To understand better the implications of this uncertainty set, consider $\bm{T}$, a transition matrix in our factor matrix uncertainty set $\U$. Each of the rows of the matrix $\bm{T}$ is a convex combination of the \textit{factors} $\bm{w}_{1}^{\top},...,\bm{w}_{r}^{\top} \in \R^{1 \times (n+3)}$. Therefore, for $\boldsymbol{U} = (u_{i}^{\ell})_{(i,\ell)} \in \R^{n \times r}$ and $\boldsymbol{W} = (\boldsymbol{w}_{1},...,\boldsymbol{w}_{r}) \in \R^{(n+3) \times r},$ we have $\boldsymbol{T} = \boldsymbol{UW}^{ \top}$.
This model of uncertainty set is very general, and covers the case of $(s,a)$-rectangular uncertainty sets \citep{Iyengar}, i.e., the case where all the rows of the transition matrix $\bm{T}$ are chosen independently, when $r=n$ (where $n$ is the number of rows of $\bm{T}$). This model is able to incorporate relationships in the transitions from various severity scores which may arise since the health dynamics of a patient is likely to be influenced by some underlying common factors, such as demographics, past medical histories, current blood sugar level, etc. As different rows of a matrix $\bm{T} \in \U$ are convex combinations of the same $r$ factor vectors, we can use our factor matrix uncertainty set to model correlations between the probability distributions related to different states when $r$ is smaller than $n$.

We assume that the nominal matrix $\bm{T}^{0}$ satisfies Assumption \ref{assumption-1}. Our uncertainty set $\U$ models small parameters variations from $\bm{T}^{0}$. Therefore, it is reasonable to assume that all the matrices in $\U$ will satisfy Assumption \ref{assumption-1}. In particular, we assume:
\begin{assumption}\label{assumption-T-all-U}
Every matrix $\bm{T}$ in $\U$ satisfies Assumption \ref{assumption-1}.
\end{assumption}
{\color{black}
\begin{remark}[Verifying Assumptions \ref{assumption-0}-\ref{assumption-T-all-U}]
We now describe  how to numerically verify that Assumptions \ref{assumption-0}-\ref{assumption-T-all-U} hold.
Assumption \ref{assumption-0} is simply an inequality on the set of rewards. Similarly, verifying Assumption \ref{assumption-1} requires the computation of the cumulative sums $\left( \sum_{j=1}^{n} T_{ij} \right)_{i}$. In order to verify Assumption \ref{assumption-T-all-U} for an uncertainty set $\U$, we can check the nonnegativity of
\[ \min \; \left\lbrace  \dfrac{r_{w}  + \lambda r_{PT}}{r_{W} + \lambda r_{RL}} \sum_{j=1}^{n} T_{ij} - \sum_{j=1}^{n} T_{i+1,j}  \; \bigg| \; \bm{T} \in \U \right\rbrace,\]
for each severity condition $i \in [n]$.
As the objective function is linear in the matrix $\bm{T}$, this optimization program can be solved efficiently when the uncertainty set $\U$ is defined by linear, or convex quadratic or conic inequalities. Note that the wider and the more unconstrained the uncertainty set, the more difficult it is to satisfy Assumption \ref{assumption-T-all-U}.
\end{remark}}
For factor model uncertainty sets, the authors in \cite{Goh} show that one can compute the worst-case reward of a given policy. \cite{GGC} show that an optimal robust policy can be chosen to be deterministic and provide an efficient algorithm to compute an optimal robust policy. Moreover, they show that for $\U$ as in \eqref{eq:fmus},  the  \textit{robust maximum principle} holds. Since our analysis relies on the maximum principle, we state it formally for completeness.

Let  $\boldsymbol{v}^{\pi,\boldsymbol{T}}$ be the value vector of the decision-maker when s/he chooses policy $\pi$ and the adversary chooses factor matrix $\bm{T}=\bm{UW}^{\top}$, defined by the Bellman Equation:
$$v^{\pi,\boldsymbol{T}}_{i} = r_{\pi,i} + \lambda \cdot (1-\pi(i)) \cdot \sum_{\ell=1}^{r} u^{\ell}_{i} \boldsymbol{w}_{\ell}^{\top} \boldsymbol{v^{\pi,\boldsymbol{W}}} + \lambda \cdot \pi(i) \cdot r_{PT}, \forall \; i \in [n].$$
For any state $i \in [n]$, the scalar $v_{i}^{\pi,\bm{T}}$ represents the infinite horizon discounted expected reward, starting from state $i$.
\begin{theorem}[Robust Maximum Principle \citep{GGC}]\label{th:robust-max-principle}
Let $\U$ be a factor matrix uncertainty set as in \eqref{eq:fmus}.
\begin{enumerate}
\item Let $\boldsymbol{\hat{T}} \in \U$ and $\hat{\pi} \in \arg \max_{\pi \in \Pi} \; R(\pi,\boldsymbol{\hat{T}}).$
Then
\begin{equation}\label{eq:max-prin-1}
v^{\pi,\boldsymbol{\hat{T}}}_{i} \leq v^{\hat{\pi},\boldsymbol{\hat{T}}}_{i}, \forall \; \pi \in \Pi, \forall i \in [n].
\end{equation}
\item Let $\hat{\pi}$ be a policy and $\boldsymbol{\hat{T}} \in \arg \min_{\boldsymbol{T} \in \U} \; R(\hat{\pi},\boldsymbol{T}).$
Then
\begin{equation}\label{eq:max-prin-2}v^{\hat{\pi},\boldsymbol{\hat{T}}}_{i} \leq v^{\hat{\pi},\boldsymbol{T}}_{i}, \forall \; \boldsymbol{T} \in \U, \forall \; i \in [n].
\end{equation}
\item Let $(\pi^{*},\boldsymbol{T}^{*}) \in \arg \; \max_{\pi \in \Pi} \; \min_{\boldsymbol{T} \in \U} \; R(\pi,\boldsymbol{T}).$
For all policy $\hat{\pi}$, for all transition matrix $\boldsymbol{\hat{T}} \in \arg \min_{\boldsymbol{T} \in \U} \; R(\hat{\pi},\boldsymbol{T})$, we have  \begin{equation}\label{eq:max-prin-3}v^{\hat{\pi},\boldsymbol{\hat{T}}}_{i} \leq v^{\pi^{*},\boldsymbol{T}^{*}}_{i}, \forall i \in [n].
\end{equation}
\end{enumerate}
\end{theorem}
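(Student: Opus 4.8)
The plan is to establish the three inequalities in turn, treating parts 1 and 2 as the two one-sided maximum principles and then deriving part 3 by chaining them through the saddle point $(\pi^{*},\boldsymbol{T}^{*})$.

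For part 1 I would fix $\boldsymbol{\hat{T}} \in \U$, so that we are in a \emph{standard} MDP, and invoke the classical componentwise optimality of the value function. The Bellman optimality operator associated with $\boldsymbol{\hat{T}}$ is a $\lambda$-contraction whose unique fixed point $\boldsymbol{v}^{*}$ satisfies $v_{i}^{*} = \max_{\pi \in \Pi} v_{i}^{\pi,\boldsymbol{\hat{T}}}$ \emph{simultaneously} for every state (see, e.g., \cite{Puterman}). It then remains to argue that any maximizer $\hat{\pi}$ of the scalar objective $R(\pi,\boldsymbol{\hat{T}}) = \sum_{i} p_{0,i} v_{i}^{\pi,\boldsymbol{\hat{T}}}$ actually attains $\boldsymbol{v}^{*}$ at every state. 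Since $v_{i}^{\hat{\pi},\boldsymbol{\hat{T}}} \leq v_{i}^{*}$ for all $i$ while the $p_{0,i}$-weighted sums coincide, equality must hold coordinate-wise at every state carrying positive initial mass; assuming (as the model posits) that the initial distribution puts positive mass on every severity score, this gives $\boldsymbol{v}^{\hat{\pi},\boldsymbol{\hat{T}}} = \boldsymbol{v}^{*}$, which is exactly \eqref{eq:max-prin-1}.

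For part 2 I would fix the policy $\hat{\pi}$ and let the adversary minimize over $\boldsymbol{T} = \boldsymbol{U}\boldsymbol{W}^{\top} \in \U$. The crucial point, and the main obstacle, is that the factor matrix set is \emph{not} rectangular: all rows share the same factors $\boldsymbol{w}_{1},\dots,\boldsymbol{w}_{r}$, so one cannot pick a worst case row by row. What rescues the argument is that the coupling is \emph{across factors, not across states}. For a fixed value vector $\boldsymbol{v}$, the one-step look-ahead at state $i$ is $\sum_{\ell} u_{i}^{\ell}\,\boldsymbol{w}_{\ell}^{\top}\boldsymbol{v}$, a nonnegative combination of the scalars $\boldsymbol{w}_{\ell}^{\top}\boldsymbol{v}$. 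Hence choosing $\boldsymbol{w}_{\ell}^{\dagger} \in \arg\min_{\boldsymbol{w}_{\ell} \in \W^{\ell}} \boldsymbol{w}_{\ell}^{\top}\boldsymbol{v}$ \emph{independently for each factor} $\ell$ minimizes the look-ahead at \emph{every} state at once, precisely because the weights $u_{i}^{\ell} \geq 0$ are common to all states. Consequently the robust minimizing Bellman operator $\mathcal{T}^{\hat{\pi}}_{\min}$, obtained by plugging these factor-minimizers back in, is a well-defined, monotone $\lambda$-contraction whose fixed point $\boldsymbol{v}^{\dagger}$ is attained by a \emph{single} matrix $\boldsymbol{T}^{\dagger} = \boldsymbol{U}(\boldsymbol{W}^{\dagger})^{\top} \in \U$.

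I would then finish part 2 with a monotonicity sandwich: for any $\boldsymbol{T} = \boldsymbol{U}\boldsymbol{W}^{\top} \in \U$ the per-matrix operator $\mathcal{T}^{\hat{\pi}}_{\boldsymbol{W}}$ is monotone and satisfies $\mathcal{T}^{\hat{\pi}}_{\boldsymbol{W}}\boldsymbol{v}^{\dagger} \geq \mathcal{T}^{\hat{\pi}}_{\min}\boldsymbol{v}^{\dagger} = \boldsymbol{v}^{\dagger}$ coordinate-wise, so iterating and passing to the limit yields $\boldsymbol{v}^{\hat{\pi},\boldsymbol{T}} \geq \boldsymbol{v}^{\dagger}$ for all $\boldsymbol{T} \in \U$ and all states. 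Applying this to a scalar worst case $\boldsymbol{\hat{T}} \in \arg\min_{\boldsymbol{T}} R(\hat{\pi},\boldsymbol{T})$ and using full support of $p_{0}$ exactly as in part 1 forces $\boldsymbol{v}^{\hat{\pi},\boldsymbol{\hat{T}}} = \boldsymbol{v}^{\dagger}$, giving \eqref{eq:max-prin-2}. Part 3 is then immediate: at a saddle point $\pi^{*}$ is a best response to $\boldsymbol{T}^{*}$ and $\boldsymbol{T}^{*}$ is a worst case for $\pi^{*}$, so for any $\hat{\pi}$ with worst case $\boldsymbol{\hat{T}}$, part 2 (with $\boldsymbol{T} = \boldsymbol{T}^{*}$) gives $v_{i}^{\hat{\pi},\boldsymbol{\hat{T}}} \leq v_{i}^{\hat{\pi},\boldsymbol{T}^{*}}$ and part 1 (applied to the best response $\pi^{*}$ of $\boldsymbol{T}^{*}$) gives $v_{i}^{\hat{\pi},\boldsymbol{T}^{*}} \leq v_{i}^{\pi^{*},\boldsymbol{T}^{*}}$; chaining these yields \eqref{eq:max-prin-3}. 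The hard part will be the factor-decoupling step in part 2, since it is what replaces the row-wise independence that rectangular uncertainty sets would supply for free.
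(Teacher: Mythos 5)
The paper never proves this theorem itself; it is imported from \cite{GGC} and stated ``for completeness,'' so your proposal has to be judged as a self-contained reconstruction rather than against an internal proof. Parts 1 and 2 are essentially correct, and your factor-decoupling step is exactly the mechanism that makes factor-matrix sets tractable in \cite{GGC}: because the $\W^{\ell}$ enter as a product set and the one-step look-ahead at state $i$ is $\sum_{\ell} u_{i}^{\ell}\,\boldsymbol{w}_{\ell}^{\top}\boldsymbol{v}$ with $u_{i}^{\ell}\geq 0$, the minimizers $\boldsymbol{w}_{\ell}^{\dagger}$ of $\boldsymbol{w}_{\ell}^{\top}\boldsymbol{v}$ are state-independent (indeed policy-independent), so a single matrix in $\U$ is simultaneously worst at every state. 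One caveat you flag but should state as a hypothesis: the theorem as written carries no assumption on $\boldsymbol{p}_{0}$, and your passage from the scalar $\arg\max$ (resp.\ $\arg\min$) to componentwise optimality genuinely needs $p_{0,i}>0$ for all $i$ --- otherwise a maximizer of $R(\cdot,\boldsymbol{\hat{T}})$ may behave suboptimally at states carrying no initial mass and \eqref{eq:max-prin-1} can fail at those states. Since the paper's calibrated $\boldsymbol{p}_{0}$ has full support, this is a mismatch of conventions with the statement rather than a mathematical error, but it is load-bearing in your argument.

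Part 3, however, contains a genuine gap: you treat $(\pi^{*},\boldsymbol{T}^{*})\in\arg\max_{\pi}\min_{\boldsymbol{T}}R(\pi,\boldsymbol{T})$ as a saddle point and invoke ``$\pi^{*}$ is a best response to $\boldsymbol{T}^{*}$'' in order to apply part 1. That is not definitional: the hypothesis only says $\boldsymbol{T}^{*}$ is a worst case for $\pi^{*}$ and $\pi^{*}$ maximizes the worst-case scalar reward. The best-response property is precisely the nontrivial equivalence of Theorem 4.2 in \cite{GGC} --- quoted separately by the paper in the proof of Theorem \ref{th:thr-all-T} --- and it can fail for general non-rectangular uncertainty sets, where $\max\min$ is strictly below $\min\max$; it holds here only because of the factor structure, so it cannot be assumed. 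The good news is that your part 2 machinery closes the gap in a few lines: define the robust optimality operator $(\mathcal{L}\boldsymbol{v})_{i}=\max\{r_{W}+\lambda r_{PT},\; r_{W}+\lambda\sum_{\ell}u_{i}^{\ell}\min_{\boldsymbol{w}_{\ell}\in\W^{\ell}}\boldsymbol{w}_{\ell}^{\top}\boldsymbol{v}\}$, a monotone $\lambda$-contraction with fixed point $\boldsymbol{v}^{**}$. Since $\mathcal{T}^{\pi}_{\min}\leq\mathcal{L}$ pointwise, every worst-case vector satisfies $\boldsymbol{v}^{\dagger}(\pi)\leq\boldsymbol{v}^{**}$; the greedy policy paired with the minimizing factors at $\boldsymbol{v}^{**}$ attains $\boldsymbol{v}^{**}$ (the inner minimization does not involve the policy, so the same factors serve both the evaluation and optimality operators), whence $\max_{\pi}\boldsymbol{p}_{0}^{\top}\boldsymbol{v}^{\dagger}(\pi)=\boldsymbol{p}_{0}^{\top}\boldsymbol{v}^{**}$. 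Scalar optimality of $\pi^{*}$, the componentwise bound $\boldsymbol{v}^{\dagger}(\pi^{*})\leq\boldsymbol{v}^{**}$, and full support of $\boldsymbol{p}_{0}$ then force $\boldsymbol{v}^{\dagger}(\pi^{*})=\boldsymbol{v}^{**}$; by your part 2, $\boldsymbol{v}^{\pi^{*},\boldsymbol{T}^{*}}=\boldsymbol{v}^{\dagger}(\pi^{*})$, and for any $\hat{\pi}$ with worst case $\boldsymbol{\hat{T}}$ one gets $\boldsymbol{v}^{\hat{\pi},\boldsymbol{\hat{T}}}=\boldsymbol{v}^{\dagger}(\hat{\pi})\leq\boldsymbol{v}^{**}=\boldsymbol{v}^{\pi^{*},\boldsymbol{T}^{*}}$, which is \eqref{eq:max-prin-3} without ever asserting the saddle-point property.
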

Inequality \eqref{eq:max-prin-1} implies that in a classical MDP setting, the value vector of the optimal nominal policy is component-wise higher than the value vector of any other policy. Therefore, for any state, the  nominal expected reward obtained when the decision-maker follows the optimal nominal policy is higher than the nominal expected reward obtained when the decision-maker follows any other policy. Following Inequality \eqref{eq:max-prin-2}, when a policy is fixed but the transition matrix varies in the uncertainty set $\U$, the worst-case value vector of the policy is component-wise lower than the value vector of the policy for any other transition matrix. Finally, when we consider an optimal robust policy, Inequality \eqref{eq:max-prin-3} implies that the worst-case value vector of the optimal robust policy is component-wise higher than the worst-case value vector of any other policy. Therefore, the optimal robust policy is maximizing the worst-case expected reward \textit{starting from any state}.

\subsection{Theoretical guarantees.}
%

We show that under Assumption \ref{assumption-0} and Assumption \ref{assumption-T-all-U}, the optimal robust policy is a threshold policy.
Moreover, we show that the threshold of the optimal robust policy is smaller than the threshold of the optimal nominal policy. Therefore, the optimal robust policy is more aggressive in proactively transferring the patients.
In particular, we start with the following theorem.
\begin{theorem}\label{th:thr-all-T} Under Assumptions \ref{assumption-0} and \ref{assumption-T-all-U}, there exists an optimal robust policy that is threshold.
\end{theorem}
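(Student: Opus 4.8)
The plan is to reduce the robust problem to the nominal one of Theorem \ref{th:nom-thr} by pinning down a \emph{single} worst-case transition matrix. Invoking \cite{GGC} and the robust maximum principle (Theorem \ref{th:robust-max-principle}), let $\boldsymbol{v}^{*}$ be the optimal robust value vector, so that by part 3 of Theorem \ref{th:robust-max-principle} one has $v^{*}_{i}=\max_{\pi}\min_{\bm{T}\in\U}v^{\pi,\bm{T}}_{i}$ and $\sum_{i}p_{0,i}v^{*}_{i}$ equals the optimal robust value; extend $\boldsymbol{v}^{*}$ to $\bar{\boldsymbol{v}}^{*}\in\R^{n+3}$ by appending the terminal rewards $(r_{CR},r_{RL},r_{D})$, matching the vector $\boldsymbol{v}^{\pi,\bm{W}}$ in the Bellman equation. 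I would first exploit the factor structure \eqref{eq:fmus}: for this \emph{fixed} vector $\bar{\boldsymbol{v}}^{*}$, the non-transfer continuation at state $i$ is $\sum_{j}T_{ij}\bar v^{*}_{j}=\sum_{\ell=1}^{r}u_{i}^{\ell}\,(\bm{w}_{\ell}^{\top}\bar{\boldsymbol{v}}^{*})$, which, since $u_{i}^{\ell}\ge 0$ and the sets $\W^{\ell}$ vary independently, is minimized over all $\bm{T}\in\U$ by the single matrix $\bm{T}^{*}=\bm{U}\bm{W}^{*\top}$ built from $\bm{w}^{*}_{\ell}\in\arg\min_{\bm{w}_{\ell}\in\W^{\ell}}\bm{w}_{\ell}^{\top}\bar{\boldsymbol{v}}^{*}$ --- and crucially this minimizer is the same for \emph{every} state $i$.

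Because one matrix $\bm{T}^{*}$ simultaneously minimizes every continuation, $\boldsymbol{v}^{*}$ solves the ordinary Bellman optimality equation of the nominal MDP with transition matrix $\bm{T}^{*}$, namely $v^{*}_{i}=\max\{\,r_{W}+\lambda r_{PT},\ r_{W}+\lambda\sum_{j}T^{*}_{ij}\bar v^{*}_{j}\,\}$ for $i\in[n]$; hence $\boldsymbol{v}^{*}$ is exactly the optimal value vector of MDP($\bm{T}^{*}$). Since $\bm{T}^{*}\in\U$, Assumption \ref{assumption-T-all-U} forces $\bm{T}^{*}$ to satisfy Assumption \ref{assumption-1}, while Assumption \ref{assumption-0} constrains only the rewards. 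Theorem \ref{th:nom-thr} therefore applies to MDP($\bm{T}^{*}$) and produces a threshold policy $\tilde\pi$ that is optimal for $\bm{T}^{*}$, i.e. that acts greedily with respect to $\boldsymbol{v}^{*}$ and transfers exactly on an upper set of severity scores.

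The final step, which I regard as the crux, is to upgrade ``$\tilde\pi$ is optimal for $\bm{T}^{*}$'' to ``$\tilde\pi$ is optimal for the robust objective.'' I would establish $\boldsymbol{v}^{\tilde\pi,\bm{T}}\ge\boldsymbol{v}^{*}$ componentwise for \emph{every} $\bm{T}\in\U$ by a sub-solution argument. Applying the policy-evaluation operator $L^{\tilde\pi}_{\bm{T}}$ to $\boldsymbol{v}^{*}$: at states where $\tilde\pi$ transfers, $(L^{\tilde\pi}_{\bm{T}}\boldsymbol{v}^{*})_{i}=r_{W}+\lambda r_{PT}=v^{*}_{i}$; at the remaining states, $(L^{\tilde\pi}_{\bm{T}}\boldsymbol{v}^{*})_{i}=r_{W}+\lambda\sum_{j}T_{ij}\bar v^{*}_{j}\ge r_{W}+\lambda\sum_{j}T^{*}_{ij}\bar v^{*}_{j}=v^{*}_{i}$, where the inequality is precisely the simultaneous minimality of $\bm{T}^{*}$ established above. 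Thus $L^{\tilde\pi}_{\bm{T}}\boldsymbol{v}^{*}\ge\boldsymbol{v}^{*}$; monotonicity and the $\lambda$-contraction property of $L^{\tilde\pi}_{\bm{T}}$ then yield $\boldsymbol{v}^{\tilde\pi,\bm{T}}\ge\boldsymbol{v}^{*}$, so $R(\tilde\pi,\bm{T})=\sum_{i}p_{0,i}v^{\tilde\pi,\bm{T}}_{i}\ge\sum_{i}p_{0,i}v^{*}_{i}$ for all $\bm{T}\in\U$. The worst-case value of $\tilde\pi$ is therefore at least the optimal robust value (and trivially at most), so $\tilde\pi$ is an optimal robust policy of threshold type.

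The main obstacle is exactly this last upgrade: a policy that is nominal-optimal for $\bm{T}^{*}$ need not be robust-optimal in general, because the adversary may respond to it with a matrix other than $\bm{T}^{*}$, and tie-breaking among equally optimal actions makes the comparison delicate. What rescues the argument is the factor structure, which forces the adversary's best response to a single $\bm{T}^{*}$ that minimizes all continuations at once; this one fact both identifies the surrogate nominal MDP($\bm{T}^{*}$) and drives the sub-solution inequality irrespective of how $\tilde\pi$ breaks ties. For general non-rectangular uncertainty this simultaneous-minimization property fails --- and with it the whole dynamic-programming reduction --- which is what makes the factor model \eqref{eq:fmus} together with Assumption \ref{assumption-T-all-U} the right setting for the result.
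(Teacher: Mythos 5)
Your proof is correct, and it follows the same high-level reduction as the paper --- identify a worst-case matrix in $\U$, apply Theorem \ref{th:nom-thr} to the nominal MDP at that matrix, and use Assumption \ref{assumption-T-all-U} to guarantee Assumption \ref{assumption-1} holds there --- but you execute the key step differently. The paper's proof is two lines: it cites Theorem 4.2 of \cite{GGC}, which states that $(\pi^{\sf rob},\bm{T}^{\sf rob})$ is a max-min pair if and only if $\pi^{\sf rob}$ is nominal-optimal for $\bm{T}^{\sf rob}$, and then concludes that $\pi^{\sf rob}$ is threshold. You instead re-derive the direction of that equivalence you actually need: you exploit the product structure $\W^{1}\times\cdots\times\W^{r}$ and the nonnegativity of the coefficients $u_{i}^{\ell}$ to exhibit a single matrix $\bm{T}^{*}$ that minimizes every continuation $\left(\bm{T}\bar{\bm{v}}^{*}\right)_{i}$ simultaneously, and you then upgrade nominal optimality of the threshold policy $\tilde\pi$ for MDP$(\bm{T}^{*})$ to robust optimality via the sub-solution/contraction argument $L^{\tilde\pi}_{\bm{T}}\bm{v}^{*}\ge\bm{v}^{*}\Rightarrow\bm{v}^{\tilde\pi,\bm{T}}\ge\bm{v}^{*}$ for every $\bm{T}\in\U$. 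What your route buys: it is more self-contained, and it neatly sidesteps a tie-breaking subtlety that the paper glosses over --- Theorem \ref{th:nom-thr} only guarantees the \emph{existence} of a threshold optimal policy for the nominal MDP at $\bm{T}^{\sf rob}$, whereas the paper's first displayed implication asserts that \emph{every} nominal-optimal policy for a matrix in $\U$ is threshold, which is stronger than what Theorem \ref{th:nom-thr} proves; your sub-solution step certifies robust optimality of the specific threshold policy regardless of how ties among optimal actions are broken. One caveat: your assertion that $\bm{v}^{*}$ (the componentwise max-min value) satisfies the Bellman fixed-point equation does not follow from the simultaneous-minimizer observation alone --- for non-rectangular sets this dynamic-programming property is precisely the nontrivial content of \cite{GGC} (absent that citation you would need a matching super-solution argument on the adversary's side, starting from the fixed point of the robust Bellman operator and showing no policy can beat it). Since you invoke \cite{GGC} for this, exactly as the paper does, the proof stands.
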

\begin{proof}
Following Assumption \ref{assumption-T-all-U} and Theorem \ref{th:nom-thr}, for any transfer policy $\tilde{\pi} \in \Pi,$
$$ \exists \; \bm{\tilde{T}} \in \U, \; \tilde{\pi} \in \arg \max_{\pi \in \Pi} R(\pi,\bm{\tilde{T}}) \Rightarrow \tilde{\pi} \text{ is a threshold policy.}$$
Theorem 4.2 in \cite{GGC} shows that
$$(\pi^{\sf rob},\boldsymbol{T}^{\sf rob}) \in \arg \max_{\pi \in \Pi} \min_{\boldsymbol{T} \in \U} R(\pi,\boldsymbol{T}) \iff \pi^{\sf rob} \in \arg \max_{\pi \in \Pi} R(\pi,\boldsymbol{T}^{\sf rob}).$$
Since the matrix $\boldsymbol{T}^{\sf rob}$ belongs to the uncertainty set $\U$, it satisfies Assumption \ref{assumption-1} by Assumption \ref{assumption-T-all-U} and therefore the optimal robust policy $\pi^{\sf rob}$ is threshold.
\end{proof}

This result highlights the critical role of threshold policies. Not only is the optimal nominal policy a threshold policy (Theorem \ref{th:nom-thr}), but the optimal robust policy, i.e., the policy with the highest worst-case reward, is also a threshold policy. It is natural to compare the thresholds of the optimal nominal and the optimal robust policies.
Our next result states that the threshold of the optimal robust policy $\pi^{\sf rob}$ is always \textit{lower} than the threshold of the optimal nominal policy $\pi^{\sf nom}.$
\begin{theorem}\label{th:rob-th-nom} Under Assumptions \ref{assumption-0} and \ref{assumption-T-all-U}, we have $\tau(\pi^{\sf rob}) \leq \tau(\pi^{\sf nom}),$ where  $\pi^{\sf rob}$ is the optimal robust policy and $\pi^{\sf nom}$ is the optimal nominal policy.
\end{theorem}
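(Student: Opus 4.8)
The plan is to reduce the comparison of thresholds to a single decision at the nominal threshold, and then to use the robust maximum principle (Theorem \ref{th:robust-max-principle}) to compare the continuation values of the two policies. Recall from Theorems \ref{th:nom-thr} and \ref{th:thr-all-T} that both $\pi^{\sf nom}$ and $\pi^{\sf rob}$ may be taken to be threshold policies, and that by Theorem 4.2 of \cite{GGC} the optimal robust policy $\pi^{\sf rob}$ is an optimal \emph{nominal} policy for its own worst-case matrix $\boldsymbol{T}^{\sf rob} \in \arg\min_{\boldsymbol{T} \in \U} R(\pi^{\sf rob}, \boldsymbol{T})$. Writing $i^{*} = \tau(\pi^{\sf nom})$, it suffices to show $\tau(\pi^{\sf rob}) \le i^{*}$; this is immediate when $i^{*} = n+1$, so assume $i^{*} \le n$. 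Since $\pi^{\sf nom}$ transfers at $i^{*}$, the Bellman optimality equation under $\boldsymbol{T}^{0}$ gives the transfer condition $\sum_{j=1}^{n} T^{0}_{i^{*}j} v^{\pi^{\sf nom},\boldsymbol{T}^{0}}_{j} + out(i^{*}) \le r_{PT}$. Because optimal policies for $\boldsymbol{T}^{\sf rob}$ are of threshold type (Theorem \ref{th:nom-thr} applied to $\boldsymbol{T}^{\sf rob}$, valid by Assumption \ref{assumption-T-all-U}), it is enough to prove that proactive transfer is also weakly optimal at $i^{*}$ under $\boldsymbol{T}^{\sf rob}$, i.e.\ $\sum_{j=1}^{n} T^{\sf rob}_{i^{*}j} v^{\pi^{\sf rob},\boldsymbol{T}^{\sf rob}}_{j} + out(i^{*}) \le r_{PT}$, since this forces an optimal robust threshold policy with threshold at most $i^{*}$.

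The first key step is a component-wise sandwiching of value vectors. Assuming, as is natural for an uncertainty set modelling deviations around the nominal matrix, that $\boldsymbol{T}^{0} \in \U$, I would apply inequality \eqref{eq:max-prin-2} with $\hat{\pi} = \pi^{\sf rob}$ and $\boldsymbol{\hat{T}} = \boldsymbol{T}^{\sf rob}$ to get $v^{\pi^{\sf rob},\boldsymbol{T}^{\sf rob}}_{j} \le v^{\pi^{\sf rob},\boldsymbol{T}^{0}}_{j}$ for all $j$, and inequality \eqref{eq:max-prin-1} with $\boldsymbol{\hat{T}} = \boldsymbol{T}^{0}$ and $\hat{\pi} = \pi^{\sf nom}$ to get $v^{\pi^{\sf rob},\boldsymbol{T}^{0}}_{j} \le v^{\pi^{\sf nom},\boldsymbol{T}^{0}}_{j}$ for all $j$. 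Chaining these yields $v^{\pi^{\sf rob},\boldsymbol{T}^{\sf rob}}_{j} \le v^{\pi^{\sf rob},\boldsymbol{T}^{0}}_{j} \le v^{\pi^{\sf nom},\boldsymbol{T}^{0}}_{j}$ for every state $j \in [n]$.

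The crux is to convert these value inequalities into the required inequality between the continuation (inner-product) terms, and the trick is to use the \emph{same} policy $\pi^{\sf rob}$ evaluated under both $\boldsymbol{T}^{\sf rob}$ and $\boldsymbol{T}^{0}$. If $\pi^{\sf rob}$ already transfers at $i^{*}$ there is nothing to prove, so assume its action at $i^{*}$ is ``stay''; then the Bellman equations read $v^{\pi^{\sf rob},\boldsymbol{T}^{\sf rob}}_{i^{*}} = r_{W} + \lambda ( \sum_{j} T^{\sf rob}_{i^{*}j} v^{\pi^{\sf rob},\boldsymbol{T}^{\sf rob}}_{j} + out(i^{*}) )$ and $v^{\pi^{\sf rob},\boldsymbol{T}^{0}}_{i^{*}} = r_{W} + \lambda ( \sum_{j} T^{0}_{i^{*}j} v^{\pi^{\sf rob},\boldsymbol{T}^{0}}_{j} + out(i^{*}) )$. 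Since $v^{\pi^{\sf rob},\boldsymbol{T}^{\sf rob}}_{i^{*}} \le v^{\pi^{\sf rob},\boldsymbol{T}^{0}}_{i^{*}}$, subtracting gives $\sum_{j} T^{\sf rob}_{i^{*}j} v^{\pi^{\sf rob},\boldsymbol{T}^{\sf rob}}_{j} \le \sum_{j} T^{0}_{i^{*}j} v^{\pi^{\sf rob},\boldsymbol{T}^{0}}_{j}$, and then nonnegativity of $\boldsymbol{T}^{0}$ together with $v^{\pi^{\sf rob},\boldsymbol{T}^{0}}_{j} \le v^{\pi^{\sf nom},\boldsymbol{T}^{0}}_{j}$ upgrades the right-hand side to $\sum_{j} T^{0}_{i^{*}j} v^{\pi^{\sf nom},\boldsymbol{T}^{0}}_{j}$. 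Combining with the nominal transfer condition then gives $\sum_{j} T^{\sf rob}_{i^{*}j} v^{\pi^{\sf rob},\boldsymbol{T}^{\sf rob}}_{j} + out(i^{*}) \le r_{PT}$, i.e.\ transfer is weakly optimal at $i^{*}$ under the worst case, as required.

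The main obstacle is precisely this last step: the sandwich $\boldsymbol{v}^{\pi^{\sf rob},\boldsymbol{T}^{\sf rob}} \le \boldsymbol{v}^{\pi^{\sf nom},\boldsymbol{T}^{0}}$ does \emph{not} by itself control $\sum_{j} T^{\sf rob}_{i^{*}j} v^{\pi^{\sf rob},\boldsymbol{T}^{\sf rob}}_{j}$ against $\sum_{j} T^{0}_{i^{*}j} v^{\pi^{\sf nom},\boldsymbol{T}^{0}}_{j}$, because the two matrices differ and the worst-case matrix is coupled across rows through the shared factors, so it need not minimize any single row's continuation term. The resolution is to avoid comparing the two matrices directly against a common vector; instead I route through the intermediate value $\boldsymbol{v}^{\pi^{\sf rob},\boldsymbol{T}^{0}}$, which shares the matrix $\boldsymbol{T}^{0}$ with the nominal side and the policy $\pi^{\sf rob}$ with the robust side, so that the component-wise optimality of the worst-case value (inequality \eqref{eq:max-prin-2}) does the work at the level of values rather than inner products. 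Two minor points remain to be handled explicitly: the hypothesis $\boldsymbol{T}^{0} \in \U$, and a tie-breaking convention under which the decision-maker transfers when indifferent, so that weak optimality of transfer at $i^{*}$ indeed yields $\tau(\pi^{\sf rob}) \le i^{*} = \tau(\pi^{\sf nom})$.
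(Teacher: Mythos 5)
Your proof is correct and is essentially the paper's own argument traversed in the opposite direction: both rest on exactly the same ingredients --- inequality \eqref{eq:max-prin-1} applied to $\pi^{\sf nom}$ under $\bm{T}^{0}$, inequality \eqref{eq:max-prin-2} applied to $\pi^{\sf rob}$ at its worst case $\bm{T}^{\sf rob}$ (both implicitly requiring $\bm{T}^{0} \in \U$, which the paper also uses when asserting $\pi^{\sf nom} \in \hat{\Pi}$), the saddle-point fact from \cite{GGC} that $\pi^{\sf rob}$ is nominally optimal for $\bm{T}^{\sf rob}$, and the Bellman equation at the critical state --- the paper merely running the chain as a contradiction for every policy in $\hat{\Pi}$ rather than directly for $\pi^{\sf nom}$, and handling ties by the symmetric ``transfer only under strict preference'' convention instead of your ``transfer when indifferent'' one. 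The only repair needed is notational: in your Bellman evaluation under $\bm{T}^{\sf rob}$ the exit term must be $out_{\bm{T}^{\sf rob}}(i^{*})$ rather than $out(i^{*})$, which the paper defines via $\bm{T}^{0}$; your subtraction step still delivers the required inequality once each exit term is grouped with its own matrix.
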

\begin{proof}
Let $\hat{\Pi}$ be the set of policies that are optimal for some transition kernel in $\U$:
$ \hat{\Pi} = \{ \pi  \; | \;  \exists \; \boldsymbol{T} \in \U, \pi \in \arg \max_{\pi \in \Pi} R(\pi,\boldsymbol{T}) \}. $ Note that $\pi^{\sf nom} \in \hat{\Pi}$. We will prove that $\tau(\pi^{\sf rob}) \leq \tau(\pi), \forall \; \pi \in \hat{\Pi}. $

Following Theorem \ref{th:thr-all-T}, we can pick $\pi^{\sf rob}$ to be an optimal robust policy that is a threshold policy. We denote $\boldsymbol{T}^{\sf rob}$ a matrix in $\U$ such that
$(\pi^{\sf rob},\boldsymbol{T}^{\sf rob}) \in \max_{\pi \in \Pi} \min_{\boldsymbol{T} \in \U} R(\pi,\boldsymbol{T}).$
 Let $\hat{\pi} \in \hat{\Pi}$. There exists a transition matrix $\boldsymbol{\hat{T}} \in \U$ such that
$\hat{\pi} \in \arg \max_{\pi \in \Pi} R(\pi,\boldsymbol{\hat{T}}).$
Let us assume that $\hat{\pi}(i)=1$ for some $i \in [n].$ We will prove that $\pi^{\sf rob}(i)=1.$
We have
\begin{align}
r_{W} + \lambda \cdot r_{PT} & > r_{W} + \lambda \cdot \boldsymbol{\hat{T}}_{i,\cdot}^{\top}\boldsymbol{V}^{\hat{\pi},\boldsymbol{\hat{T}}} \label{ineq:prop-1} \\
& \geq r_{W} + \lambda \cdot \boldsymbol{\hat{T}}_{i,\cdot}^{\top}\boldsymbol{V}^{\pi^{\sf rob},\boldsymbol{\hat{T}}}, \label{ineq:prop-2}
\end{align}
where Inequality \eqref{ineq:prop-1} follows from the Bellman Equation for the MDP with transition matrix $\boldsymbol{\hat{T}}$ and Inequality \eqref{ineq:prop-2} follows from Inequality \eqref{eq:max-prin-1} of Theorem \ref{th:robust-max-principle}:
$$ \hat{\pi} \in \arg \max_{\pi \in \Pi} R(\pi,\boldsymbol{\hat{T}}) \Rightarrow V^{\hat{\pi},\boldsymbol{\hat{T}}}_{j} \geq V^{\pi,\boldsymbol{\hat{T}}}_{j},  \forall \; j \in [n], \forall \; \pi \in \Pi. $$

Now for the sake of contradiction let us assume that $\pi^{rob}(i)=0.$ Therefore,
\begin{equation}\label{eq:contr-1}
r_{W} + \lambda \cdot \boldsymbol{\hat{T}}_{i,\cdot}^{\top}\boldsymbol{V}^{\pi^{\sf rob},\boldsymbol{\hat{T}}} = V^{\pi^{\sf rob},\boldsymbol{\hat{T}}}_{i}.
\end{equation}
Therefore, if $\pi^{\sf rob}(i)=0$, we can conclude that
\begin{align}
r_{W} + \lambda \cdot r_{PT} & > V^{\pi^{\sf rob},\boldsymbol{\hat{T}}}_{i} \label{ineq:prop-3} \\
& \geq V^{\pi^{\sf rob},\boldsymbol{T}^{\sf rob}}_{i}, \label{ineq:prop-4}
\end{align}
where the strict Inequality \eqref{ineq:prop-3} follows from \eqref{eq:contr-1} and \eqref{ineq:prop-1}, and Inequality \eqref{ineq:prop-4} follows from \eqref{eq:max-prin-2} in the robust maximum principle:
$$
\boldsymbol{T}^{\sf rob} \in \arg \; \min_{\boldsymbol{T} \in \U} R(\pi^{\sf rob},\boldsymbol{T})
 \Rightarrow V^{\pi^{\sf rob},\boldsymbol{\hat{T}}}_{j} \geq V^{\pi^{\sf rob},\boldsymbol{T}^{\sf rob}}_{j}, \forall \; j \in [n].
$$
We can therefore conclude that
\begin{equation}\label{eq:contr-3}
r_{W}+ \lambda \cdot r_{PT} > V^{\pi^{\sf rob},\boldsymbol{T}^{\sf rob}}_{i}.
\end{equation}
But since $\pi^{\sf rob}$ is an optimal robust policy, we know following Theorem \ref{th:robust-max-principle} that
$ \pi^{\sf rob} \in \arg \; \max_{\pi \in \Pi} R(\pi,\boldsymbol{T}^{\sf rob}).$
Therefore, from the Bellman Equation we know that $\pi^{\sf rob}(i)=1$ if $r_{W} + \lambda \cdot r_{PT} > r_{W}+\lambda \cdot \boldsymbol{T}_{i,.}^{\sf rob \; \top}\boldsymbol{V}^{\pi^{\sf rob},\boldsymbol{T}^{\sf rob}}$ and $\pi^{\sf rob}(i)=0$ if $r_{W} + \lambda \cdot r_{PT} \leq r_{W}+\lambda \cdot \boldsymbol{T}_{i,.}^{\sf rob \; \top}\boldsymbol{V}^{\pi^{\sf rob},\boldsymbol{T}^{\sf rob}}$.
This implies that $V^{\pi^{\sf rob},\boldsymbol{T}^{ \sf rob}}_{i} \geq r_{W} + \lambda \cdot r_{PT},$
which contradicts Inequality \eqref{eq:contr-3}, and therefore it is impossible that $\pi^{\sf rob}(i)=0.$ Since $\pi^{\sf rob}$ is a deterministic policy, $\pi^{\sf rob}(i) \neq 0 \Rightarrow \pi^{\sf rob}(i)=1.$
We have proved that if $\hat{\pi}(i)=1$ for some $\hat{\pi}$ in $\hat{\Pi}$ and some $i \in [n]$, then $\pi^{\sf rob}(i)=1$. Therefore, we can conclude that
$\tau(\pi^{\sf rob}) \leq \tau(\pi), \forall \; \pi \in \hat{\Pi}.$
Since $\pi^{\sf nom}$ is the optimal policy for the nominal transition kernel $\bm{T}^{0}$,  we can conclude that $\pi^{\sf nom} \in \hat{\Pi}$ and therefore in particular
$\tau(\pi^{rob}) \leq \tau(\pi^{\sf nom}).$ 
\end{proof}

Theorem \ref{th:rob-th-nom} highlights the crucial role of threshold policies in ICU admission decision-making. In the framework of our single-MDP for modeling the patient dynamics, both an optimal nominal policy and an optimal robust policy can be found in this class of simple and implementable policies. Moreover, there exists a natural ordering on the threshold of a nominal policy and a policy that accounts for parameter misspecification. In particular, the robust optimal policy is more aggressive in proactively transferring patients.
%

\section{Numerical experiments.}\label{sec:exp} In this section, we utilize real data from 21 Northern California Kaiser Permanente hospitals to examine the potential implications of our theoretical results in practice. We utilize this data to estimate the nominal parameters and uncertainty set of our hospital model (Figure \ref{fig:hosp_model}) and our single-patient MDP (Figure \ref{fig:mdp_model}). We then compare the performance of the  optimal nominal and optimal robust policies on several metrics of interest: mortality, Length-Of-Stay (LOS) and average ICU occupancy.

\subsection{Dataset.}
{\color{black} Our retrospective dataset consists of 296,381 unique patient hospitalizations across 21 Northern California Kaiser Permanente hospitals}. For each hospitalization, we  have patient-level data which is assigned at the time of hospital admission: age, gender, admitting hospital, admitting diagnosis, classification of diseases codes, and three scores that quantify the severity of the illness of the patient (CHMR, COPS2, LAPS2, see \cite{ICU-wenqi} for more details). During the patient's hospitalization, we can track each unit (i.e., ICU, Transitional Care Unit, general medical-surgical ward, operating room, or post-anesthesia care unit) the patient stayed in and when. Additionally, we have a sequence of early warning scores, known as Advance Alert Monitor (AAM) scores, that are updated every six hours. This early warning score uses the LAPS2, COPS2, individual vital signs and laboratory tests, interaction terms, temporal markers, and location indicators to estimate the probability of in-hospital deterioration (requiring ICU transfer or leading to death on the ward) within the next 12 hours, with an alert issued at a probability of $\geq 8 \%$. {\color{black} These scores have demonstrated their ability to accurately predict deterioration \citep{Escobar-2012}, and we use them use a proxy for the severity condition of the patients}.  Similar to \cite{ICU-wenqi}, we focus on medical patients who were admitted to the hospital through the emergency department (this comprises more than $60\%$ of all patients).
{\color{black} We remove 11,463 hospitalizations with missing gender or inpatient unit code, time inconsistencies (e.g. arrival after discharge, missing discharge time). We also drop patients involved in hospital transfers (5,781 patients). Our final dataset consists of 174,632 hospitalizations, each corresponding to a patient trajectory that evolves across  $n=10$ severity scores, possible ICU visit(s), and terminates with the patient either recovering and leaving the hospital or dying and leaving the hospital. }
{\color{black} Summary statistics (partition, mortality rate, average length-of-stay, etc.) for the patients across the different severity scores are given in Appendix \ref{app:detail_hosp}. }

\subsection{Transition matrix and model of uncertainty.}
We first calibrate the transition matrix which determines the evolution of patient severity score while in the general ward.
\subsubsection{Nominal transition matrix.}
We use the AAM scores as our severity scores.
The matrix $\bm{T}^{0}$ has dimension $n \times (n+3)$ where $n=10$ is the number of severity scores.  $\bm{T}^{0}$ is constructed as follows. Let $i \in [n]$ and $j \in [n] \bigcup \{ CR,RL,D\}$. The coefficient $T_{ij}^{0}$ represents the probability that a patient in severity score $i$ will transfer to  state $j$ in the next period. We use the empirical mean as the nominal value for $T^{0}_{ij}$.
We use the method in \cite{conf_interval} to obtain the  $95 \%$-confidence intervals for the matrix $\bm{T}^{0}$:
\begin{align}\label{eq:conf_unrelated}
[T_{ij}^{0} - \alpha_{i}, T_{ij}^{0}+2 \cdot \alpha_{i}], \forall \; (i,j) \in [n] \times [n+3].
\end{align}
This expression highlights the skewness of the confidence intervals, which follows from the skewness of the multinominal distribution. Also, note that for a given severity score $i \in [n]$, the parameter uncertainty in $T_{ij}^{0}$ is uniform across all $j \in [n+3]$. See Appendix \ref{app:values-nom-parameters} for the values for $\alpha_{1},...,\alpha_{n},$ which are in the order of $10^{-4}$ to $10^{-3}$.

\subsubsection{Nominal factor matrix.}
In order to construct a factor matrix uncertainty set \eqref{eq:fmus},  we need to compute the coefficients $(u_{i}^{\ell})_{(i,\ell)} \in \R^{n \times r}$, the nominal factors $\bm{W} = (\bm{w}_{1},...,\bm{w}_{r}) \in \R_{+}^{(n+3) \times r}$ such that $\bm{T}^{0} \approx \bm{UW}^{\top}$, and the confidence regions $\W^{i}$ for each factor $\bm{w}_{i}, i=1,...,r$.
To do this, we solve the following Nonnegative Matrix Factorization (NMF) problem:
\[ \min \; \{ \| \boldsymbol{T}^{0} - \boldsymbol{UW}^{\top} \|_{2}^{2} \; | \boldsymbol{Ue}_{r} =\boldsymbol{e}_{n}, \boldsymbol{e}_{n+3}^{\top}\boldsymbol{W} = \boldsymbol{e}_{r}, \boldsymbol{U} \in \R^{n \times r}_{+}, \boldsymbol{W} \in \R^{(n+3) \times r}_{+}\}.\]

This is a non-convex optimization problem. However, there exist  fast algorithms for efficiently computing local minima. We adapt the block-coordinate descent method of \cite{Wotao}, starting from $10^{6}$ different random matrices and keep the best solution\footnote{This takes less than 5 minutes on a laptop with 2.2 GHz Intel Core i7 and 8 GB of RAM.}. For $r=8$, our solution $\boldsymbol{\hat{T}}=\boldsymbol{U\hat{W}}^{\top}$ achieves the following errors: $ \| \boldsymbol{T}^{0} - \boldsymbol{\hat{T}} \|_{1} = 0.0811, \| \boldsymbol{T}^{0} - \boldsymbol{\hat{T}} \|_{\infty} = 0.0074, \| \boldsymbol{T}^{0} - \boldsymbol{\hat{T}} \|_{\sf relat,\boldsymbol{T}^{0}} = 0.3385,$
    where $\| \cdot \|_{\sf relat, \boldsymbol{T}^{0}}$ stands for the maximum relative deviation from a parameter of $\boldsymbol{T}^{0}$:
    $$\| \boldsymbol{T}^{0} - \boldsymbol{\hat{T}} \|_{\sf relat,\boldsymbol{T}^{0}} = \max_{ (i,j) \in [n] \times [n+3]} \dfrac{|T_{ij}^{0} - \hat{T}_{ij}|}{T_{ij}^{0}}.$$
Table \ref{tab:abs-relat} summarizes the errors across the $n\times(n+3) = 130$ elements of $\boldsymbol{T}^{0}$.
\begin{table}[H]
\center
\begin{tabular}{|c|cccc|}
\hline
                   & max. & mean    & median  & 95\% percentile \\
                   \hline
absolute deviation & 0.0074  & 0.0006 & 0.0003 & 0.0022          \\
relative deviation & 0.3385  & 0.0565  & 0.0204  & 0.2656   \\
\hline
\end{tabular}
\caption{Statistics of the absolute and relative deviations of $\hat{T}_{ij}$ from $T^{0}_{ij}$, for all $(i,j) \in [n] \times [n+3]$.}
\label{tab:abs-relat}
\end{table}

{\color{black} Recall that we utilize a factor matrix model of uncertainty in order to capture correlations in transitions due to underlying characteristics such as genetics or demographics. Therefore, we expect the rank to be smaller than  the number of states, $r < n$. We choose $r=8$. This is the smallest integer for which we were able to find a nonnegative matrix factorization (of this rank) belonging to the confidence intervals. We give details about our simulations for $r=7$ in Appendix \ref{app:r=7}, for which we obtain similar insights as for $r=8$.}

The maximum \textit{absolute} deviation between $T^{0}_{ij}$ and $\hat{T}_{ij}$ is less than 0.01 (0.0891 instead of 0.0817). Moreover, the maximum relative deviation is about $34 \%$, with a coefficient of $4.8527 \cdot 10^{-4}$ instead of $7.34\cdot 10^{-4}$ for $\bm{T}^{0}$. This occurs for $T^{0}_{3,9}$, which represents a sudden,  dramatic, and relatively rare health deterioration from state $3$ to state $9$.

By construction, any two rows $\bm{\hat{T}}_{i_{1},\cdot}$ and $\bm{\hat{T}}_{i_{2},\cdot}$ are convex combinations of the same factors $\bm{\hat{w}}_{1},...,\bm{\hat{w}}_{r}$, with coefficients $\bm{U}_{i_{1},\cdot}$ and $\bm{U}_{i_{2},\cdot}$.
 We compute $\Delta(i_{1},i_{2})=\| \bm{U}_{i_{1},\cdot} - \bm{U}_{i_{2},\cdot} \|_{1}$ as a measure of relatedness between the uncertainty on $\bm{T}_{i_{1},\cdot}$ and $\bm{T}_{i_{2},\cdot}$.
We note that our NMF decomposition captures the intuition that close severity scores are more related than very different severity scores. To see this, consider any severity score $i \in [n]$ and any two alternative severity scores $(j,k) \in [n] \times [n]$ which are different from $i$. Then, we observe that
$ | i-j | < | i-k | \Rightarrow \Delta(i,j) < \Delta(i,k).$

\paragraph{Errors related to the confidence intervals.} The intent of the structured nominal transition matrix (and the subsequent uncertainty sets) is to capture the parameter uncertainty inherent in the estimation process from real data.  As such, it is of interest to understand whether our nominal matrix is consistent with the confidence intervals of our parameter estimates.  We consider a relative error of $\boldsymbol{\hat{T}}$ compared to $\boldsymbol{T}^{0}$, measured in terms of the confidence bounds $\alpha_{1},...,\alpha_{n}$. In particular, we compute the ratios \begin{equation}\label{eq:ratios}
    ratio_{(i,j)}= \dfrac{T_{ij}^{0} - \hat{T}_{ij}}{\alpha_{i}},\forall \; (i,j) \in [n] \times [n+3].
    \end{equation}
 For $r=8$, we find that all coefficients are in the confidence intervals as defined by \eqref{eq:conf_unrelated}, i.e., $ratio_{(i,j)} \in [-1,2], \forall \; (i,j) \in [n] \times [n+3]$.  The mean over $(i,j)$ of the absolute values of the ratios \eqref{eq:ratios} is $0.2345$, with a median of $0.1579$. Moreover, $95 \%$ of these absolute values are below $0.6729$. Therefore,  $\bm{\hat{T}}$ (our NMF solution of rank $8$) is a plausible approximation for $\bm{T}^{0}$.


For completeness, we also compute the solutions to the NMF optimization problem for lower ranked matrices: $r =5,6,7$. While the errors in the $L_{1}-$ and $L_{\infty}-$norms remain small, the relative errors increased substantially, up to $0.43$ for $r=7$, $0.98$ for $r=6$ and $5.8$ for $r=5$.  For rank $7$, we have $10$ coefficients outside of the confidence intervals, with a maximum deviation of $4.840 \cdot \alpha_{9}$ for $T_{9,9}^{0}$. Despite that one coefficient being well out of its confidence interval, we find that  the mean of the absolute value of the ratios  is $0.4818$, with a median at $0.2380$. Therefore, the NMF solution for $r=7$ also appears to be a reasonable approximation for $\bm{T}^{0}.$ However, it does not seem reasonable to decrease the rank even further. For instance, for a rank $r=5$, our NMF solution has $70$ coefficients that are outside the $95 \%$ confidence intervals, with a maximum ratio of $48.4931$. Similarly, for $r=6$, there are still $54$ coefficients outside of the confidence intervals, with a maximum ratio of $ 44.0267$. Therefore, in the rest of the paper we will focus on NMF solutions corresponding to rank $r=8$. We also present detailed experiments for $r = 7$ in Appendix \ref{app:r=7}.

\subsubsection{Choice of uncertainty sets.} We consider several uncertainty sets for our analysis:
\begin{itemize}
\item {$\U_{\min}$:}  We consider a factor matrix uncertainty set based on the 95\% confidence interval in the most optimistic manner. Specifically, for $\alpha_{\min} = \min_{i \in [n]} \alpha_{i}$, we consider
\[\U_{\min} = \left\lbrace \boldsymbol{T}  \; \bigg| \; \boldsymbol{T} = \boldsymbol{UW}^{\top}, \boldsymbol{W} \in \W_{\min} \right\rbrace, \]
where $ \W_{\min} = \W^{1}_{\min} \times ... \times \W^{r}_{\min},$
$$ \W^{\ell}_{\min} = \left\lbrace \boldsymbol{w}_{\ell} \; \bigg| \; \forall \; j \in [n+3], w_{\ell,j} - \hat{w}_{\ell,j} \in [ - \alpha_{\min}, + 2 \cdot \alpha_{\min}], \boldsymbol{w}_{\ell} \geq \boldsymbol{0}, \boldsymbol{w}^{\top}_{\ell}\boldsymbol{e}_{n+3}=1 \right\rbrace, \forall \; \ell \in [r].$$
Specifically, the  deviation on each component of the factor vectors must be within $ [ - \alpha_{\min},  2 \cdot \alpha_{\min}]$. This implies that $\bm{T} = \bm{U}\bm{W}^{T}$ is within $ [ - \alpha_{\min},  2 \cdot \alpha_{\min}]$ from the matrix $\bm{\hat{T}}$ (in $\| \cdot \|_{\infty}$).

{\color{black}
Note that while it is in principle possible to construct $\U_{\max}$, where the maximum deviation on each component component of the factor vectors must be within $ [ - \alpha_{\max},  2 \cdot \alpha_{\max}]$, this would result in worst-case matrices where most coefficients are outside of the confidence intervals, contrary to $\U_{\min}$; see details in the next section.}
\item {$\U_{\sf emp}$:} We also consider another possibly less restrictive uncertainty set that is constructed empirically from the 95\% confidence intervals. To do this, we generate 95\% confidence intervals of the \emph{factor vectors}. First, we sample $q$ transition matrices $\bm{T}^{1},...\bm{T}^{q}$ uniformly in the $95 \%$ confidence intervals around $\bm{T}^{0},$ for $q=10^{4}$. For each sampled matrix, we use Nonnegative Matrix Factorization to compute factor vectors $\bm{W}^{1},...,\bm{W}^{q}$ such that $\bm{T}^{m} \approx \bm{UW}^{m \; \top},m=1,...,q.$ Let $\sigma^{\ell}_{j}$ be the empirical standard deviations of each coefficients $w^{\ell}_{j}$, for $(\ell,j) \in [r] \times [n+3]$, from the resulting $\bm{W}^{1},...,\bm{W}^{q}$. We then define the uncertainty set
 $\U_{\sf emp} = \{ \boldsymbol{T}  \; | \; \boldsymbol{T} = \boldsymbol{UW}^{\top}, \boldsymbol{W} \in \W_{\sf emp} \},$ where
$ \W_{\sf  emp} = \W^{1}_{\sf  emp} \times ... \times \W^{r}_{\sf  emp}$ represents the bootstrapped 95\% confidence intervals for the factor vectors. That is: $$ \W^{\ell}_{\sf  emp} = \left\lbrace \boldsymbol{w}_{\ell} \; \bigg| \; \forall \; j \in [n+3], |w_{\ell,j} - \hat{w}_{\ell,j}| \leq \sigma^{\ell}_{j} \cdot \dfrac{1.96}{\sqrt{q}}, \boldsymbol{w}_{\ell} \geq \boldsymbol{0}, \boldsymbol{w}^{\top}_{\ell}\boldsymbol{e}_{n+3}=1 \right\rbrace, \forall \; \ell \in [r].$$

\item {$\U_{\sf sa}$:} Finally,  we also consider the following $(s,a)$-rectangular uncertainty set, where transitions from each state can be chosen unrelated to the transitions out of any other states:
\[\U_{\sf sa} = \left\lbrace \boldsymbol{T} \; \bigg| \; T_{ij} - T^{0}_{ij} \in [- \alpha_{i},+ 2 \cdot \alpha_{i} ], \sum_{j=1}^{13} T_{ij} = 1, \forall \; i \in [n] \; \right\rbrace.\] This uncertainty set is unable to capture the fact that there are likely characteristics which introduce correlations  across different states.
\end{itemize}
\subsection{Robustness analysis for the single-patient MDP.}\label{sec:exp-MDP}
We first give the details about the parameters of our single-patient MDP.
\subsubsection{Choice of MDP Parameters.}\label{sec:MDP-parameters}

\paragraph{Nominal transition matrix.} The probability that the patient transitions from severity score $i \in [n]$ to next state $j \in [n+3]$ is $T^{0}_{ij}$.
The probability that a patient dies after having crashed in the ward is given by $d_{C}=0.4761$, and is estimated by sample mean in our dataset. The probability that a patient dies after having been proactively transferred to the ICU is estimated similarly and is $d_{A}=0.0009$, which is significantly lower than $d_{C}$.
\paragraph{Initial distribution and rewards.}
We set the initial distribution $\bm{p}_{0} \in \R^{n+4}_{+}$ as the long-run average occupation of patients in each severity score group according to the data (see Appendix \ref{app:detail_hosp},  Table \ref{tab:repartition-severity-score}).
We choose a discount factor of $0.95$ to capture the importance of future outcomes for the decision-maker.  While our theoretical results are agnostic to the choice of discount factor, we also verify that with alternative discount factors (e.g. $\lambda=0.99$), we obtain similar insights.
We choose the following rewards, satisfying Assumption \ref{assumption-0} and Assumption \ref{assumption-T-all-U},
\begin{equation}\label{eq:rec-def}
\begin{aligned}
r_{W}=100,  \; r_{RL}& =\dfrac{1}{1-\lambda} \cdot 250,\; r_{PT-RL} = \dfrac{1}{1-\lambda}\cdot 190,\; r_{CR-RL}=\dfrac{1}{1-\lambda}\cdot 160, \\
r_{D}& =\dfrac{1}{1-\lambda} \cdot 30,\; r_{CR-D} = \dfrac{1}{1-\lambda} \cdot 20,\; r_{PT-D}=\dfrac{1}{1-\lambda} \cdot 10,
\end{aligned}
\end{equation}
We would like to note that the following natural ordering conditions are satisfied:
\begin{align*}
r_{RL}  \geq r_{PT-RL} \geq r_{CR-RL},, r_{D}  \geq r_{CR-D} \geq r_{PT-D}.
\end{align*}
Certainly, different choices of rewards may lead to different thresholds for the optimal nominal and the optimal robust policies. {\color{black} It is a notoriously complex problem to estimate the exact values for such quantities as $r_{RL}$ and $r_{W}$, let alone $r_{PT-D}$ and $r_{PT-RL}$ (or more generally rewards in applications of reinforcement learning to healthcare, e.g. \cite{yauney2018reinforcement}, or Section II, ``Representation for Reward Function'' in \cite{yu2019reinforcement}).} Appendix \ref{app:sens} summarizes a detailed sensitivity analysis of our numerical results for different rewards. The single MDP is most valuable in identifying candidate worst-case transition matrices. {\color{black} While the thresholds of the optimal and nominal robust policies for the single MDP are highly dependent on the rewards,  our assumptions are not (we prove this in Appendix \ref{app:Lemmas-tr-sc}) and the performance of the hospital (in terms of mortality, LOS and average ICU occupancy) based on the resulting worse-case matrix is fairly consistent across different rewards, including those in \eqref{eq:rec-def}.}

\subsubsection{Empirical results for the single-patient MDP.}\label{sec:MDP-emp-results}
{\color{black}
We verify that for our choice of rewards, Assumptions \ref{assumption-0}-\ref{assumption-1} are satisfied. Verifying Assumption \ref{assumption-T-all-U} requires solving some linear programs, as the uncertainty sets $\U_{\min}$, $ \U_{\sf emp}$ and $\U_{\sf sa}$ are defined by linear inequalities.} From Theorems \ref{th:nom-thr} and \ref{th:thr-all-T}, we know the optimal nominal and robust policies are of threshold type. Therefore, we consider all threshold policies, denoted by $\pi^{[1]},...,\pi^{[11]}$, and compare their nominal and worst-case rewards in the single-patient MDP for the different uncertainty sets. Figure \ref{fig:perf_MDP} summarizes these results.
\begin{figure}[htb]
\begin{center}
\includegraphics[width=0.6\linewidth,height=7cm]{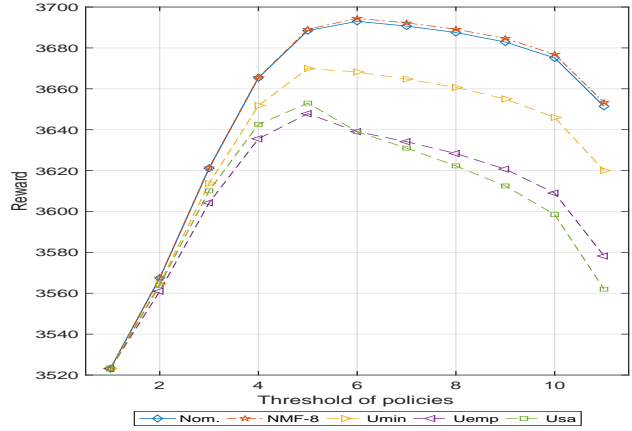}
\caption{Nominal and worst-case performance of threshold policies for an NMF approximation of rank $8$. For any threshold $\tau=1,...,11$, ``Nom.'' stands for $R(\pi^{[\tau]},\bm{T}^{0})$, while ``NMF-8'' stands for $R(\pi^{[\tau]},\bm{\hat{T}})$. The other three curves represent the worst-case reward of $\pi^{[\tau]}$ for the specified uncertainty set ($\U_{\min}, \U_{\sf emp}, \U_{\sf sa}$).}
\label{fig:perf_MDP}
\end{center}
\end{figure}
Note that for all threshold policies $\pi^{[\tau]}$, the corresponding reward using the estimated transition matrix, $R(\pi^{[\tau]},\bm{T}^{0})$,  and that using the NMF approximation of the transition matrix, $R(\pi^{[\tau]},\bm{\hat{T}})$, are practically indistinguishable. This provides additional support for using our NMF solution as an approximation for $\bm{T}^{0}$.

We observe that the optimal nominal policy ($\pi^{[6]}$) is different than the optimal robust policy ($\pi^{[5]}$) for the three different uncertainty sets. Our primary goal with the single MDP is not to provide direct recommendations for the hospital system, but rather to determine candidate transition matrices under which the hospital system can be evaluated. As we will see later, the performance of these policies under their corresponding transition matrices is quite different in the hospital simulation.

We would like to note that the worst-case matrices in $\U_{\min}$ and $\U_{\sf emp}$ do not belong to the $95 \%$ confidence intervals \eqref{eq:conf_unrelated}, even though $\bm{\hat{T}}$ belongs to \eqref{eq:conf_unrelated}. That said, only a few of the coefficients are outside of the $95 \%$ confidence regions, and the violations are small. For instance, for the worst-case matrix in $\U_{\min}$ associated with $\pi^{[5]}$, only five coefficients (out of $130$) are outside of $\eqref{eq:conf_unrelated}$, and the worst-case deviation is $-1.2179 \cdot \alpha_{3}$ (instead of $-\alpha_3$), while the mean of the absolute values of the deviations is $0.3381$ and $95 \%$ of these absolute values are below $1.0690$. The results are similar for $\U_{\sf emp}$. For instance, for the worst-case matrix in $\U_{\sf emp}$ associated with $\pi^{[5]}$, $20$ coefficients out of $130$ are outside the confidence intervals. While the coefficient $(1,1)$ is about $10 \cdot \alpha_{1}$ away from $T_{1,1}^{0}$ (instead of $2\alpha_1$), the mean of the absolute values of the deviations is $0.4430$, and $95 \%$ of these absolute values are below $2.3057$.  Therefore, we can still consider the worst-case matrices for $\U_{\min}$ and $\U_{\sf emp}$ as plausible transition matrices for our hospital model.
%
\subsection{Robustness analysis for the hospital.}\label{sec:exp-hosp}
The primary purpose of our single-patient MDP model is to develop insights into the management of the full hospital system.   To that end, we use our single-patient MDP  to generate transition matrices that are candidates for a worst-case deterioration of the hospital performance. Given the complexity and multi-objective nature of the hospital system (i.e., minimize mortality rate, LOS, and average ICU occupancy), defining, let alone deriving, an optimal policy is highly complex. As such, we focus on the class of threshold policies given their desirable theoretical properties  (see Theorems \ref{th:nom-thr} and \ref{th:thr-all-T}) and their simplicity which can help facilitate implementation in practice. For each threshold policy $\pi^{[\tau]}$ and each uncertainty set $\U$ (among $\U_{\min},\U_{\sf emp}, \U_{\sf sa}$), we  compute $\boldsymbol{T}^{[\tau,\U]}$  a worst-case transition matrix for the single-patient MDP in $\U$:
$$\bm{T}^{[\tau,\U]} \in \arg \min_{\bm{T} \in \U} R(\pi^{[\tau]},\bm{T}).$$
Then, we use the pair $(\pi^{[\tau]},\boldsymbol{T}^{[\tau,\U]})$ to simulate the hospital performance as measured by the mortality rate, length-of-stay, and average occupancy of the ICU.

\subsubsection{In-hospital mortality and Length-Of-Stay.}

\paragraph{In-hospital mortality.}

In Figure \ref{fig:mort_rand_bump_1} we study the variation of the hospital performance over the 95\% confidence intervals for the  nominal transition matrix $\bm{T}^{0}$. In particular, we sample $N=20$ transition matrices in the confidence intervals \eqref{eq:conf_unrelated} and plot the nominal performance (mortality rate versus average ICU occupancy) of all threshold policies as well as the performance for $4$ of the $N$ sampled matrices. The mortality rate for all sampled transition matrices are very close to the nominal mortality rate. The maximum \textit{relative} observed deviation from the nominal mortality rate is $8.82 \%$, with an average relative deviation of $3.84 \%$. We present more details about the statistics of the random deviations from the nominal performance in Appendix \ref{app:rand-sample}.

In Figure \ref{fig:mort_worst_bump_1}, we  compare the worst-case performance of all threshold policies with the nominal performance. For each threshold policy, we construct a worst-case transition matrix that minimizes the single-patient MDP reward and compute the hospital performance for this particular matrix and threshold policy. As we saw in the single-patient MDP experiments,  the hospital performance for $\bm{T}^{0}$ and $\bm{\hat{T}}$ are very close, again suggesting that the NMF approximation is reasonable.  As before, we consider  the three uncertainty sets: $\U_{\sf min}$,  $\U_{\sf emp}$ and $\U_{\sf sa}$. Note that $\U_{\sf min}$ and $\U_{\sf emp}$ are centered around our NMF approximation $\bm{\hat{T}}$. Under uncertainty set $\U_{\sf min}$, the mortality rate can significantly increase, with relative increases from $18 \%$ to $23 \%$. This substantial degradation occurs even though $\U_{\sf min}$ is our most-optimistic uncertainty set, with variations in the order of $10^{-4}$ from $\bm{\hat{T}}$. For worst-case matrices in $\U_{\sf emp}$ or $\U_{\sf sa}$, the mortality rate of any threshold policy increases by $40 \%$ to $50 \%$. Therefore, our worst-case  analysis (Figure \ref{fig:mort_worst_bump_1}) shows that the mortality may severely deteriorate, even for very small parameters deviations from the nominal transition matrix $\bm{T}^{0}$. Note that this is not the case in our random sample analysis (Figure \ref{fig:mort_rand_bump_1}). This suggests that not considering worst-case deviations may lead to overly optimistic estimations of the hospital performance.

As a thought experiment, suppose the decision-maker determined that the average ICU occupancy should not exceed $72 \%$. The decision-maker then chooses the threshold policy with the lowest mortality and average ICU occupancy lower than $72 \%$. Based on the nominal performance, the decision-maker will choose $\pi^{[5]}$ which proactively transfers $27.1 \%$ of the patients. However, our analysis demonstrates there exists a ``worst-case'' transition matrix that is consistent with the available data which, under the selected policy $\pi^{[5]}$, would result in a higher average ICU occupancy of $74.0 \%$. In contrast, if the decision-maker were to account for the parameter uncertainty and consider the worst-case performance in $\U_{\min}$, the decision-maker would choose $\pi^{[6]}$, which proactively transfers $10.2 \%$ of the patients and results in a worst-case average ICU occupancy of $71.9 \%$.
\begin{figure}[htb]
\begin{subfigure}{0.5\textwidth}
 \includegraphics[width=1.1\linewidth,height=11cm]{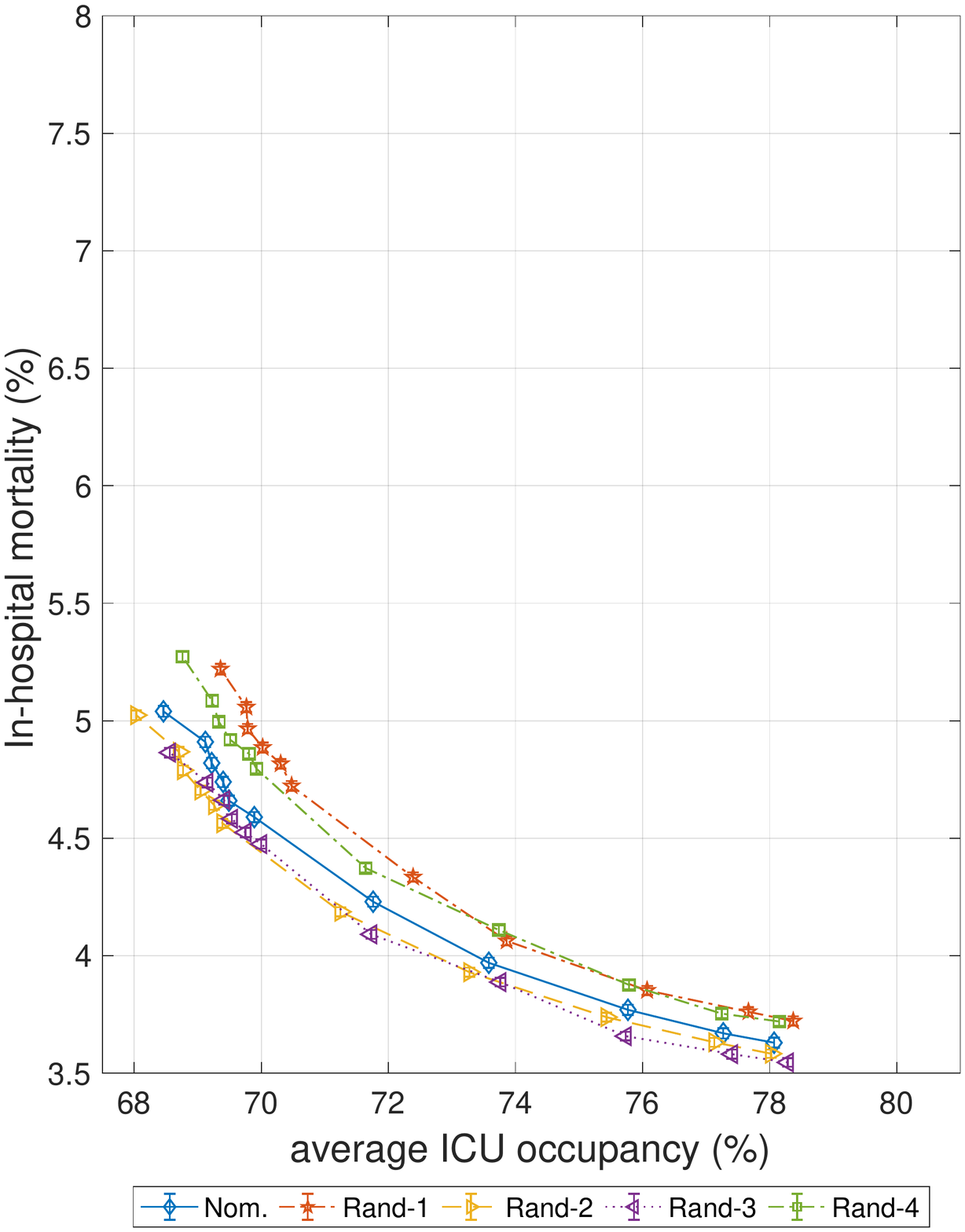}
\caption{Random samples analysis.}
\label{fig:mort_rand_bump_1}
\end{subfigure}
\begin{subfigure}{0.5\textwidth}
  \includegraphics[width=1.1\linewidth,height=11cm]{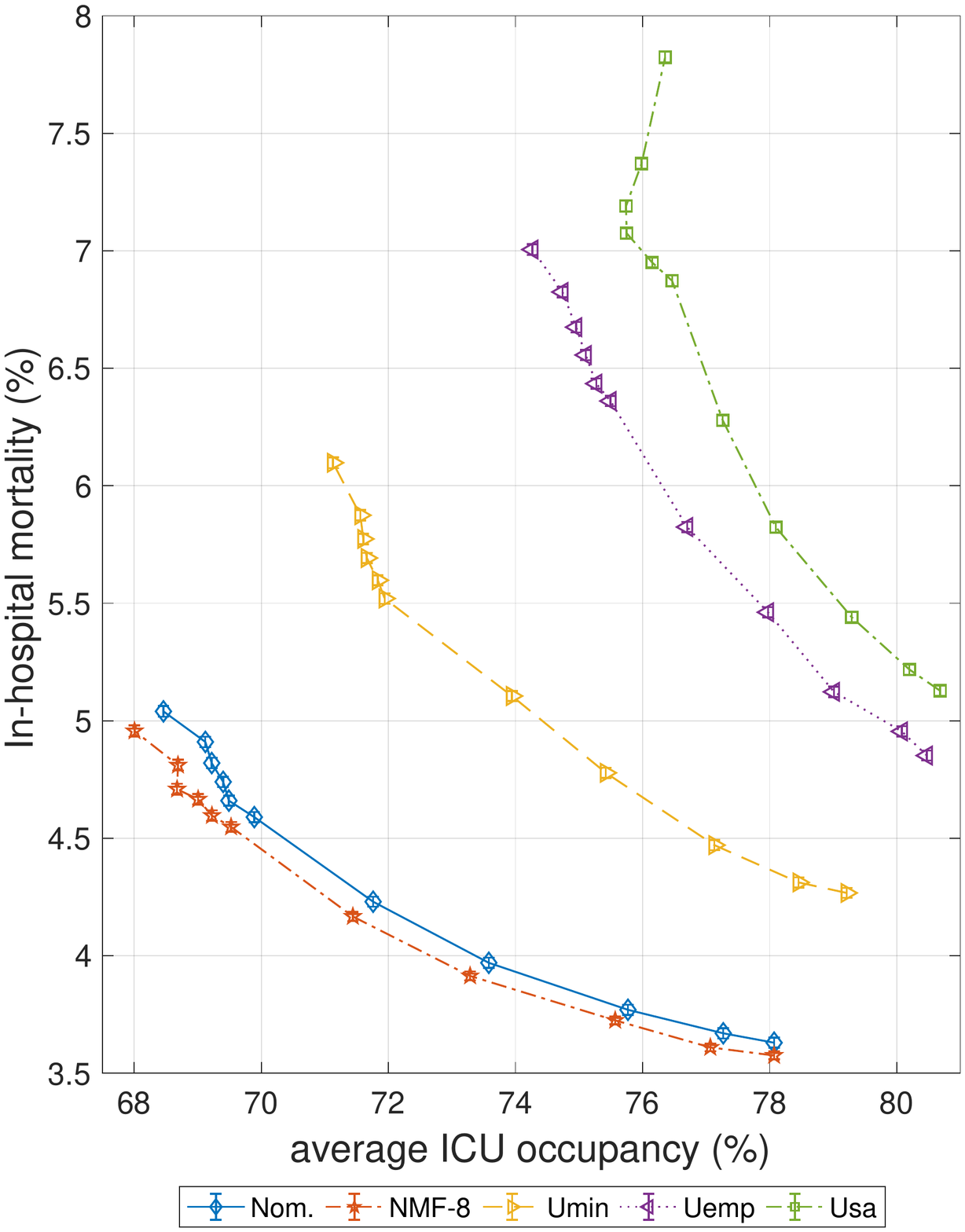}
\caption{Worst-case analysis.}
\label{fig:mort_worst_bump_1}
\end{subfigure}
\caption{In-hospital mortality of the $11$ threshold policies for the nominal estimated matrix,   randomly sampled matrices in the 95\% confidence intervals (left-hand side), and  the worst-case matrices found by our single MDP model (right-hand side). Each point corresponds to a threshold policy: the policy with highest mortality rate corresponds to threshold $\tau=11$ (top-left of each curve) and the threshold decreases until the bottom-right point of each curve, corresponding to threshold $\tau=1$. We consider the uncertainty sets $\U_{\min},\U_{\sf emp}$ and $\U_{\sf sa}$ when the rank $r=8$. On the right-hand side,  we also report the hospital mortality  rate when the transition matrix is our NMF approximation with rank $8$.}
\label{fig:mort}
\end{figure}

In general, as the threshold decreases, and proactive transfers are used more aggressively, the ICU occupancy increases while the mortality rate decreases. This behavior does not generalize to the uncertainty set $\U_{\sf sa}$. In particular, we notice in Figure \ref{fig:mort_worst_bump_1} that for $\U_{\sf sa}$, the worst-case mortality rate \textit{and} the ICU occupancy decrease from $\pi^{[11]}$ to $\pi^{[8]}$. Therefore, the (worst-case) average ICU occupancy decreases when the decision-maker decides to transfer more patients to the ICU. In principle, this could be explained by the fact that the patients with severity scores in $\{8,9,10\}$ are the sickest patients. Therefore, proactively transferring them may actually be Pareto improving. That said, we are somewhat cautious about the interpretations of $\U_{\sf sa}$. The worst-case transition matrices in $\U_{\sf sa}$ are  extreme perturbations from $\bm{T}^{0}$. For instance, the coefficients $T_{1,RL}^{0},...,T_{n,RL}^{0}$ become $T_{1,RL}^{0}- \alpha_{1},...,T_{n,RL}^{0}-\alpha_{n}, $ and the coefficients $T_{1,D}^{0},...,T_{n,D}^{0}$ become $T_{1,D}^{0} + 2 \cdot \alpha_{1},...,T_{n,D}^{0} + 2 \cdot \alpha_{n}.$ In that sense, such  coordinated, structured parameter misspecification appears unlikely. This is due to the ability to arbitrarily perturb the  coefficients of $\bm{T}^{0}$, provided that the resulting rows still form a transition kernel, rather than accounting for potential correlations across states that our factor matrix approach incorporates. Such extreme perturbations are unlikely to arise in practice, which is why we focus our attention on  the model of factor matrix uncertainty set.


{\color{black} Finally, we note that the worst-case matrices for our factor matrix uncertainty sets ($\U_{\min}$ and $\U_{\sf emp}$) are as far from the nominal estimation $\bm{T}^{0}$ as the worst-case matrices for $\U_{\sf sa}$, in terms of the $1$-norm. This is because $\U_{\min}$ and $\U_{\sf emp}$ are centered around our nonnegative matrix factorization and not around $\bm{T}^{0}$. Therefore, the differences in numerical results and insights, both for our single-patient MDP and in our hospital model, are caused by the rank-constrained nature of the factor matrix uncertainty sets, and not to a difference in the radii of the uncertainty sets. }

Underlying physiologic characteristics dictate the evolution of a patient's health. As such, it is reasonable to assume these similar medical factors manifest themselves in our model through correlated dynamics  across the different severity scores. Thus, we expect the uncertainty to be reasonably captured by a low-rank deviation from the nominal estimation $\bm{T}^{0}$. Therefore, we expect the true worst-case performance of the threshold policies to be somewhere in between the performance in $\U_{\min}$ and the performance in $\U_{\sf emp}$.

From these experiments we see that 1) ignoring parameter uncertainty may result in overly optimistic expectations of system performance, and 2) the type of parameter uncertainty (e.g. correlated or arbitrary) can have a substantial impact on the insights derived from the robust analysis.

\paragraph{Length-Of-Stay.}
\begin{figure}[htb]
\begin{subfigure}{0.5\textwidth}
 \includegraphics[width=1.1\linewidth,height=11cm]{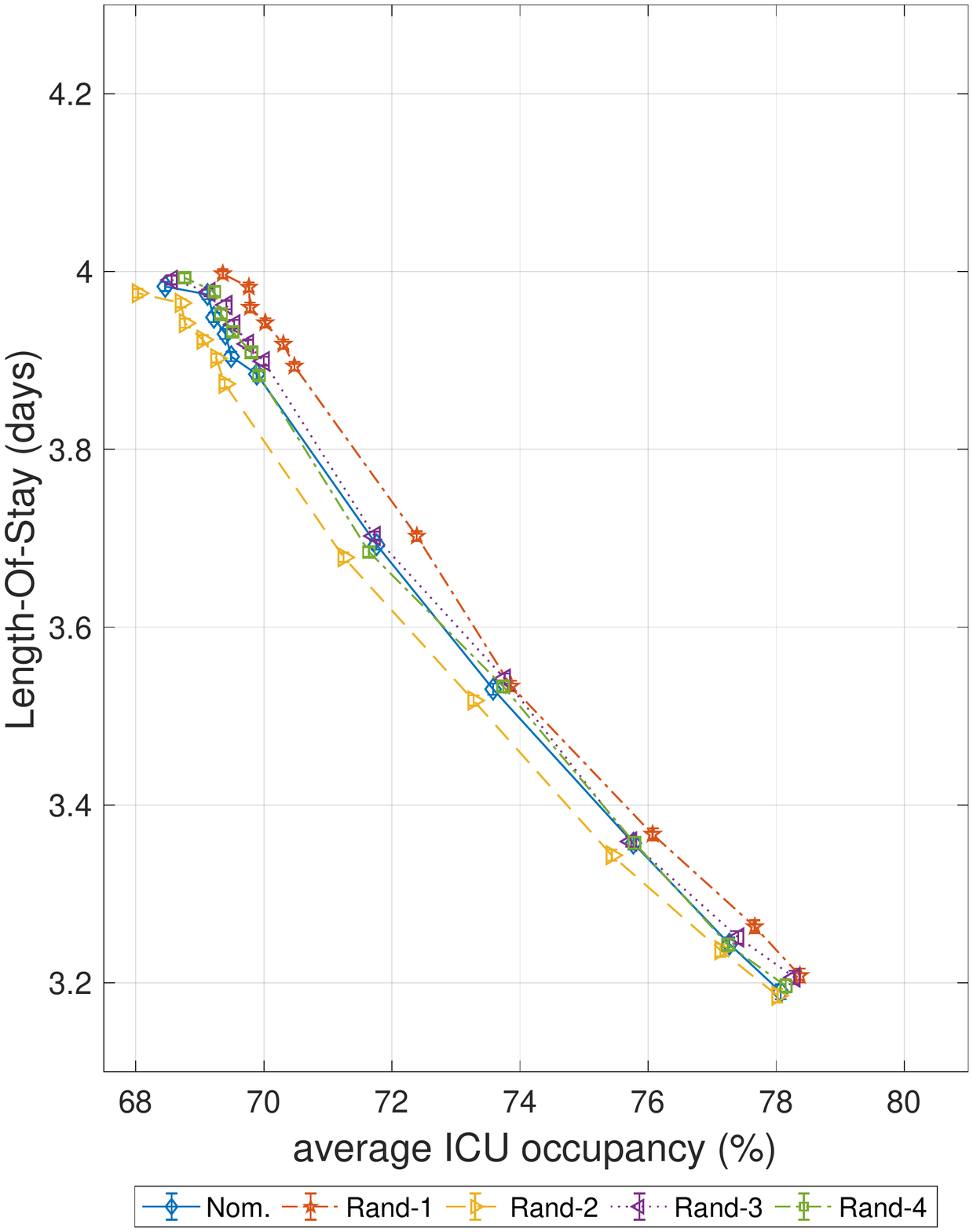}
\caption{Random samples analysis.}
\label{fig:LOS_rand_bump_1}
\end{subfigure}
\begin{subfigure}{0.5\textwidth}
  \includegraphics[width=1.1\linewidth,height=11cm]{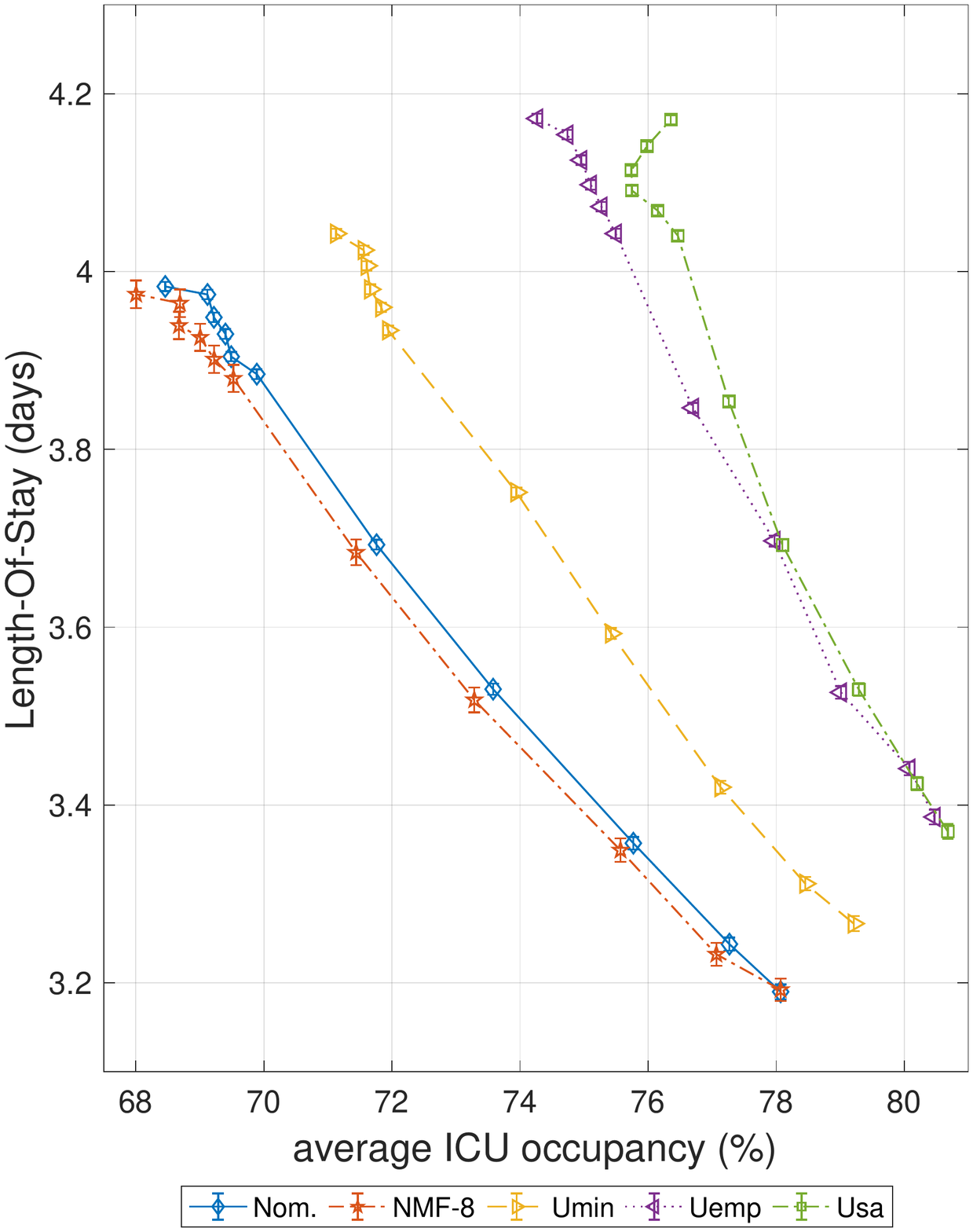}
\caption{Worst-case analysis.}
\label{fig:LOS_worst_bump_1}
\end{subfigure}
\caption{Length-Of-Stay of the $11$ threshold policies for the nominal estimated matrix,   randomly sampled matrices in the 95\% confidence intervals (left-hand side), and  the worst-case matrices found by our single MDP model (right-hand side). Each point corresponds to a threshold policy: the policy with highest LOS corresponds to threshold $\tau=11$ (top-left of each curve) and the threshold decreases until the bottom-right point of each curve, corresponding to threshold $\tau=1$. We consider the uncertainty sets $\U_{\min},\U_{\sf emp}$ and $\U_{\sf sa}$ when the rank $r=8$. On the right-hand side, we also report the hospital mortality  rate when the transition matrix is our NMF approximation with rank $8$.}
\label{fig:LOS}
\end{figure}

In the case of Length-Of-Stay (LOS), we notice similar trends as compared to the in-hospital mortality rate. Figure \ref{fig:LOS_rand_bump_1} shows the deviations in performance for some randomly sampled matrices. The average deviation ranges from $0.34 \%$ for $\pi^{[3]}$ to $1.04 \%$ of deviation for threshold $\pi^{[11]}$.  Therefore, the hospital flow seems stable as regards to parameters deviations from the nominal matrix $\bm{T}^{0}$

We compare the worst-case LOS with the nominal performance. We see that the LOS can increase by up to $2.5 \%$ in $\U_{\min}$, and up to $5.0 \%$ in $\U_{\sf emp}$ and $\U_{\sf sa}.$ The impact of worst-case parameter deviations  is less severe for the Length-Of-Stay than for the mortality rate. However, worst-case deviations are still more substantial than random deviations from the nominal transition (Figure \ref{fig:LOS_rand_bump_1}).
As for in-hospital mortality rate, in Figure \ref{fig:LOS_worst_bump_1} we notice that under uncertainty set  $\U_{\sf sa}$, it appears to be Pareto improving to be more aggressive in proactively transferring patients with threshold policy $\pi^{[8]}$ rather than $\pi^{[11]}$. However, as discussed before, we believe that  $\U_{\sf sa}$ is not able to fully capture reasonable types of uncertainty one would expect to see in practice.
{\color{black}
\subsubsection{Impact of proactive transfers on demand-driven discharges.}
As the proactive transfer policies admit more patients to the ICU than reactive policies, they may increase ICU congestion and, consequently, the number of \textit{demand-driven discharged} patients from the ICU. Such discharges are associated with worse outcomes \citep{chrush2009occupancy}. In Figures \ref{fig:DDD_nom}-\ref{fig:DDD_min}, we explore the impact of proactive transfers onto the number of patients that are demand-driven discharged.

We can see that it is possible to proactively transfer up to the top 5 severity conditions without significantly impacting the proportions of demand-driven discharged patients. We also note that in this metric, the trends we see with mortality/length-of-stay are preserved. In other words, proactively transferring a small fraction of the riskiest patients (i.e. the highest AAM scores) can improve  the average mortality rate/length-of-stay without significantly increasing the ICU occupancy or the number of demand-driven discharges.  The findings are similar with the worst-case transition matrices (Figures \ref{fig:DDD_sa}-\ref{fig:DDD_min}), with the  uncertainty set $\U_{\sf sa}$ leading to worse deterioration than our factor matrix uncertainty sets $\U_{\sf emp}$ and $\U_{\sf sa}$.
\begin{figure}[h]
\begin{subfigure}{0.24\textwidth}
 \includegraphics[width=0.9\linewidth,height=3cm]{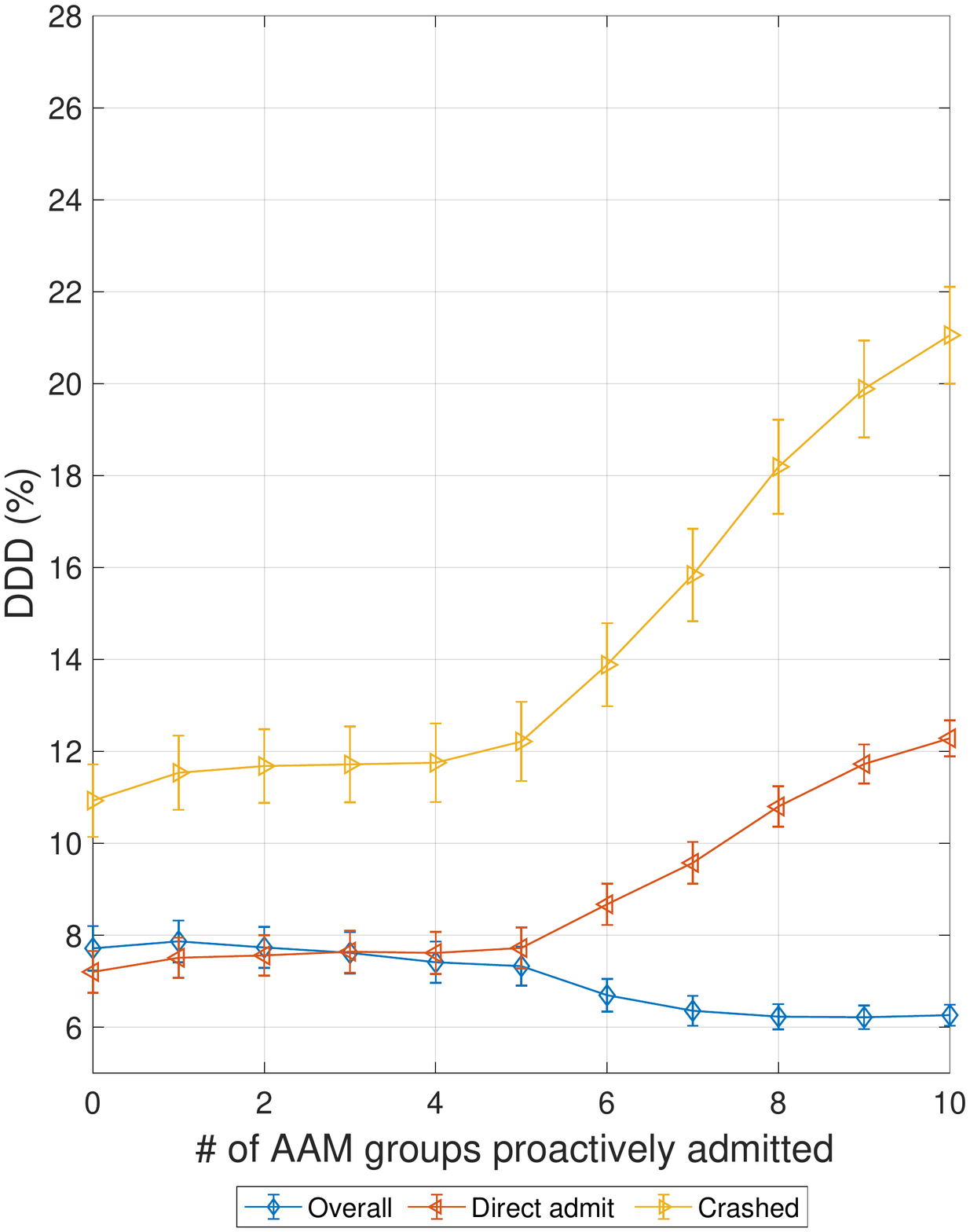}
\caption{Nominal.}
\label{fig:DDD_nom}
\end{subfigure}
\begin{subfigure}{0.24\textwidth}
 \includegraphics[width=0.9\linewidth,height=3cm]{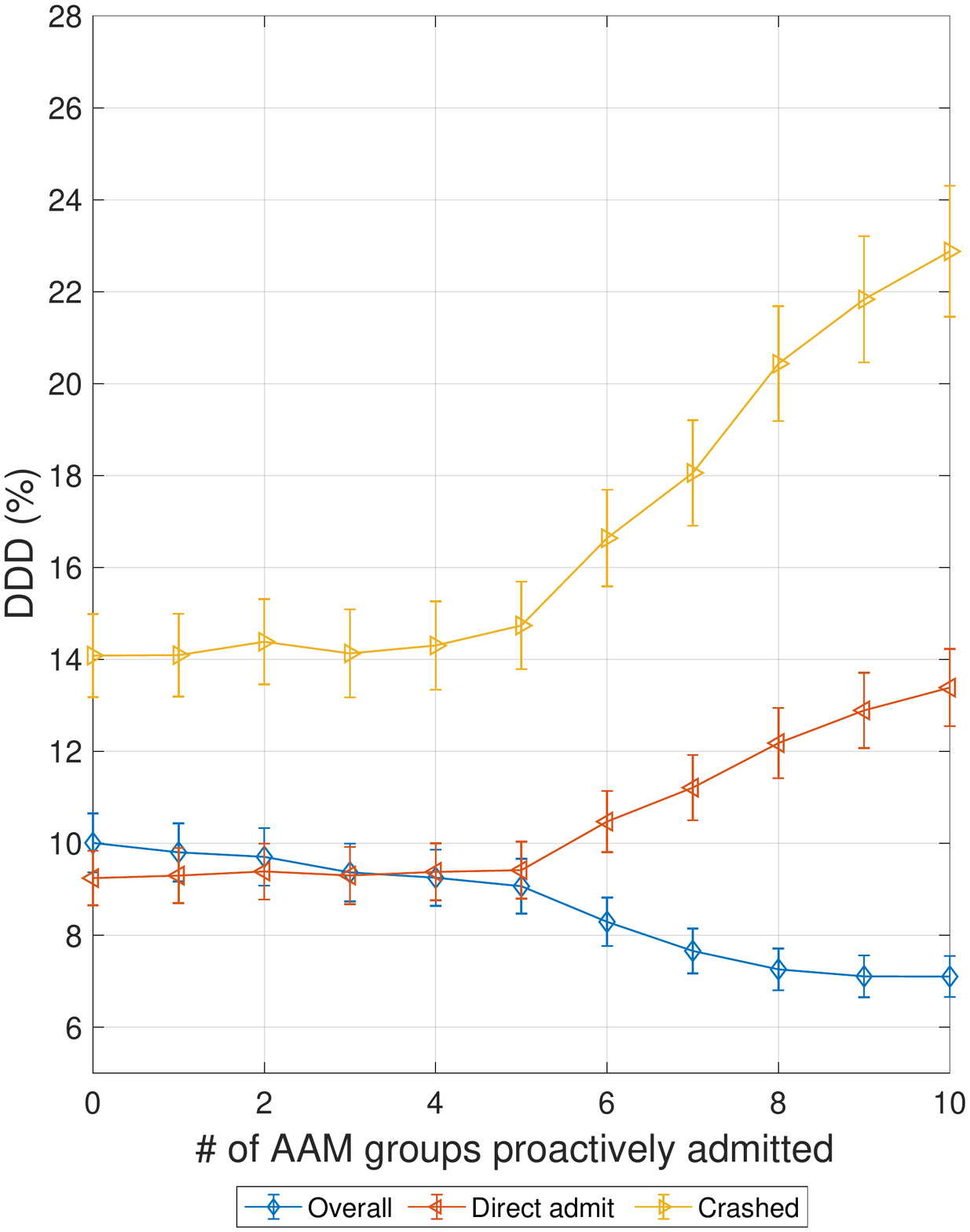}
\caption{$\U=\U_{\min}$.}
\label{fig:DDD_min}
\end{subfigure}
\begin{subfigure}{0.24\textwidth}
 \includegraphics[width=0.9\linewidth,height=3cm]{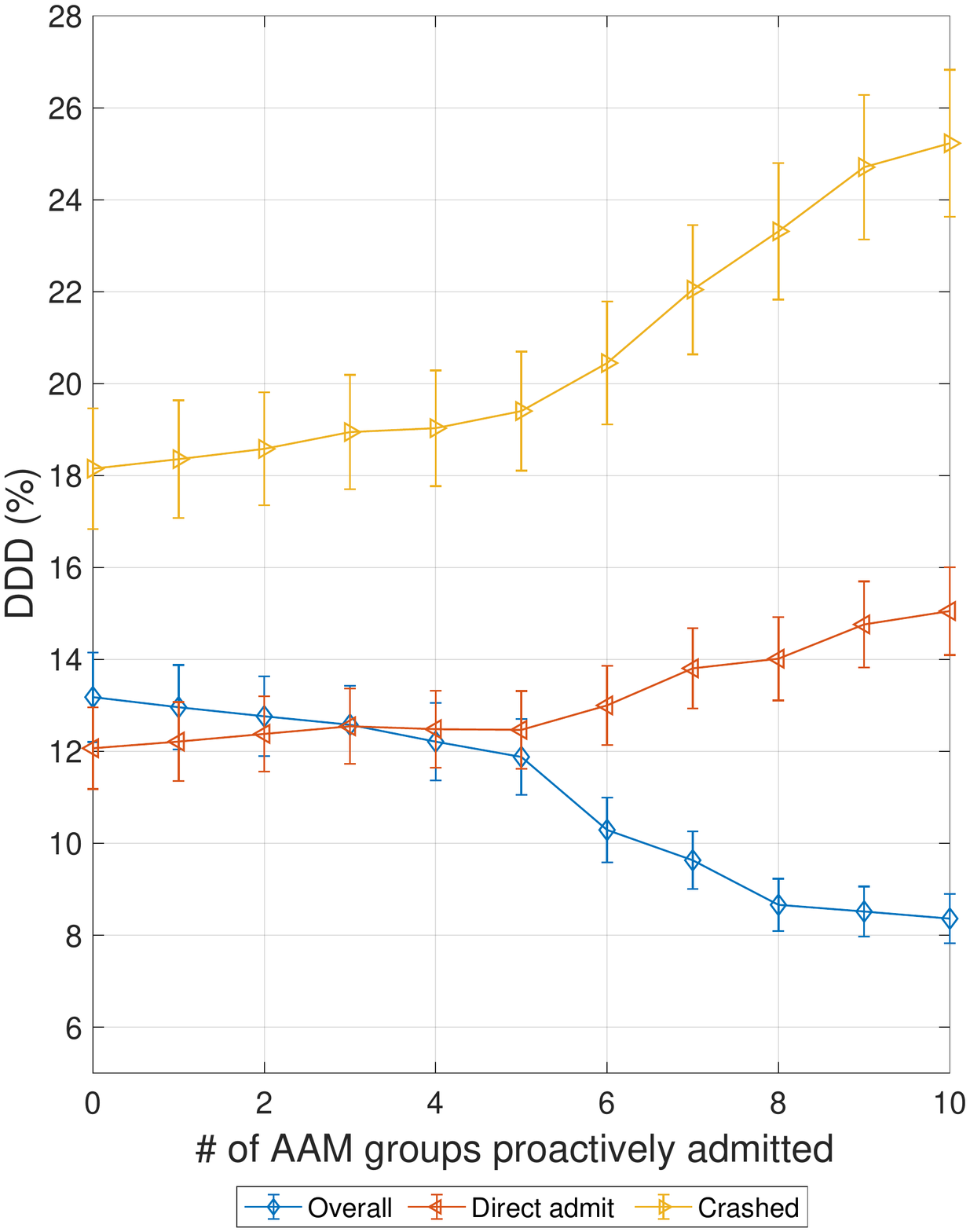}
\caption{$\U=\U_{\sf emp}$.}
\label{fig:DDD_emp}
\end{subfigure}
\begin{subfigure}{0.24\textwidth}
 \includegraphics[width=0.9\linewidth,height=3cm]{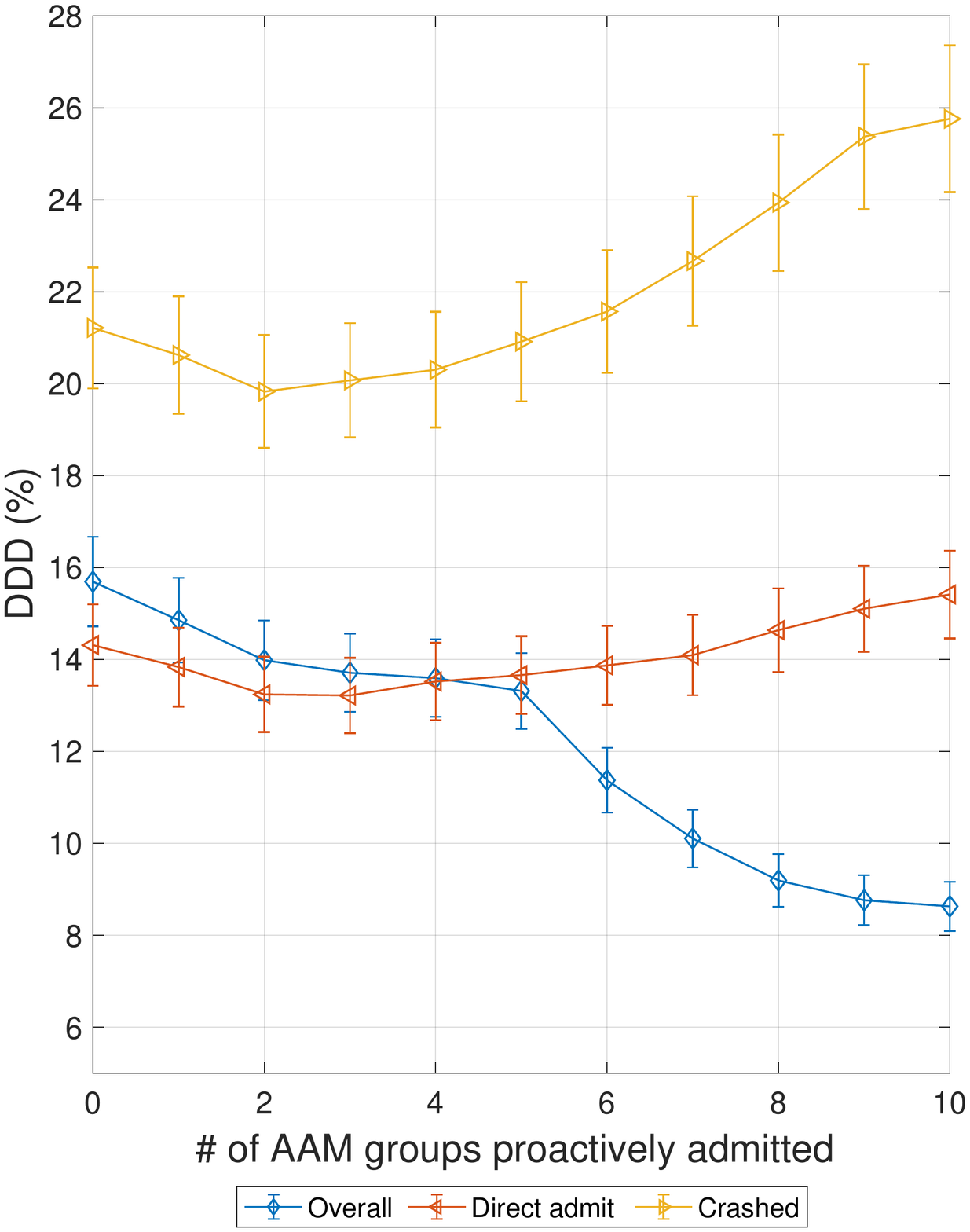}
\caption{$\U=\U_{\sf sa}$.}
\label{fig:DDD_sa}
\end{subfigure}
\caption{Percentages of Demand-Driven Discharges (DDD) among ICU transfers in terms of the number of Advanced Alert Monitor (AAM) scores proactively transferred. We present the results for our nominal transition matrix and for the worst-case transition matrices in $\U_{\sf sa}, \U_{\sf emp}$ and $\U_{\min}$.}
\label{fig:DDD}
\end{figure}

Because we do not allow proactive transfers when the ICU is full,  demand-driven discharges only occur when there is an external arrival, when a patient crashes on the ward,  or when a patient requires readmission to the ICU. Still, it would be concerning if the patients who are demand-driven discharged are among the most severe.
We find that demand-driven discharged patients were discharged from the ICU after $75$ to $85 \%$ of their ICU LOS, a duration which can often mean the patient has recovered enough to be safely transferred to a lower level of care \citep{lowery1992simulation}.

{\color{black}
\subsubsection{Impact of potential waiting in the ward}\label{sec:simu-queue} 

Our primary hospital model considered in Section \ref{sec:hosp} and the simulations thus far assumes that whenever a crashed, readmitted, or external patient arrives to a full ICU, it precipitates a demand-driven discharge patients from the ICU.
In practice, it is possible that a patient requiring ICU admission when the ICU is full would need to \textit{wait} in the ward until an ICU bed to become available. 

 We now numerically explore the impact of the possibility of waiting in the ward. In particular, we consider an alternative hospital model  where demand-driven discharges  never occur.
The dynamics of the hospital is the same as in Section \ref{sec:hosp}, except that if an ICU admission is required but the ICU is full, the new patient enters a \textit{waiting queue}. Patients are served from the queue in a First-Come-First-Serve service discipline. Note that patients can enter the queue either as (i) direct admits, (ii) crashed patients, or (iii) readmits to the ICU. Our proactive transfer policies cannot send a patient to the queue.

Every six hours, patients in the waiting queue either:
\begin{itemize}
\item Die in the queue: this happens with the same probability as that for the sickest patients (i.e. a severity score of 10), which amounts to a 6.84 \% chance of dying every six hours. This is because patients requiring ICU admission but wait in the queue are very severe.
\item Continue to wait in the queue. This would happen if there are no available ICU beds.
\item Are transferred to the ICU. This will only occur when there is an available ICU bed (e.g. after a natural discharge). Priority is given to patients who have stayed longest in the queue. To focus on the impact of the queue, we assume patients from the queue who are admitted to the ICU have similar LOS/mortality risk as if they had not been in the queue.
\end{itemize}}
 We present our results below for this alternative model of hospital dynamics.
\paragraph{Impact of proactive transfers on hospital metrics.}
We present our results for the mortality rate and the LOS in the hospital in Figure \ref{fig:mort_and_LOS_in_hospital_with_queue}, for this alternative model with a queue. The results are very similar to the primary hospital model without queue, even though the nominal metrics (mortality rate, ICU occupancy and LOS) are slightly worse. Despite these small quantitative differences, the qualitative insights are consistent. Threshold policies have the potential to improve in-hospital mortality rates and length-of-stay, at the price of an increase in ICU occupancy. Proactively transferring a small proportion of the patients (in our simulations, the top 10 \% sickest patients) does not significantly increase the ICU occupancy. The results in worst-case metrics confirm these trends, with the $s,a$-rectangular uncertainty showing similar anomalous results  as with the primary hospital model without queue.
 \begin{figure}[h]
\begin{subfigure}{0.5\textwidth}
 \includegraphics[width=1.1\linewidth,height=11cm]{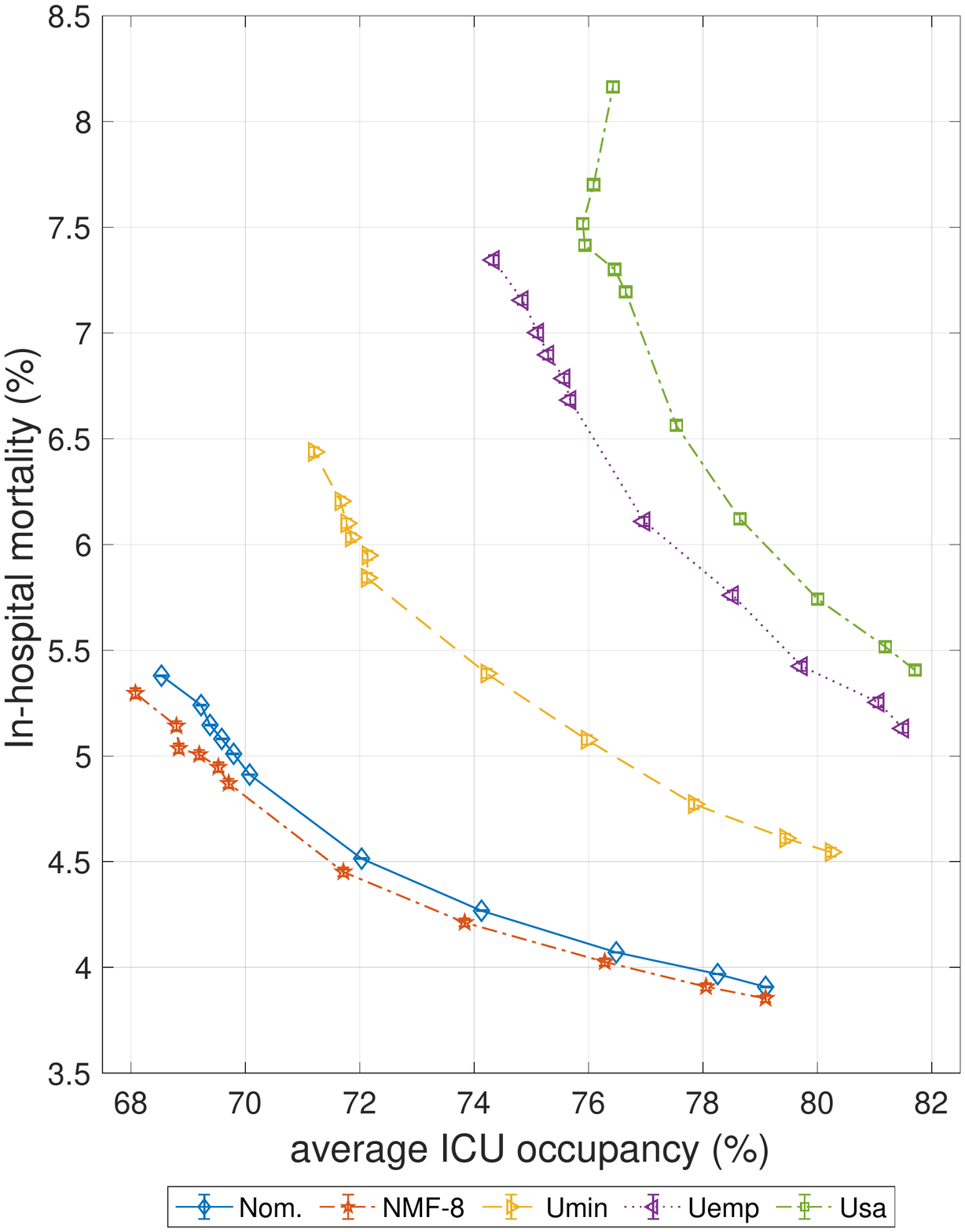}
\caption{In-hospital Mortality}
\label{fig:mort_nom_with_queue}
\end{subfigure}
\begin{subfigure}{0.5\textwidth}
 \includegraphics[width=1.1\linewidth,height=11cm]{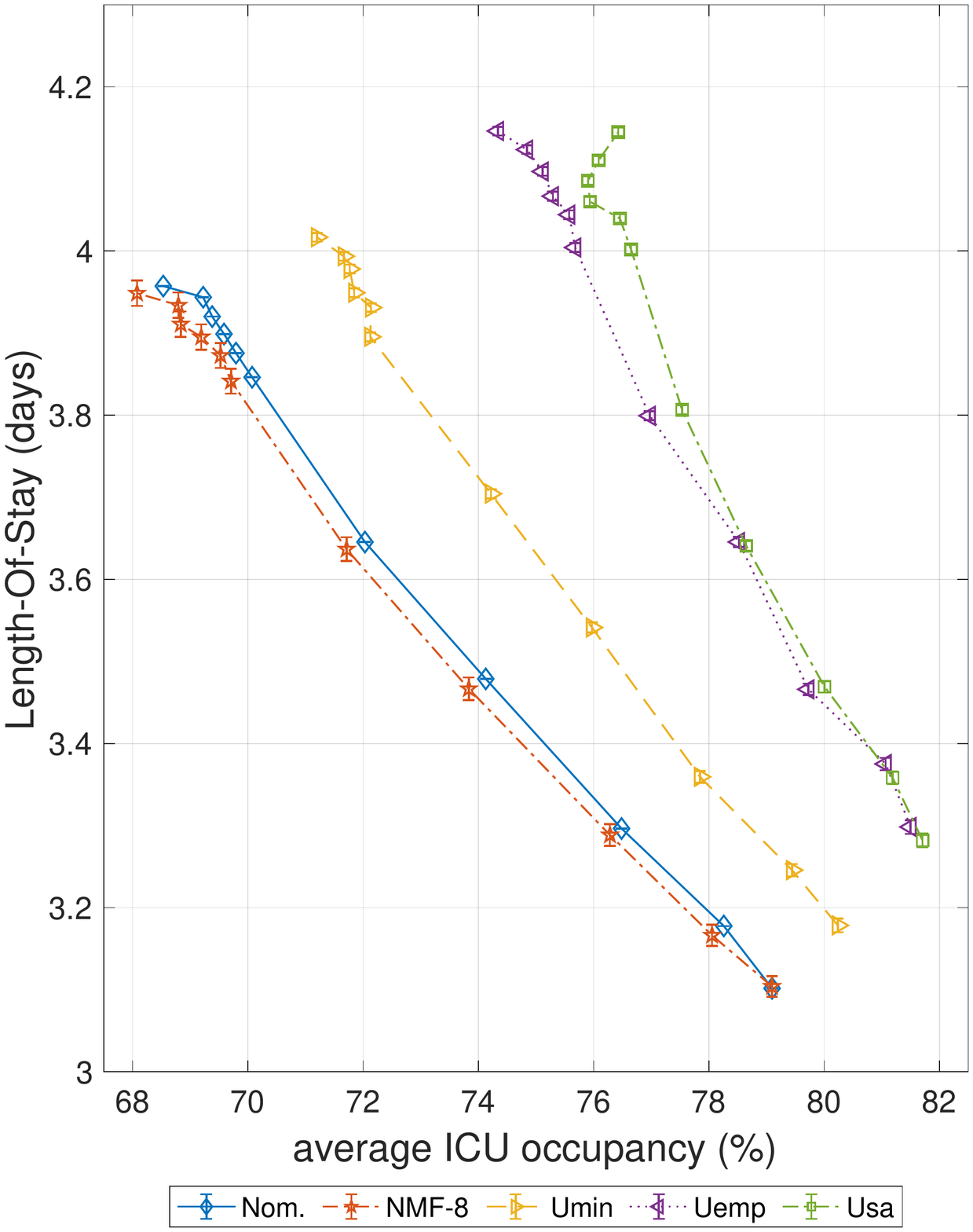}
\caption{Length-Of-Stay}
\label{fig:LOS_nom_with_queue}
\end{subfigure}
\caption{Average mortality and LOS in the hospital for the 11 threshold policies, in our hospital model with a queue.}
\label{fig:mort_and_LOS_in_hospital_with_queue}
\end{figure}
{\color{black}
\paragraph{Impact of proactive transfers on  waiting queue metrics.} We also consider the probability of entering the queue across different patient types (direct admits, crashed or readmitted),  the average LOS in the ICU,  the average length of the queue, and  the average LOS in the queue.

In Figure \ref{fig:proba_types_in_queue}, we notice that the (nominal) probability  a patient (direct admit, crashed or readmitted) enters the queue is increasing as more patients are proactively transferred to the ICU. This is because the ICU occupancy increases with the number of proactive transfers. Crucially, we see that the increase in probabilities does not increase substantially until proactive transfers become quite aggressive (i.e. threshold higher than or equal to 5). Therefore, proactively transferring up to the first 5 severity conditions does not result in significant changes in the number of patients in the queue. For the sake of brevity, we present our worst-case numerical experiments for this metric in Appendix \ref{app:proportion-and-patient-types}.

Additionally, in Figure \ref{fig:avg_LOS_in_ICU} we note that proactively transferring patients also improves the average length-of-stay in the ICU. In particular, even though the average occupancy of the ICU is increasing when more patients are proactively transferred (see Figure \ref{fig:mort_and_LOS_in_hospital_with_queue}), the patients who are admitted in the ICU tend to have shorter ICU LOS. This is because (i) proactively admitted patients have shorter ICU LOS  than crashed patients  from the same severity score, and (ii) as we proactively transfer more patients,  patients enter the ICU from lower severity conditions and stay less time in the ICU than patients with more critical severity conditions. Therefore, even though the ICU occupancy increases when more patients are proactively transferred, it is also the case that more patients can be admitted to the ICU (as patients in the ICU stay shorter periods of time).

As a consequence of shorter lengths of ICU visits, we observe that proactively transferring more patients has the beneficial effect of decreasing the average length of the queue and the average waiting time in the queue. (see Figures \ref{fig:avg_length}  and \ref{fig:avg_LOS_in_queue}). This may seem counter-intuitive given Figure \ref{fig:proba_types_in_queue} (where we see that more patients enter the queue as we proactively transfer more patients). However, following Figure \ref{fig:avg_LOS_in_ICU}, we see that the ICU LOS is decreasing, and therefore, patients can be admitted to the ICU more often (than when fewer patients are proactively transferred). This is an important beneficial aspect of proactive transfer policies, as delays in patients admissions to the ICU are associated with worse mortality \citep{chalfin2007impact}.
The simulations with worst-case transition matrices show similar decrease in average queue length and waiting time as we proactively transfer more patients.
 \begin{figure}[h]
\begin{subfigure}{0.24\textwidth}
 \includegraphics[width=1.1\linewidth,height=5cm]{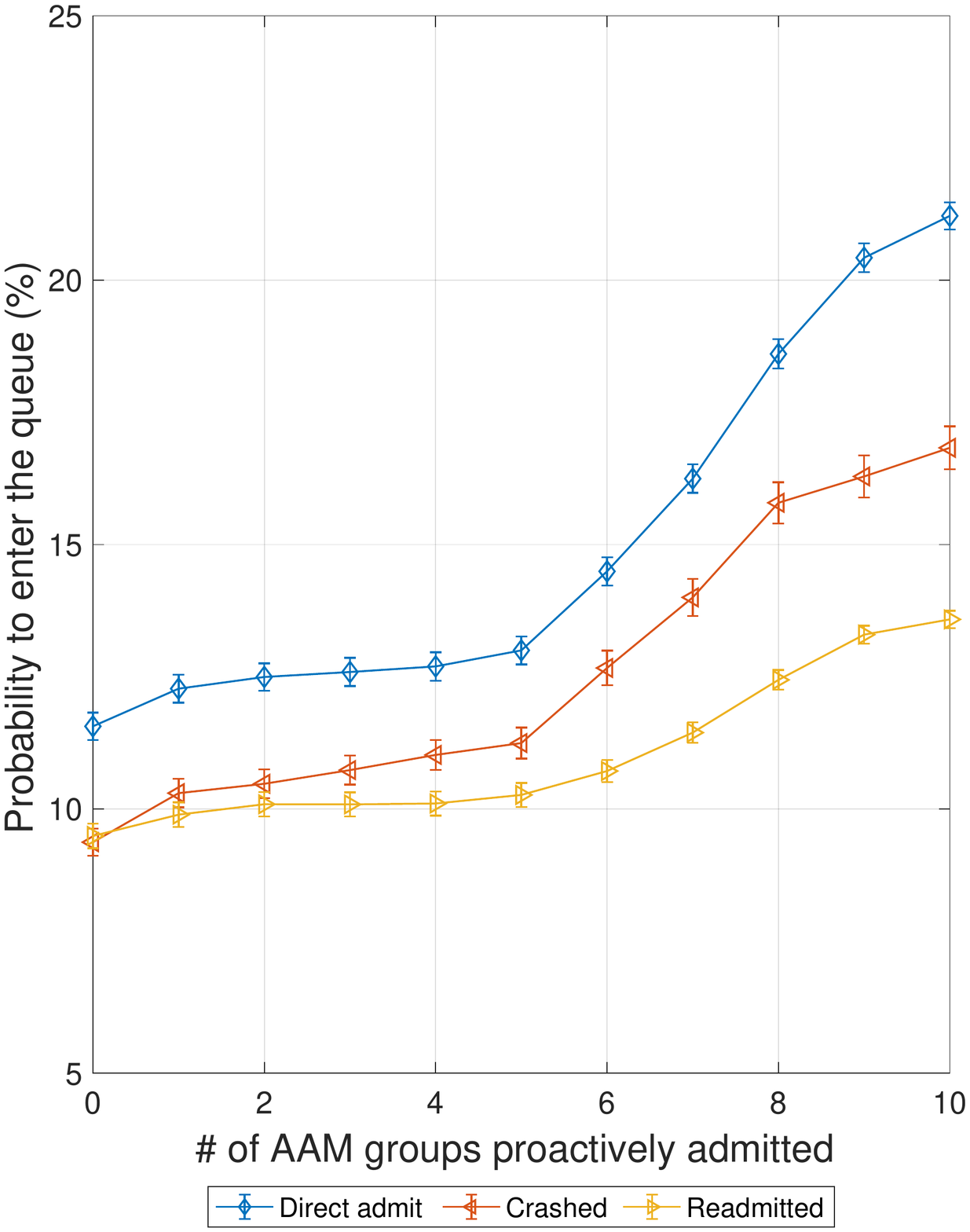}
\caption{Probability to enter the queue.}
\label{fig:proba_types_in_queue}
\end{subfigure}
\begin{subfigure}{0.24\textwidth}
 \includegraphics[width=1.1\linewidth,height=5cm]{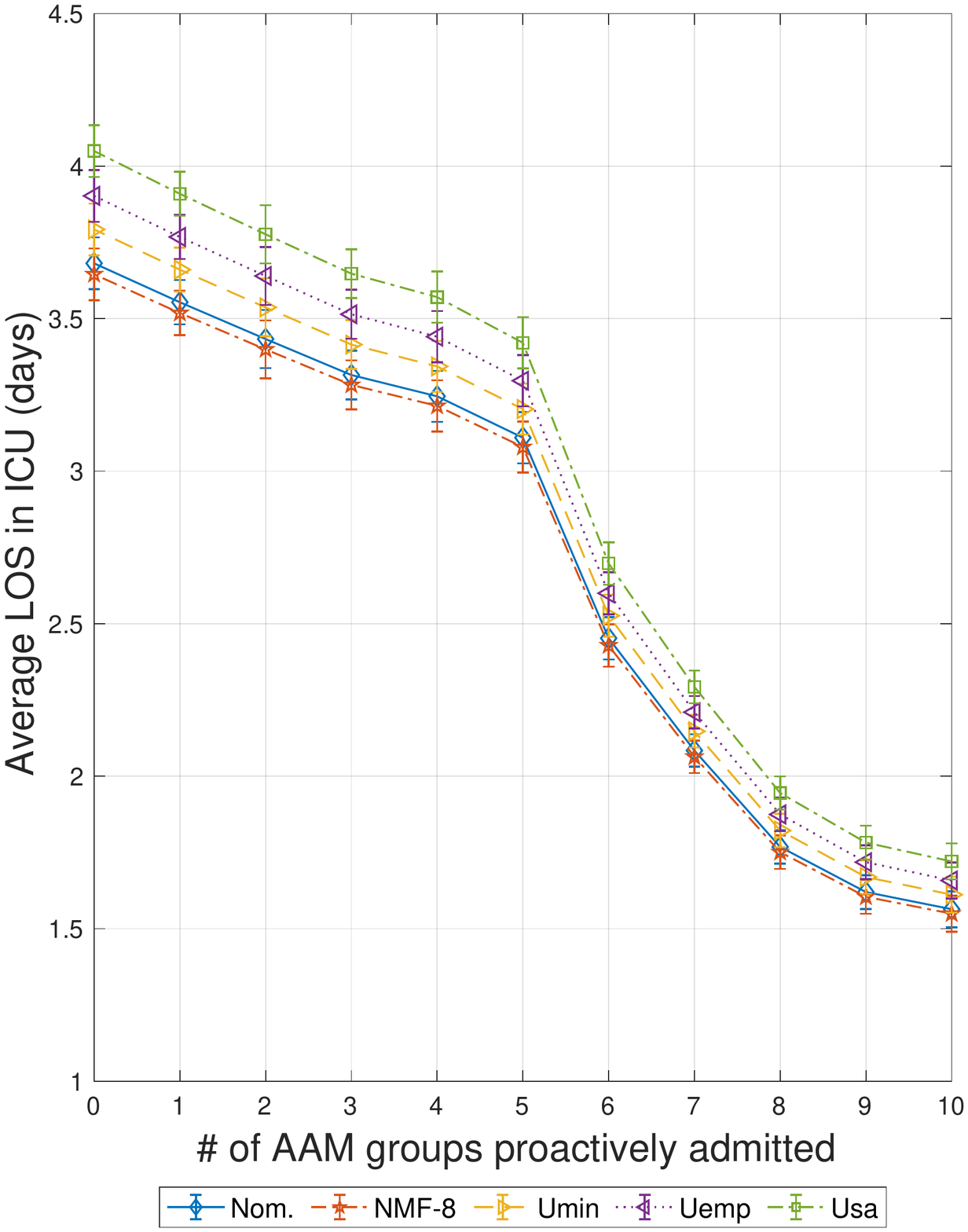}
\caption{Average LOS in ICU.}
\label{fig:avg_LOS_in_ICU}
\end{subfigure}
\begin{subfigure}{0.24\textwidth}
 \includegraphics[width=1.1\linewidth,height=5cm]{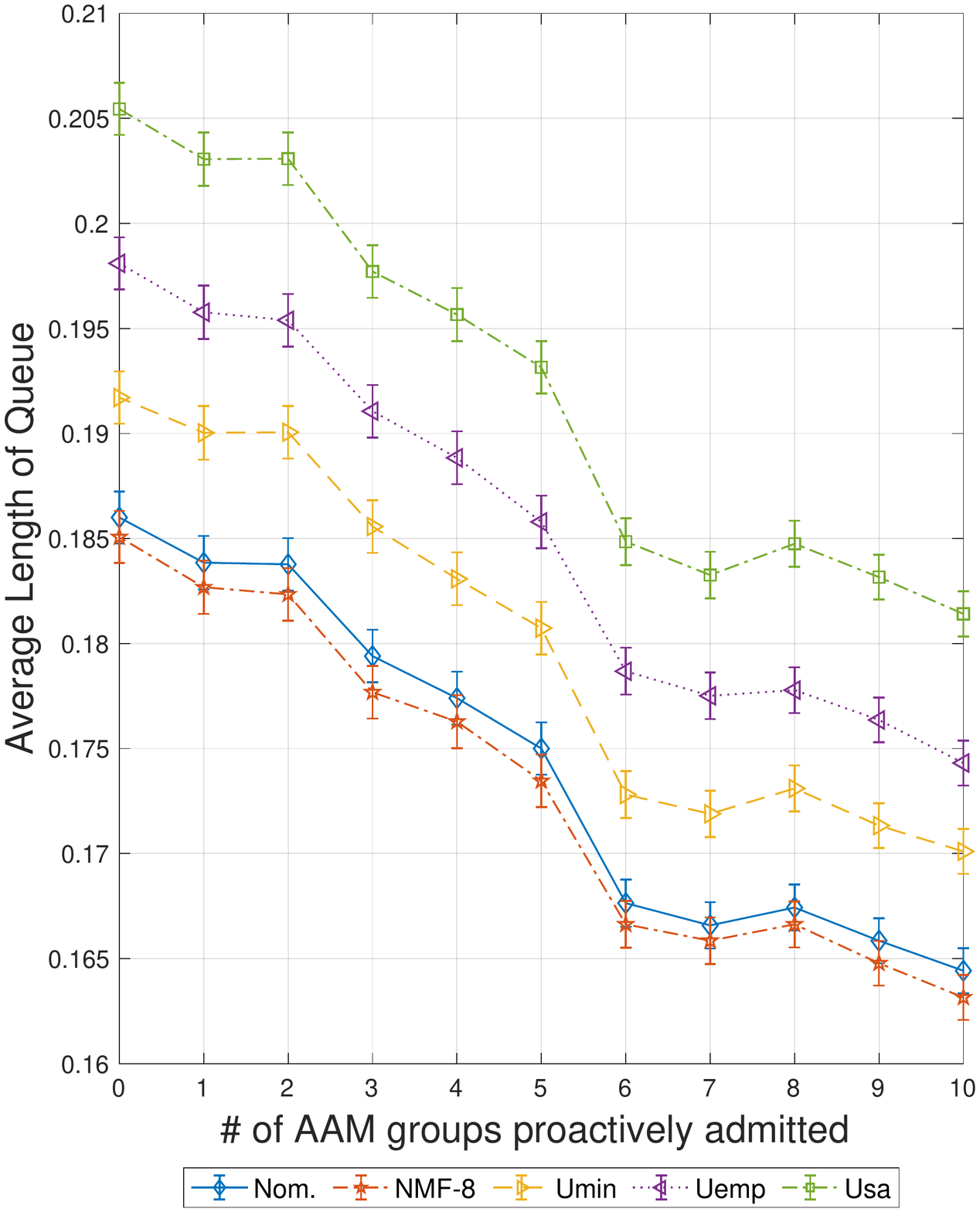}
\caption{Average length of the queue.}
\label{fig:avg_length}
\end{subfigure}
\begin{subfigure}{0.24\textwidth}
 \includegraphics[width=1.1\linewidth,height=5cm]{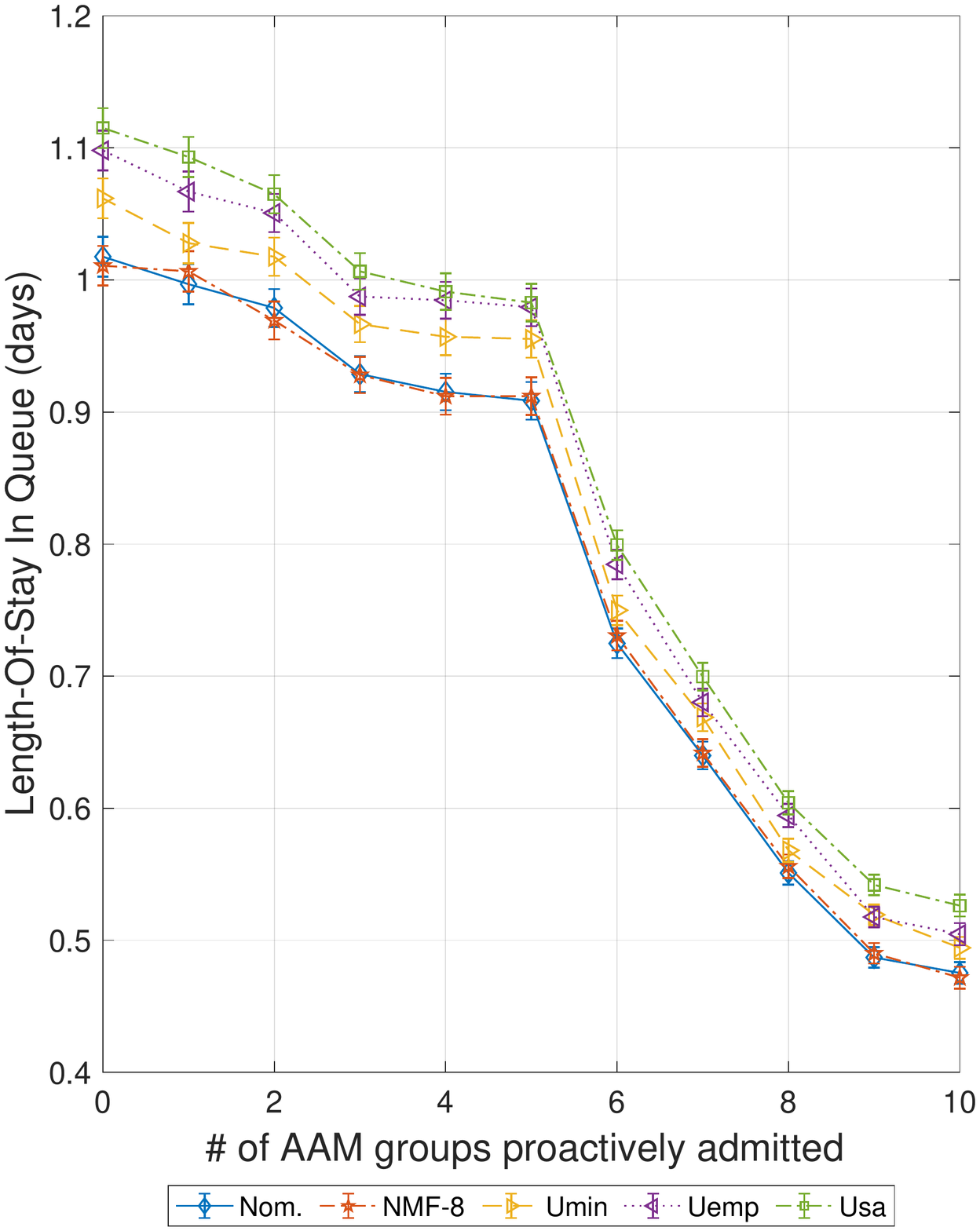}
\caption{Average LOS in the queue}
\label{fig:avg_LOS_in_queue}
\end{subfigure}
\caption{For our hospital with waiting queue, four different performance metrics in terms of the number of severity scores proactively transferred.}
\label{fig:queue_metrics}
\end{figure}
Overall,  this alternate hospital model with a waiting queue demonstrates similar beneficial impact of proactive transfer policies as in the primary hospital model without queue.
Note that in practice, we expect the discharge policy of the hospital to be a hybrid of these two models (some waiting and some demand-driven discharges). We have investigated these two extreme scenarios and found that in both cases, proactively transferring the sickest patients may decrease the mortality rate and LOS of the hospital, without significantly increasing the ICU occupancy.
}
{\color{black}
\subsection{Practical guidance.}
We now summarize here some practical implications and key take-aways of our numerical and theoretical analysis.
\begin{itemize}
\item \textbf{Impact of proactive transfer.}
Proactively transferring patients to the ICU may improve the average mortality rate and LOS, at the price of increasing the ICU occupancy (Figure \ref{fig:mort} and \ref{fig:LOS}) and Demand-Driven Discharge (DDD) rates (Figures \ref{fig:DDD_nom}-\ref{fig:DDD_sa}). Transferring only the sickest patients (here, the  top 10 \% of severity scores) does not lead to significant increases in ICU occupancy and DDD rates. The same conclusion holds when the hospital never demand-driven discharges patients from the ICU (Figures \ref{fig:mort_nom_with_queue}-\ref{fig:LOS_nom_with_queue}). In this case, proactively transferring patients may even improve the average time spent waiting in the queue (Figures \ref{fig:proba_types_in_queue}-\ref{fig:avg_LOS_in_queue}).

\item \textbf{Worst-case vs. random deviations.} The hospital metrics may significantly deteriorate for adversarial deviations in the transition parameters (Figures \ref{fig:mort_worst_bump_1} and \ref{fig:LOS_worst_bump_1}). In stark contrast, the hospital metrics look fairly stable under sensitivity analysis  (Figures \ref{fig:mort_rand_bump_1} and \ref{fig:LOS_rand_bump_1}).  This suggests that the naive approach of randomly perturbing the transition parameters in order to verify the effectiveness of our decisions may be misleading, as it does not take into account all potential events, some of which may have particularly adverse consequences even when the magnitude of the perturbation is small.
    
\item \textbf{Unrelated vs. Related Uncertainty sets.}  The deterioration of the hospital performance varies depending on the model of uncertainty; see Figures \ref{fig:mort_worst_bump_1} and  \ref{fig:LOS_worst_bump_1}. In particular, an unrelated model of uncertainty ($\U_{sa}$) leads to both more extreme deterioration in performance as well as anomalous observations -- e.g. that proactively transferring some patients may decrease the average ICU occupancy --  than related models of uncertainty ($\U_{\min}$ and $\U_{\sf emp}$). As the worst-case transition matrices for both types of uncertainty sets are similarly distanced from the nominal estimation $\bm{T}^{0}$ (as measured by the $1$-norm), the difference in optimal robust policies can be attributed to the rank-constrained nature of $\U_{\min}$ and $\U_{\sf emp}$. This lends additional support for the factor matrix uncertainty set being a more appropriate model of uncertainty for healthcare applications.
\end{itemize}
\paragraph{Practical insights.} Based on our theoretical and numerical analysis, our practical insights can be summarized as follows:

\vspace{1mm}
\noindent
\textit{Threshold policies can be very effective to guide decisions for proactive transfers as they are 1) easy to implement and 2) have good theoretical and numerical performance. These properties hold when there is no parameter uncertainty as well as when data challenges introduce parameter uncertainty. Our results suggest that when there is parameter uncertainty (e.g. due to limited data and/or unobserved covariates) providers should be slightly more aggressive in their transfer policies as compared to when there is no uncertainty.}
}
}
\section{Conclusions and Discussion.}
Interest in preventative and proactive care has been growing. With the advancements in machine-learning, the ability to conduct proactive care based on predictive analytics is quickly becoming a reality. In this work, we consider the decision to proactively admit patients to the ICU based on a severity score before they suffer a sudden health deterioration in the ward and require even more resources. In practice, an early warning system alert could trigger many potential interventions such as placing the patient in an evaluation state where admission decisions could be made from. While a threshold policy for proactive admission is a simplification of what could happen in practice when an alert is triggered, our analysis facilitates the derivation of valuable insights on the performance of this simple class of transfer policies and the impact of parameter uncertainty. We explicitly account for parameter uncertainty that arises naturally in practice due to the need to estimate model parameters from finite, real data which may also suffer from biases introduced by unobservable confounders. Since the severity scores are likely influenced by common underlying medical factors, we introduce a robust model that accounts for potential correlations in the uncertainty related to different severity scores. Under mild and interpretable assumptions, our model shows that the optimal nominal and optimal robust transfer policies are of threshold type, and that the optimal robust policy transfers more patients than the optimal nominal one.
Our extensive simulations show that not accounting for parameter misspecification may lead to overly optimistic estimations of the hospital performance, even for very small deviations. Moreover, we find that unrelated uncertainty may lead to extreme perturbations from the nominal parameters and unreliable insights for the impact of threshold policies on the patient flow in the hospital. Our work suggests that it is crucial for the decision-makers to account for parameters uncertainty when basing their decisions on  predictive models where some parameters are estimated from  real data.

One limitation of our work is the choice of worst-case as a relevant metric for the decision-maker. While worst-cases performance may be unlikely in practice, it is worth noting that the resulting parameter values for some worst-case performance are in the confidence intervals and therefore are as likely as the nominal parameters. Moreover, in the field of healthcare operations where the goal is to save the lives of the patients, it is still relevant to obtain an estimation of the potential deterioration of the performance of the hospital,
especially if the deviation from the nominal parameters is small (i.e., in the confidence intervals).
We would like to highlight that our work gives insight on potential misestimations of the metrics (average in-hospital mortality rate, length-of-stay and ICU occupancy) on which the physicians may base their decisions of a transfer policy.
More specifically, we provide a tool to estimate worst-case deterioration, within the confidence intervals given by some statistical estimators. It is to the merit of the physicians to decide what levels of risks are acceptable.

Another limitation is tied to the ability of the severity score  to fully capture the patient potential health deterioration.
We must recognize that the impact of proactive transfer policies will be highly dependent on the quality of the evaluation of the severity scores.
As such, proactive transfer policies could also be beneficial if based on other metrics, such as LAPS2, MEWS or others, as long as these scores accurately describe a predicted potential patient health deterioration.

There are various interesting directions for future research that arise from our work. For instance, one could consider various levels of actions for the proactive policies, ranging from a simple alert to the physicians for more continuous monitoring, to an immediate ICU transfer (the  action considered in this paper). Moreover, the proactive transfer policies considered in this work do not account for the number of empty beds in the ICU.
 One could consider \textit{adaptive thresholds}, varying with the number of free beds in ICU. While \cite{ICU-wenqi} shows with simulations that the performance of such adaptive threshold policies are comparable to the non-adaptive ones, it could be of interest to investigate the theoretical guarantees of such adaptive policies in the framework of our single-patient MDP. Finally, given the vast amount of patients trajectories available in our dataset, one could utilize recent methods from the \textit{off-policy evaluation} literature (see \cite{Kallus-1} for a review) to obtain a model-free performance estimator, in contrast to our model-based analysis using our single-patient robust MDP.
\small
\bibliographystyle{plainnat} 
\bibliography{ICU} 

%
%
%
\normalsize
\appendix
\section{Details about the hospital model of Figure \ref{fig:hosp_model}.}\label{app:detail_hosp}
We give details on the hospital model from Figure \ref{fig:hosp_model}, as introduced by the authors of \cite{ICU-wenqi}. As in \cite{ICU-wenqi}, the parameters of the model were calibrated using sample means across all $21$ hospitals and/or estimates from regression models.
\paragraph{Summary statistics.}
 {\color{black} The patient cohort is 53.80 \% female, the average age is 67.34 years (standard deviation (std): 17.71).  The average mortality is 3.2 \% (9.5 \% for patients who entered the ICU, 2.5 \% for patients who never entered the ICU). The mean LOS is 90.5 hours (std: 135.2, mean of 149.1 hours for patients who entered the ICU and mean 81.0 for patients who never entered the ICU).  14.2 \% of all hospitalizations were eventually admitted to the ICU.}  In order to have a more accurate estimate for the dynamics of the most severe patients, the number of patients in each severity group is nonuniform. 
\paragraph*{Direct admits.} The arrivals of the patients who are directly admitted to the ICU follows a non-homogeneous Poisson process. The empirical arrival rates are estimated using $12$ months of data across all $21$ hospitals. The LOS of a direct admit patient is log-normally distributed with mean $1/\mu_{E}$ and standard deviation $\sigma_{E},$ and a proportion $p_{E}$ (following a distribution with density $f_{p_{E}}$) of this LOS is spent in the ICU, while the remaining time is spent in the ward. The rate of readmission to the ICU is $\rho_{E}$. At the end of this LOS, the patients are discharged with probability $1-d_{E}$. The value of the parameters are the following: $d_{E}=9.41 \%, \rho_{E}= 15.76 \%, 1/\mu_{E}=5.49$ (days), $\sigma_{E}=5.71$ (days), $\E[p_{E}]=50.79 \%.$ The density $f_{p_{E}}$ is the empirical distribution derived from the dataset.
\paragraph*{Transfer from the ward.}
These patients can be divided into the patients who have versus have not already been to the ICU.
Consider the patients who have never been to the ICU. A patient arrives in the ward with a severity score of $i \in \{1,...,n\}$ following a non-homogeneous Poisson process. Every $6$ hours, s/he then transitions to another risk score $j$ with probability $T^{0}_{ij}$, or s/he may 	`crash' and require ICU admission, recover and leave the hospital, or die. After a patient has crashed, a LOS is chosen which is log-normally distributed with mean $1/\mu_{C}$ and standard deviation $\sigma_{C},$ and a proportion $p_{W}$ (following a distribution with density $f_{p_{W}}$) of this LOS is spent in the ICU, while the remaining proportion $1-p_{W}$ of time is spent in the ward. At the end of this LOS, the patient is discharged with probability $1-d_{C}$. If there are no available beds in the ICU when a crashed patient requires an admission, the ICU patient with the shortest remaining service time in the ICU will be discharged, and this is called a ``demand-driven discharge''. Such a patient will have a readmission rate of $\rho_{D}$, higher than the readmission rate $\rho_{C}$ of ward patients who were naturally discharged from the ICU after finishing their service time in the ICU. In particular, we have the following values, estimated through empirical averages in our dataset: $\rho_{C}= 16.88 \%, \rho_{D}=18.13 \%, 1/ \mu_{C}=12.54$ (days), $\sigma_{C}=10.13 $ (days), $\E[p_{W}] = 46.92 \%, $ and $d_{C} = 57.28 \%$. Similarly, the density $f_{p_{W}}$ is derived as the empirical distribution from the dataset.
\paragraph*{Proactive transfer.}
Every $6$ hours, the doctors might perform a proactive transfer and transfer a patient in the ward to the ICU, if there is an available bed. When a patient is proactively transferred, the hospital LOS is log-normally distributed with mean $1/\mu_{A,i}$ and standard deviation $\sigma_{A,i}$, while a proportion $p_{W} \sim f_{p_{W}}$ of this LOS is spent in the ICU. The patient will then survive the hospital discharge with a probability $1-d_{A,i}$. If this patient is naturally discharged from the ICU, the readmission rate is $\rho_{a,i}=\rho_{C}$, otherwise it is $\rho_{D}$.
We indicate below the proportion, the mortality rate and the LOS related to each $n=10$ severity scores.
\begin{center}\label{tab:repartition-severity-score}
\begin{table}[h]
\begin{tabular}{ |c|cccccccccc |}
 \hline
Severity score $i$ & 1 & 2 & 3 & 4 & 5 & 6 & 7 & 8 & 9 & 10 \\
\hline
Proportion $(\%)$ & 17.6 & 20.3 & 20.0 & 14.9 & 16.9 & 2.2 & 2.0 & 2.0 & 2.0 & 2.0 \\
 \hline
Mortality $d_{A,i}$ $(\%)$ & 0.01 & 0.02 & 0.04 & 0.05 & 0.11 & 0.18 & 0.28 & 0.39 & 0.70 & 6.84 \\
 \hline
 LOS average $1/\mu_{A,i}$ (days) & 0.85 & 0.91, & 0.97 & 1.04, & 1.17 & 1.36 & 1.45 & 1.57 & 1.85 & 3.77 \\
 \hline
  LOS std $\sigma_{A,i}$ (days) & 0.68 & 0.74 & 0.78 & 0.84 & 0.95 & 1.10 & 1.17 & 1.27 & 1.50 & 3.04 \\
 \hline
\end{tabular}
\caption{Statistics of patients across the $10$ severity scores.}
\end{table}
\end{center}
\section{Proof of Lemma \ref{app-lem-Vinfty}}\label{app-lem-Vinfty}
\begin{proof}

We will prove that for any policy $\pi$ and for any transition matrix $\bm{T}$, for all severity score $i \in [n]$, the value vector $\boldsymbol{V}$ of policy $\pi$ satisfies
$V_{i} \leq  r_{W}+\lambda \cdot r_{RL}.$
Let $ i \in [n]$.  By definition, $V_{i}$ is the expected infinite-horizon reward starting from state $i$:
$V_{i} = E^{\pi, \boldsymbol{T}} \left[ \sum_{t=0}^{\infty} \lambda^{t}r_{i_{t}a_{t}} \; \bigg| \; i_{0} = i \right].$
Let us consider a trajectory $O$ of the Markov chain on $\X$ associated with $(\pi,\boldsymbol{T}).$ Then either the patient stays infinitely in the ward, in which case the reward is $r_{W} \cdot (1-\lambda)^{-1}$, which is smaller than $r_{W}+ \lambda \cdot r_{RL}$ by Assumption \ref{assumption-0}. Otherwise, during the trajectory $O$, there is a time $t$ at which the patient leaves the ward and reaches the state $n+1=CR$,  $n+2 = RL$,  $n+3=D$, or $n+4= PT$. In that case the reward is smaller than $\dfrac{r_{W} \cdot (1-\lambda^{t})}{1-\lambda} + \lambda^{t+1} \cdot \max \{r_{CR},r_{RL},r_{D},r_{PT}\}.$
Since the maximum instantaneous reward is $r_{RL}$, the reward associated with the trajectory $O$ is smaller than $\dfrac{r_{W} \cdot (1-\lambda^{t})}{1-\lambda} + \lambda^{t+1} \cdot r_{RL}.$
Now
\begin{align}
\dfrac{r_{W} \cdot (1-\lambda^{t})}{1-\lambda} + \lambda^{t+1} \cdot r_{RL} & \leq (r_{W}+\lambda \cdot r_{RL}) \cdot (1-\lambda^{t}) + \lambda^{t+1}\cdot r_{RL} \label{ineq:lem-Vinfty}\\
& \leq r_{W}+ \lambda \cdot r_{RL},
\end{align}
where Inequality \eqref{ineq:lem-Vinfty} follows from Assumption \ref{assumption-0}.
Therefore, the reward associated with any trajectory $O$ is smaller than $r_{W}+ \lambda \cdot r_{RL}.$ We can thus conclude that the value vector $\boldsymbol{V}$ satisfies $V_{i} \leq r_{W} + \lambda \cdot r_{RL}, \forall \; i \in [n]. $  
\end{proof}
{\color{black}
\section{$N$-patient MDP}\label{app:N-patient-MDP}
\subsection{Model of patients dynamics.}
In order to give a theoretical justification to penalizing the reward $r_{PT}$ in our single-patient MDP, we present here an $N$-\textit{patient MDP}. This is a model whose complexity is somewhat intermediary between the (simpler) single-patient MDP and the (more complex) full hospital model.  It consists of $N \in \N$ patients, whose individual dynamics of the severity condition follows the single-patient MDP described in Figure \ref{fig:mdp_model}. The action is to choose to transfer, or not, patients to the ICU. Crucially, there is a constraint on the number of patients that can be transferred at the same time to the ICU.

\textit{Set of states.} A state is an $N$-tuple $\left( i_{1}, ..., i_{N} \right)$, where each $i_{\ell} \in \{1,..., N\}$ describes the severity condition of patient $\ell$, or one of the four absorbing states $r_{RL},r_{CR},r_{D},r_{PT}$. Note that the number of states, $(n+4)^{N}$, is exponential in the number of patients.

\textit{Set of actions and policy.} An action is an $N$-tuple $\left(\pi^{1}_{i_{1}}, ..., \pi^{N}_{i_{N}} \right)$ where $\pi^{\ell}_{i_{\ell}}$ represents the probability to proactively transfer a patient with severity score $i_{\ell}$. The number of (deterministic) actions is also exponential in $N$.
A policy is an $N$-tuple $\left(\pi^{1}, ..., \pi^{N} \right)$ where each $\pi^{\ell}$ is a map from the set of severity conditions $[n]$ to $[0,1]$, i.e. $\pi^{\ell}$ is a policy \textit{in the single-patient MDP.}

In order to incorporate the effect of proactive transfer on other patients in the hospital, we choose to incorporate  a constraint in our decision-making problem. In particular, the chosen action $\left(\pi^{1}_{i_{1}}, ..., \pi^{N}_{i_{N}} \right)$ for state $\left(i_{1}, ..., i_{N}\right)$ has to satisfy
\begin{equation}\label{eq:constraint-pi}
\sum_{\ell=1}^{N} \pi^{\ell}_{i_{\ell}} \leq m,
\end{equation}
where $m \in \N$ is the maximum number of patients that can proactively transferred at the same time.
Note that in practice, the person  (or group of persons) who makes the decision to transfer  patients from the ward to the ICU may not have access to an accurate estimate of the ICU capacity. The current analysis provides insights into the case that an accurate estimate for the capacity constraint is available.

\textit{Transitions.} Once action $\left(\pi^{1}_{i_{1}}, ..., \pi^{N}_{i_{N}} \right)$ is chosen in state $\left(i_{1}, ..., i_{N}\right)$, the patients who are proactively transferred transition to state $PT$, and other patients all transition to a next severity condition (or $RL,CR,D$) following the same transition matrix $\bm{T}^{0}$ as for the single-patient MDP.

\textit{Rewards.} The rewards only depend on the state $\left(i_{1}, ..., i_{N}\right)$ and is simply chosen as $\sum_{\ell=1}^{N} r_{i_{\ell}}$, where $r_{i_{\ell}}$ are the same rewards as in the single-patient MDP.

\paragraph{The Single-patient MDP as a Lagrangian relaxation of the $N$-patient MDP.}
We now show that our single-patient MDP is a Lagrangian relaxation of this $N$-patient MDP, in the sense of \cite{adelman2008relaxations}. In particular, the number of states and actions of the $N$-patient MDP are exponential in $N$, rendering it intractable.
While the Bellman equation for value vector $\bm{J}$ of the $N$-patient MDP is
\begin{equation}\label{eq:Bellman-N-patients}
J_{\left(i_{1}, ..., i_{N} \right)} = \max_{\left(\pi^{1}_{i_{1}}, ..., \pi^{N}_{i_{N}} \right); \sum_{\ell=1}^{N} \pi^{\ell}_{i_{\ell}} \leq m } \; \sum_{\ell=1}^{N} r_{i_{\ell}} + \lambda \sum_{\left(i_{1}', ..., i_{N}' \right) \in \X} \left( \Pi_{\ell=1}^{N} T^{0}_{i_{\ell},i_{ell}'} \right) J_{\left(i_{1}', ..., i_{N}'\right)}, \forall \; \left(i_{1}, ..., i_{N} \right) \in \X,
\end{equation}
the authors in \cite{adelman2008relaxations} suggest to approximate this equation by another equation on a vector $\bm{J}^{ \mu}$, for  a Lagrange multiplier $\mu \geq 0$:
\begin{equation}\label{eq:Bellman-N-patients-mu}
\begin{aligned}
J^{\mu}_{\left(i_{1}, ..., i_{N} \right)} = &  \max_{\left(\pi^{1}_{i_{1}}, ..., \pi^{N}_{i_{N}} \right)} \; \sum_{\ell=1}^{N} r_{i_{\ell}} + \lambda \sum_{\left(i_{1}', ..., i_{N}' \right) \in \X} \left( \Pi_{\ell=1}^{N} T^{0}_{i_{\ell},i_{ell}'} \right) J^{\mu}_{\left(i_{1}', ..., i_{N}'\right)} \\
& + \mu \left( m - \sum_{\ell=1}^{N} \pi^{\ell}_{i_{\ell}} \right) , \forall \; \left(i_{1}, ..., i_{N} \right)  \in \X.
\end{aligned}
\end{equation}
In this case,  we obtain that
\begin{equation}\label{eq:Bellman-N-sep}
J^{\mu}_{\left(i_{1}, ..., i_{N} \right)}  = \dfrac{m \cdot \mu}{1-\lambda} + \sum_{\ell=1}^{N} \bm{V}^{\mu}_{\ell}(i_{\ell}), \forall \; \left(i_{1}, ..., i_{N} \right) \in \X,
\end{equation}
where $\bm{V}^{\mu}_{\ell} \in \R^{n}$ are value vectors for the (penalized) single-patient MDPs:
\begin{equation}\label{eq:Bellman-v-mu}
V_{\ell}^{\mu}(i_{\ell}) = \max_{\pi^{\ell}_{i_{\ell}} \in [0,1]} r_{i_{\ell}} - \mu \pi^{\ell}_{i_{\ell}} + \lambda \left( (1-\pi^{\ell}_{i_{\ell}}) \bm{T}_{i}^{0 \; \top} \bm{V}^{\mu}_{\ell} + \pi^{\ell}_{i_{\ell}} r_{PT} \right), \forall \; i_{\ell} \in [n].
\end{equation}
Note that we can reformulate \eqref{eq:Bellman-v-mu} as
\begin{equation}\label{eq:Bellman-v-mu-penalty}
V_{\ell}^{\mu}(i_{\ell}) = \max \{  r_{i_{\ell}} + \lambda \bm{T}_{i}^{0 \; \top} \bm{V}_{\ell}^{\mu},   r_{i_{\ell}} + \lambda \left(r_{PT}  - \dfrac{\mu}{\lambda} \right) \}, \forall \; i_{\ell} \in [n].
\end{equation}
Equation \eqref{eq:Bellman-v-mu-penalty} is the Bellman equation for our single-patient MDP, except for the term $- \dfrac{\mu}{\lambda}$.

Therefore,  dualizing the binding constraint in our weakly constrained $N$-patient MDP amounts to penalizing the (terminal) reward for proactively transferring the patient.

\subsection{Whittle Indexability of the single-patient MDP.}\label{app:whittle}
This leads us to investigate the sensitivity of the optimal nominal policy to the reward $r_{PT}$ associated with proactive transfer.
 This is related to the notion of \textit{Whittle index} and \textit{indexability} \citep{whittle1988restless} of the optimal policy in our single-patient MDP.
\begin{definition}
For a given state $i$, the Whittle index of state $i$ is the choice of reward $r_{PT}$ such that it is equally desirable to proactively transfer the patient in state $i$ and to not proactively transfer her/him.
\end{definition}
We note that the Whittle index is a priori dependent on the state $i$ and may vary across states.
We now prove that the single-patient MDP is \textit{indexable.}
\begin{definition}
Let $\I(r_{PT})$ be the set of severity conditions for which the optimal action is to proactively transfer the patient to the ICU.

The single-patient MDP is \textit{indexable} if $\I(r_{PT})$ is monotonically increasing from $\emptyset$ to $[n]$ (the set of all severity conditions) when the reward $r_{PT}$ is increasing from $r_{PT}=0$ to $r_{PT}=r_{RL}$.
\end{definition}
\begin{proposition}\label{prop:indexable}
Assume that Assumption \ref{assumption-1} holds for $r_{PT} =0$.
Then the single-patient MDP is indexable for all $r_{PT} \in [0,r_{RL}]$.
\end{proposition}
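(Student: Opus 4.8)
The plan is to reduce indexability to three facts about the per-state transfer decision, all read off from the Bellman equation of the single-patient MDP. Writing $\boldsymbol{V}(r_{PT})$ for the optimal value vector as a function of the terminal reward $r_{PT}$, the Bellman equation in state $i \in [n]$ reads
\[ V_i(r_{PT}) = r_W + \lambda \max\left\{ \textstyle\sum_{j \in [n]} T^0_{ij} V_j(r_{PT}) + out(i),\ r_{PT} \right\}, \]
so that proactively transferring is optimal in state $i$ exactly when $r_{PT} \ge Q_i(r_{PT})$, where $Q_i(r_{PT}) := \sum_{j\in[n]} T^0_{ij}V_j(r_{PT}) + out(i)$ is the do-nothing continuation value. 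I would first observe that condition \eqref{eq:assumption:sub-2} only relaxes as $r_{PT}$ grows (its left-hand side increases in $r_{PT}$ while its right-hand side is fixed) and condition \eqref{eq:assumption:sub-1} is independent of $r_{PT}$; hence Assumption \ref{assumption-1} holding at $r_{PT}=0$ implies it holds for every $r_{PT}\in[0,r_{RL}]$, so Theorem \ref{th:nom-thr} guarantees that $\I(r_{PT})$ is an ``upper set'' $\{\tau,\dots,n\}$ throughout. It then remains to show (i) $\I(0)=\emptyset$, (ii) $\I(r_{RL})=[n]$, and (iii) $\I(r_{PT})\subseteq \I(r_{PT}')$ for $r_{PT}\le r_{PT}'$.

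The two boundary facts are short. For (i), at $r_{PT}=0$ nonnegativity of the rewards gives $\boldsymbol{V}(0)\ge \boldsymbol{0}$, while $out(i) \ge r_{RL}\,\theta > 0$ (using $\theta>0$ and $r_{RL}>0$), so $Q_i(0) > 0 = r_{PT}$ and transferring is strictly suboptimal in every state. For (ii), Lemma \ref{lem-Vinfty} gives $V_j \le r_W + \lambda r_{RL}$ for all $j$; at $r_{PT}=r_{RL}$ the transfer action yields exactly $r_W+\lambda r_{RL}$, so it attains the global upper bound and is therefore optimal in every state, i.e. $\I(r_{RL})=[n]$.

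The hard part is the monotone-inclusion step (iii), and the crux is a one-sided Lipschitz estimate:
\[ 0 \le V_i(r_{PT}') - V_i(r_{PT}) \le \lambda\,(r_{PT}' - r_{PT}), \qquad \forall\, r_{PT}' \ge r_{PT}\ge 0,\ i\in[n]. \]
I would prove it by a super-solution argument. The lower bound $0$ is immediate from monotonicity of the Bellman operator in the parameter $r_{PT}$. For the upper bound, set $\delta = r_{PT}'-r_{PT}$ and verify that $\boldsymbol{V}(r_{PT}) + \lambda\delta\,\boldsymbol{e}$ is a super-solution of the Bellman operator $\mathcal{L}_{r_{PT}'}$ at the larger parameter: using $\sum_{j\in[n]} T^0_{ij}\le 1$ and $\lambda<1$ one obtains $\max\{Q_i(r_{PT}) + \lambda\delta\sum_{j\in[n]} T^0_{ij},\, r_{PT}+\delta\} \le \max\{Q_i(r_{PT}),\, r_{PT}\} + \delta$, hence $\mathcal{L}_{r_{PT}'}\big(\boldsymbol{V}(r_{PT}) + \lambda\delta\boldsymbol{e}\big) \le \boldsymbol{V}(r_{PT}) + \lambda\delta\boldsymbol{e}$. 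Since $\mathcal{L}_{r_{PT}'}$ is a monotone contraction, iterating it from this super-solution drives the iterates monotonically down to its fixed point $\boldsymbol{V}(r_{PT}')$, giving $\boldsymbol{V}(r_{PT}') \le \boldsymbol{V}(r_{PT}) + \lambda\delta\boldsymbol{e}$.

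Finally I would combine the Lipschitz bound with the transfer criterion. If $i \in \I(r_{PT})$, i.e. $r_{PT}\ge Q_i(r_{PT})$, then $Q_i(r_{PT}') - Q_i(r_{PT}) = \sum_{j\in[n]} T^0_{ij}\big(V_j(r_{PT}')-V_j(r_{PT})\big) \le \lambda\delta \le \delta$, so $r_{PT}' - Q_i(r_{PT}') \ge (r_{PT}-Q_i(r_{PT})) + (1-\lambda)\delta \ge (1-\lambda)\delta \ge 0$, whence $i\in\I(r_{PT}')$. This yields $\I(r_{PT})\subseteq\I(r_{PT}')$ and, together with (i)--(ii), shows that $\I(\cdot)$ increases monotonically from $\emptyset$ to $[n]$ as $r_{PT}$ runs from $0$ to $r_{RL}$, which is exactly indexability. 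I expect the super-solution verification behind the Lipschitz bound to be the only genuinely delicate point; everything else is bookkeeping on the Bellman equation and the previously established structural results.
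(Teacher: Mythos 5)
Your proof is correct and follows essentially the same route as the paper's: the same three-step architecture of (a) observing that Assumption \ref{assumption-1} persists for all $r_{PT}\in[0,r_{RL}]$ since \eqref{eq:assumption:sub-1} is $r_{PT}$-free and the left side of \eqref{eq:assumption:sub-2} increases in $r_{PT}$ (the paper's Lemma \ref{lem:assumption-3-3}), (b) the boundary identifications $\I(0)=\emptyset$ and $\I(r_{RL})=[n]$ via nonnegativity of the value vector and Lemma \ref{lem-Vinfty}, and (c) monotone inclusion of $\I(\cdot)$ deduced from a Lipschitz-in-$r_{PT}$ bound on the optimal value vector (the paper's Lemma \ref{lem:v-star-epsilon}). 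The only substantive difference is how that sensitivity bound is established --- you verify that $\boldsymbol{V}(r_{PT})+\lambda\delta\,\boldsymbol{e}$ is a super-solution of the Bellman operator at the larger parameter and invoke the monotone contraction, whereas the paper compares rewards trajectory by trajectory --- and your sharper constant $\lambda\delta$ (the paper's lemma states $\epsilon$ even though its trajectory argument actually yields $\lambda\epsilon$) cleanly produces the $(1-\lambda)\delta$ margin in the final comparison, tidying a small bookkeeping slack in the paper's chain \eqref{eq:prf-wittle-0}--\eqref{eq:prf-wittle-2}.
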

We can reformulate this in more practical terms with the following immediate corollary.
\begin{corollary}\label{cor:whittle-interpretation}
The threshold of an optimal policy in the single-patient MDP is decreasing with $r_{PT}$.
\end{corollary}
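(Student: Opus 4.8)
The plan is to prove the indexability Proposition \ref{prop:indexable} directly; Corollary \ref{cor:whittle-interpretation} then follows at once, since under indexability the monotone set $\I(r_{PT})$ must coincide with the upper set $\{\tau(r_{PT}),\dots,n\}$ associated with a threshold policy, so its threshold $\tau(r_{PT})$ is non-increasing in $r_{PT}$. First I would record that the hypotheses persist over the whole range $r_{PT}\in[0,r_{RL}]$: condition \eqref{eq:assumption:sub-1} does not involve $r_{PT}$, and the left-hand side of \eqref{eq:assumption:sub-2}, namely $(r_{W}+\lambda r_{PT})/(r_{W}+\lambda r_{RL})$, is nondecreasing in $r_{PT}$, so if Assumption \ref{assumption-1} holds at $r_{PT}=0$ it holds for every $r_{PT}\in[0,r_{RL}]$. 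By Theorem \ref{th:nom-thr} the optimal policy is then of threshold type throughout the range, so $\I(r_{PT})$ is always an upper set, and it only remains to show that the set-valued map $r_{PT}\mapsto\I(r_{PT})$ is monotone increasing and sweeps from $\emptyset$ to $[n]$.

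The core of the argument is a single-crossing property. Writing $\bm{V}^{*}(r_{PT})$ for the optimal value vector and $h_{i}(r_{PT})=out(i)+\sum_{j\in[n]}T^{0}_{ij}V^{*}_{j}(r_{PT})$ for the continuation value of not transferring, the Bellman optimality equation gives $i\in\I(r_{PT})$ iff $r_{PT}\ge h_{i}(r_{PT})$, i.e. iff $f_{i}(r_{PT}):=r_{PT}-h_{i}(r_{PT})\ge 0$. The key observation is that each fixed policy has value affine in $r_{PT}$, with $r_{PT}$-coefficient equal to the discounted probability of eventually being transferred, a quantity lying in $[0,\lambda]$: by induction on the Bellman recursion it equals $\lambda$ wherever the policy transfers and a $\lambda$-discounted average of such coefficients otherwise. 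Since $\bm{V}^{*}$ is the pointwise maximum of these finitely many affine functions, it is convex and piecewise-linear in $r_{PT}$ with one-sided slopes in $[0,\lambda]$. Consequently the one-sided derivatives of $f_{i}$ satisfy $f_{i}'=1-\sum_{j\in[n]}T^{0}_{ij}\,\tfrac{dV^{*}_{j}}{dr_{PT}}\ge 1-\lambda\sum_{j\in[n]}T^{0}_{ij}>1-\lambda>0$, where the last strict inequality uses $\sum_{j\in[n]}T^{0}_{ij}<1$ (a consequence of $\theta>0$). Hence $f_{i}$ is strictly increasing and $\{r_{PT}:f_{i}(r_{PT})\ge 0\}$ is an upper interval, which is exactly the monotonicity of state $i$'s membership in $\I$.

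It remains to pin down the endpoints. At $r_{PT}=0$, transferring yields $r_{W}$ whereas not transferring yields $r_{W}+\lambda h_{i}(0)$ with $h_{i}(0)\ge r_{RL}T^{0}_{i,RL}\ge r_{RL}\,\theta>0$, so $f_{i}(0)<0$ and $\I(0)=\emptyset$. At $r_{PT}=r_{RL}$, the transfer-everyone policy attains $r_{W}+\lambda r_{RL}$ at every state, which by Lemma \ref{lem-Vinfty} is the maximal achievable value; hence transferring is optimal everywhere and $\I(r_{RL})=[n]$, equivalently $f_{i}(r_{RL})\ge 0$. Combining these endpoints with the strict monotonicity of each $f_{i}$, for every state $i$ there is a unique Whittle index $r_{PT}^{(i)}\in(0,r_{RL}]$ with $i\in\I(r_{PT})\iff r_{PT}\ge r_{PT}^{(i)}$, so $\I$ increases monotonically from $\emptyset$ to $[n]$, establishing indexability and thus the corollary.

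The main obstacle is the self-referential nature of the transfer test: $h_{i}$ depends on $\bm{V}^{*}$, which itself depends on $r_{PT}$, so one cannot treat the continuation value as a fixed quantity. The entire argument hinges on the slope bound $dV^{*}/dr_{PT}\le\lambda<1$, which forces the linear term $r_{PT}$ to overtake the continuation value exactly once and never fall back. Beyond that, the only technical care needed is handling the finitely many kinks of the piecewise-linear $\bm{V}^{*}$, which I would manage by working with one-sided derivatives (or difference quotients), both of which inherit the same $1-\lambda$ lower bound.
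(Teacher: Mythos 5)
Your proof is correct, and its skeleton (persistence of Assumption \ref{assumption-1} over $r_{PT}\in[0,r_{RL}]$, the endpoint computations $\I(0)=\emptyset$ and $\I(r_{RL})=[n]$ via Lemma \ref{lem-Vinfty}, and a monotonicity argument for $r_{PT}\mapsto\I(r_{PT})$) matches the paper's proof of Proposition \ref{prop:indexable} step for step; the persistence and endpoint steps are essentially the paper's Lemma \ref{lem:assumption-3-3} and the second bullet of its proof. Where you genuinely diverge is the monotonicity step. The paper proves a perturbation estimate (Lemma \ref{lem:v-star-epsilon}): raising $r_{PT}$ by $\epsilon$ raises every $V^*_j$ by at most $\epsilon$ (in fact $\lambda\epsilon$, shown by a trajectory-by-trajectory coupling), and then propagates the transfer inequality from $r_{PT}$ to $r_{PT}+\epsilon$ through the Bellman equation. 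You instead observe that each fixed deterministic stationary policy has value affine in $r_{PT}$, with coefficient equal to the expected discount $E[\lambda^{T_{PT}}]\in[0,\lambda]$ accumulated before hitting $PT$ (at least one period), so $V^*$ is a finite pointwise maximum of such affine maps, hence convex and piecewise-linear with one-sided slopes in $[0,\lambda]$; the resulting bound $f_i'\geq 1-\lambda\sum_{j\in[n]}T^0_{ij}>0$ (strict since $\theta>0$ forces $\sum_{j\in[n]}T^0_{ij}<1$) gives strict single-crossing of the transfer test. This is the differential form of the paper's Lipschitz estimate, but it buys slightly more: strict monotonicity of each $f_i$, hence a \emph{unique} Whittle index $r^{(i)}_{PT}$ per state and a clean handling of ties, whereas the paper's weak-inequality chain only shows that transfers persist as $r_{PT}$ increases. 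Two pedantic caveats, neither a gap: your strict claim $f_i(0)<0$ uses $r_{RL}>0$ together with $T^0_{i,RL}\geq\theta>0$ (the paper merely assumes nonnegative rewards and resolves the tie at $r_{PT}=0$ by its convention that ties go to not transferring), and at the finitely many kinks of $V^*$ the slope bound must indeed be run on one-sided derivatives, as you note.
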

The proof of Proposition \ref{prop:indexable} relies on two lemmas. We start by the following analysis of the sensitivity of Assumption \ref{assumption-1} as regards to the value of $r_{PT}$.
\begin{lemma}\label{lem:assumption-3-3}
Assume that Assumption \ref{assumption-1} holds for $r_{PT}=0$.
Then Assumption \ref{assumption-1} holds for all $r_{PT} \in [0,r_{RL}]$.
\end{lemma}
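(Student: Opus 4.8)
The plan is to treat the two inequalities comprising Assumption~\ref{assumption-1} separately, exploiting the fact that only one of them involves $r_{PT}$ at all.

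First I would observe that condition \eqref{eq:assumption:sub-1}, namely $out(i) \geq out(i+1)$, is completely independent of $r_{PT}$: by definition $out(i) = r_{CR} T_{i,n+1}^{0} + r_{RL} T_{i,n+2}^{0} + r_{D} T_{i,n+3}^{0}$ involves only the rewards $r_{CR}, r_{RL}, r_{D}$ together with the transition coefficients, and never $r_{PT}$. Hence if \eqref{eq:assumption:sub-1} holds at $r_{PT}=0$ it holds verbatim for every value of $r_{PT}$, and nothing further is needed for this half.

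The substance is condition \eqref{eq:assumption:sub-2}. Its right-hand side $\left(\sum_{j=1}^n T^0_{i+1,j}\right)\big/\left(\sum_{j=1}^n T^0_{ij}\right)$ does not depend on $r_{PT}$, whereas its left-hand side $f(r_{PT}) := (r_W + \lambda r_{PT})/(r_W + \lambda r_{RL})$ is affine in $r_{PT}$ with positive slope $\lambda/(r_W + \lambda r_{RL})$ (the denominator being positive, as is implicit for the ratio in \eqref{eq:assumption:sub-2} to be well-defined, and $\lambda \in (0,1)$). Thus $f$ is strictly increasing, so its smallest value over $r_{PT} \in [0, r_{RL}]$ is attained at the left endpoint $r_{PT}=0$. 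I would then chain the inequalities
$$ \frac{r_W + \lambda r_{PT}}{r_W + \lambda r_{RL}} = f(r_{PT}) \ \geq\ f(0) = \frac{r_W}{r_W + \lambda r_{RL}} \ \geq\ \frac{\sum_{j=1}^n T^0_{i+1,j}}{\sum_{j=1}^n T^0_{ij}}, \quad \forall\, i \in [n-1], $$
where the final step is precisely condition \eqref{eq:assumption:sub-2} evaluated at $r_{PT}=0$, which holds by hypothesis.

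Combining the two parts shows that Assumption~\ref{assumption-1} holds for every $r_{PT} \in [0,r_{RL}]$. There is essentially no genuine obstacle: the whole content is the monotonicity of $f$ in $r_{PT}$. The only thing to recognize is that, of the two defining inequalities, \eqref{eq:assumption:sub-1} is $r_{PT}$-free while \eqref{eq:assumption:sub-2} depends on $r_{PT}$ monotonically in the favorable direction, so that $r_{PT}=0$ is simultaneously the worst case for both and the hypothesis at $r_{PT}=0$ propagates to the whole interval.
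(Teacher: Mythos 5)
Your proof is correct and follows essentially the same route as the paper's: you note that condition \eqref{eq:assumption:sub-1} does not involve $r_{PT}$, and that the left-hand side of \eqref{eq:assumption:sub-2} is increasing in $r_{PT}$ while its right-hand side is $r_{PT}$-free, so validity at $r_{PT}=0$ propagates to all of $[0,r_{RL}]$. Your version merely spells out the monotonicity (positive slope $\lambda/(r_{W}+\lambda r_{RL})$) a bit more explicitly than the paper does.
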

\begin{proof}

Note that condition \eqref{eq:assumption:sub-1} does not depend of $r_{PT}$, while if condition \eqref{eq:assumption:sub-2} is satisfied for $r_{PT}=0$, it is satisfied for any $r_{PT} \in [0,r_{RL}]$, since the left-hand side is increasing with $r_{PT}$ while the right-hand side does not depend of $r_{PT}$.
Therefore, if we assume that Assumption \ref{assumption-1} holds for $r_{PT}=0$, we know that it holds for any $r_{PT} \in [0,r_{RL}]$.

\end{proof}
We also need the following lemma, which relates variations in $r_{PT}$ to variation in the value vector.
\begin{lemma}\label{lem:v-star-epsilon}
Let $V^{*}$ the optimal value vector for a choice of $r_{PT}$. Now let us consider a reward $r'_{PT} = r_{PT} + \epsilon$. Let $\bm{V}^{*'}$ the new value vector.
Then
\begin{equation}\label{eq:v-star-v-epsilon}
V_{j}^{*'} \leq V_{j}^{*} + \epsilon, \; \forall \; j \in [n].
\end{equation}
\end{lemma}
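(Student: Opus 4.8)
The plan is to compare the two Bellman optimality operators, which differ only in the transfer branch, and to exhibit $\boldsymbol{V}^{*} + \epsilon\,\boldsymbol{e}$ as a supersolution of the perturbed operator. Write the Bellman optimality operator for reward $r_{PT}$, acting on a value vector $\boldsymbol{V} \in \R^{n}$ over the transient (severity) states, as
\[(\mathcal{L}\boldsymbol{V})_{i} = \max\Bigl\{\, r_{W} + \lambda\bigl(\textstyle\sum_{j \in [n]} T^{0}_{ij} V_{j} + \mathrm{out}(i)\bigr),\ r_{W} + \lambda r_{PT} \Bigr\},\]
where the first branch is the value of not transferring (the continuation over the transient states plus the outside option $\mathrm{out}(i)$ that collects the \emph{fixed} terminal rewards $r_{CR}, r_{RL}, r_{D}$) and the second branch is the value of proactively transferring. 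Let $\mathcal{L}'$ denote the same operator with $r_{PT}$ replaced by $r_{PT} + \epsilon$ (I take $\epsilon \ge 0$, the case relevant for indexability). By definition $\boldsymbol{V}^{*} = \mathcal{L}\boldsymbol{V}^{*}$ and $\boldsymbol{V}^{*'} = \mathcal{L}'\boldsymbol{V}^{*'}$.

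The key step is to show $\mathcal{L}'(\boldsymbol{V}^{*} + \epsilon\boldsymbol{e}) \le \boldsymbol{V}^{*} + \epsilon\boldsymbol{e}$ componentwise. Evaluating $\mathcal{L}'$ at $\boldsymbol{V}^{*} + \epsilon\boldsymbol{e}$, the non-transfer branch picks up an extra $\lambda\epsilon\sum_{j \in [n]} T^{0}_{ij}$ and the transfer branch picks up an extra $\lambda\epsilon$. Since $\lambda < 1$ and $\sum_{j \in [n]} T^{0}_{ij} \le 1$ (indeed strictly less, because $\theta > 0$ forces strictly positive mass on the terminal states $CR, RL, D$), both increments are at most $\epsilon$. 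Hence each branch increases by at most $\epsilon$, so their maximum does too, giving $(\mathcal{L}'(\boldsymbol{V}^{*}+\epsilon\boldsymbol{e}))_{i} \le (\mathcal{L}\boldsymbol{V}^{*})_{i} + \epsilon = V^{*}_{i} + \epsilon$. I would then invoke the standard fixed-point machinery: $\mathcal{L}'$ is a monotone $\lambda$-contraction with unique fixed point $\boldsymbol{V}^{*'}$ (Section 6.3, \cite{Puterman}). Starting from the supersolution $\boldsymbol{V}^{*} + \epsilon\boldsymbol{e}$ and iterating $\mathcal{L}'$ yields a non-increasing sequence (by monotonicity, as the first application already decreases the vector) converging to $\boldsymbol{V}^{*'}$, so $\boldsymbol{V}^{*'} \le \boldsymbol{V}^{*} + \epsilon\boldsymbol{e}$, which is exactly \eqref{eq:v-star-v-epsilon}.

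The only delicate point I anticipate is the bookkeeping of the terminal states: the perturbation changes the reward collected in $PT$ only, so one must add $\epsilon$ to the transient-state values but \emph{not} to the fixed terminal values inside $\mathrm{out}(i)$; this is precisely what keeps the non-transfer branch from increasing by more than $\lambda\epsilon\sum_{j} T^{0}_{ij}$. An equivalent and perhaps more transparent route is policy-based: for an optimal policy $\pi'$ of the perturbed MDP, write $V^{\pi',\,r_{PT}+\epsilon}_{j} = V^{\pi',\,r_{PT}}_{j} + \epsilon\, b_{j}$, where $b_{j}$ is the $\lambda$-discounted probability of ever reaching $PT$ from $j$; since at most one transfer occurs along any trajectory, $b_{j} \le \lambda \le 1$. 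Then $V^{*'}_{j} = V^{\pi',\,r_{PT}+\epsilon}_{j} = V^{\pi',\,r_{PT}}_{j} + \epsilon b_{j} \le V^{*}_{j} + \epsilon$, the last inequality using optimality of $\boldsymbol{V}^{*}$ for the original reward together with $b_{j} \le 1$ and $\epsilon \ge 0$.
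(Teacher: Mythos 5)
Your proposal is correct, and your primary argument takes a genuinely different route from the paper's. The paper proves Lemma \ref{lem:v-star-epsilon} by trajectory coupling, essentially your closing ``policy-based'' sketch: it fixes the optimal policy of the perturbed problem, observes that along any single trajectory the two reward streams differ only if $PT$ is hit, at some time $t \geq 1$, so the pathwise difference in discounted reward is at most $\lambda^{t}\epsilon \leq \lambda\epsilon$, and then takes expectations and invokes optimality of $\boldsymbol{V}^{*}$ for the original reward --- the same observation as your $b_{j} \leq \lambda$, with the paper explicitly flagging the factor $\lambda$ in the style of Lemma \ref{lem-Vinfty}. Your main route --- exhibiting $\boldsymbol{V}^{*}+\epsilon\boldsymbol{e}$ as a supersolution of the perturbed Bellman operator $\mathcal{L}'$ and descending to its fixed point by monotonicity and contraction --- replaces this pathwise reasoning with operator-theoretic bookkeeping. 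What it buys: it is mechanical, needing no argument that $PT$ is visited at most once along a trajectory, because the one-shot structure is encoded in the fact that $\epsilon$ enters only the transfer branch $r_{W}+\lambda r'_{PT}$; this is also why you obtain the constant $\epsilon$ rather than the generic contraction-perturbation bound $\epsilon/(1-\lambda)$. Your handling of the terminal states --- adding $\epsilon$ on the transient components but \emph{not} inside $out(i)$ --- is exactly the delicate point, and you do it correctly, consistently with the paper's operator $F$, whose terminal components are held fixed at $r_{CR}, r_{RL}, r_{D}$. In fact your computation shows slightly more than the statement: since the transfer branch gains only $\lambda\epsilon$ and the non-transfer branch only $\lambda\epsilon\sum_{j\in[n]}T^{0}_{ij} \leq \lambda\epsilon$, the vector $\boldsymbol{V}^{*}+\lambda\epsilon\boldsymbol{e}$ is already a supersolution, recovering the sharper bound $V^{*'}_{j}\leq V^{*}_{j}+\lambda\epsilon$ that is implicit in the paper's proof. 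Finally, your explicit restriction to $\epsilon \geq 0$ is warranted: inequality \eqref{eq:v-star-v-epsilon} fails for $\epsilon<0$ (e.g.\ when the optimal policy never transfers, so $V^{*'}_{j}=V^{*}_{j}$), a sign condition the paper leaves implicit in its use of the lemma for indexability.
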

Note that \eqref{eq:v-star-v-epsilon} simply means that increasing the reward for $PT$ by $\epsilon$ does not increase the value vector by more than $\epsilon$.
\begin{proof}

We can prove \eqref{eq:v-star-v-epsilon} in the same way that we prove Lemma \ref{lem-Vinfty}. In particular, by definition
\[ V_{j}^{*'} = E^{\pi, \bm{T}} \left[ \sum_{t=0}^{+ \infty} r_{i_{t},a_{t}} \; | \: i_{0} = j. \right] \]

We can consider a trajectory $O$ of the Markov chain associated with $\pi^{*'}, \bm{T}^{0}$. Along this trajectory, if the state $PT$ is visited, at time $t \geq 1$ a reward $\lambda^{t} r'_{PT} = \lambda^{t} \left( r_{PT}+\epsilon \right)$ is obtained and the trajectory ends. Otherwise, the trajectory goes on until it reaches another absorbing state. Therefore, along this trajectory, the total accumulated reward when the reward for $PT$, $r'_{PT}$, is within $ \lambda \epsilon$ of the  total accumulated reward for the same trajectory when the reward for $PT$ is $r_{PT}$. The reason for the factor $\lambda$ is because the state $PT$ has to be visited after at least one period, starting from a severity condition $i$. Taking the expectation over all possible trajectories, we obtain \eqref{eq:v-star-v-epsilon}.

\end{proof}

We are now ready to prove Proposition \ref{prop:indexable}.
\begin{proof}[Proof of Proposition \ref{prop:indexable}.]

Our proof proceeds in three steps.
\begin{itemize}
\item \textbf{Optimal policies are threshold} Following Lemma \ref{lem:assumption-3-3}, we know that Assumption \ref{assumption-1} is satisfied for $r_{PT} \in [0,r_{RL}]$. Therefore, following Theorem \ref{th:nom-thr}, an optimal policy can be chosen threshold for every $r_{PT} \in [0,r_{RL}]$.

\item \textbf{$\I(r_{PT})$ for $r_{PT}=0$ and $r_{PT}=r_{RL}$.}

Recall the Bellman equation: for a given choice of reward $r_{PT}$, let $\bm{V}^{*}$ the value vector of an optimal policy. Then the optimal action for severity condition $i$ is chosen as the argmax in
\begin{equation}\label{eq:Bellman-loc}
\max \{ r_{W} + \lambda \bm{T}_{i}^{0 \; \top}\bm{V}^{*}, r_{W} + \lambda r_{PT} \}.
\end{equation}
When $r_{PT}=0$, it is straightforward that the max of the above problem \eqref{eq:Bellman-loc} is always  $r_{W} + \lambda \bm{T}_{i}^{0 \; \top}\bm{V}^{*}$. Therefore, when $r_{PT}=0$, the optimal action for any severity condition is to not proactively transfer, i.e. $\I(0) = \emptyset$.

Now, we also know from Lemma \ref{lem-Vinfty} that $V^{*}_{s} \leq r_{W} + \lambda r_{RL}$ (as long as $r_{RL}$ is the maximum of the instantaneous rewards). Therefore, when $r_{PT} = r_{RL}$, an optimal action at each severity condition $i$ is to proactively transfer, i.e. $\I(r_{RL}) = [n]$.

Note that both these conclusions (for the cases $r_{PT}=0$ and $r_{PT} = r_{RL}$) are easy to interpret in our healthcare setting: when there is no reward for proactive transfer, we do not have any incentive to proactively transfer any patients, whereas when there is as much reward for proactive transfer as for recovering and leaving the ward (state $RL$), we proactively transfer every severity condition.

\item \textbf{Monotonicity of $\I(r_{PT})$.}
We now prove that $\I(r_{PT})$ is monotonically increasing. Using Lemma \ref{lem:assumption-3-3}, we can always choose an optimal policy that is threshold. Now we want to show that the threshold of the optimal policy is monotonically \textit{non-increasing} (so that $\I(r_{PT})$ is increasing).

Let us consider a choice of the reward $r_{PT}$ such that
\begin{equation}\label{eq:v-star}
 r_{W} + \lambda \bm{T}_{i}^{0 \; \top}\bm{V}^{*} \leq r_{W} + \lambda r_{PT} .
 \end{equation}
Note that $V^{*}$ depends on $r_{PT}$. Now let us consider a reward $r'_{PT} = r_{PT} + \epsilon$. Let $\bm{V}^{*'}$ the new value vector.  Our goal is to prove
\begin{equation}\label{eq:v-star-prime}
r_{W} + \lambda \bm{T}_{i}^{0 \; \top}\bm{V}^{*'}< r_{W} + \lambda r_{PT} + \epsilon
\end{equation}
so that the optimal policy for $r'_{PT}$ still proactively transfers a patient in a severity condition $i$, and $r_{PT} \mapsto \I(r_{PT})$ is monotonically increasing.

Now we obtain
\begin{align}
r_{W} + \lambda \bm{T}_{i}^{0 \; \top}\bm{V}^{*'} &  \leq r_{W} + \lambda \bm{T}_{i}^{0 \; \top}\bm{V}^{*} + \epsilon  \label{eq:prf-wittle-0}   \\
& \leq  r_{W} + \lambda r_{PT} + \lambda \epsilon \label{eq:prf-wittle-1}\\
& \leq r_{W} + \lambda r'_{PT},\label{eq:prf-wittle-2}
\end{align}
where \eqref{eq:prf-wittle-0} follows from \eqref{eq:v-star-v-epsilon} and the fact that $\bm{T}^{0}$ is a transition matrix, \eqref{eq:prf-wittle-1} follows from \eqref{eq:v-star}, and \eqref{eq:prf-wittle-2} follows from the definition of $r'_{PT}$.
\end{itemize}
\end{proof}

Finally, we note that computing a closed-form solution for the Whittle index of state $i$ appears more challenging. In particular, the vector $\bm{V}^{*}$ depends on the value of $r_{PT}$ and a patient in severity condition $i$ can transition to any other severity condition (and terminal states $RL,CR$ and $D$), while closed-forms computation of the Whittle index appear in problems with simpler dynamics \citep{whittle1988restless,hsu2018age,tripathi2019whittle}.
\subsection{Sensitivity as regards to the constraint on $m$.}

Our $N$-patient MDP relies on an \textit{exogenous} static parameter $m$, the maximum number of patients that can be proactively transferred at any time period. While this is a limitation of the generality of the $N$-patient MDP, it remains an interesting extension of the single-patient MDP toward more realistic models of hospital patient flow dynamics. We use the results of the previous section on the \textit{indexability} of the single-patient MDP in order to better understand the impact of the parameter $m$ on the policies of the single-patient MDP.

In particular, we show the following result:
\begin{proposition}\label{prop:m-threshold}
The threshold of an optimal nominal policy in the single-patient MDP is decreasing with  $m$.
\end{proposition}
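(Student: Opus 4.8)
The plan is to interpret the dependence of the single-patient MDP on the capacity parameter $m$ through the Lagrangian relaxation established in Appendix \ref{app:N-patient-MDP}, and then to invoke the indexability result of Corollary \ref{cor:whittle-interpretation}. Recall from \eqref{eq:Bellman-N-sep} that, for a fixed multiplier $\mu \geq 0$, the relaxed value decomposes as
$$ J^{\mu}_{(i_{1},\dots,i_{N})} = \frac{m \mu}{1-\lambda} + \sum_{\ell=1}^{N} V^{\mu}_{\ell}(i_{\ell}), $$
where each $V^{\mu}_{\ell}$ is the value vector of a single-patient MDP whose proactive-transfer reward has been penalized to $r_{PT} - \mu/\lambda$ (see \eqref{eq:Bellman-v-mu-penalty}). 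Following \cite{adelman2008relaxations}, the tightest bound is obtained by solving the dual $\min_{\mu \geq 0} J^{\mu}$, and I will write $\mu^{*}(m)$ for an optimal multiplier. The single-patient MDP ``at capacity $m$'' is then the penalized MDP evaluated at $\mu=\mu^{*}(m)$, so its optimal threshold is governed entirely by the penalized reward $r_{PT} - \mu^{*}(m)/\lambda$.

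The first key step is to show that $\mu^{*}(m)$ can be chosen non-increasing in $m$. Write $g(\mu) := \sum_{\ell} V^{\mu}_{\ell}(i_{\ell})$, which does not depend on $m$; for each fixed stationary policy the value is affine and non-increasing in $\mu$, so $g$ is convex and non-increasing as a pointwise maximum of affine functions. The dual objective $D(m,\mu) = \frac{m\mu}{1-\lambda} + g(\mu)$ therefore has mixed difference $(1-\lambda)^{-1} > 0$, i.e.\ it has increasing differences in $(m,\mu)$. By the standard monotone comparative statics argument (Topkis), the minimizer of a function with increasing differences is non-increasing in the parameter $m$; equivalently, the optimality condition $-m/(1-\lambda) \in \partial g(\mu^{*}(m))$ together with monotonicity of the subdifferential of the convex function $g$ forces $\mu^{*}(m)$ to decrease as $m$ grows. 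If the dual minimizer is not unique I select the smallest one, which yields a well-defined non-increasing map $m \mapsto \mu^{*}(m)$.

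It remains to compose the two monotonicities. Since $\mu^{*}(m)$ is non-increasing in $m$, the penalized proactive-transfer reward $r_{PT} - \mu^{*}(m)/\lambda$ is non-decreasing in $m$. By Proposition \ref{prop:indexable} the single-patient MDP is indexable for every value of this reward in $[0, r_{RL}]$, and by Corollary \ref{cor:whittle-interpretation} the threshold of an optimal policy is non-increasing in the proactive-transfer reward. Chaining these two facts gives that the threshold of an optimal policy in the single-patient MDP is non-increasing (decreasing) in $m$, which is the claim. The intuition is transparent: a larger transfer budget loosens the binding capacity constraint, lowers its shadow price $\mu^{*}$, and hence makes proactive transfers relatively cheaper, so the optimal policy transfers at lower severity scores.

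The delicate point is the monotonicity of the optimal multiplier $\mu^{*}(m)$: one must handle the possible non-uniqueness of the dual minimizer and the non-smoothness of $g$, which is precisely why I rely on convexity of $g$ and increasing differences of $D$ rather than a naive first-order stationarity calculation, and why I fix the smallest minimizer to obtain a genuinely monotone selection. Once this step is in place, the remainder is a routine invocation of the indexability results already proved.
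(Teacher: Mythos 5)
Your proof follows essentially the same route as the paper's: dualize the transfer-capacity constraint of the $N$-patient MDP, show that the optimal multiplier $\mu^{*}(m)$ of the dual $\min_{\mu \geq 0} \, m\mu/(1-\lambda) + g(\mu)$ is non-increasing in $m$ by convexity of $g$, and then compose with the indexability result (Corollary \ref{cor:whittle-interpretation}), since the penalized transfer reward $r_{PT} - \mu^{*}(m)/\lambda$ is non-decreasing in $m$. If anything, your treatment of the middle step --- increasing differences/subdifferential monotonicity together with an explicit monotone selection of the possibly non-unique minimizer --- is more careful than the paper's version, which differentiates the (piecewise-linear, hence non-smooth) dual objective and characterizes $\mu^{*}$ via an interior stationarity condition $g_{m}'(\mu^{*})=0$ that need not literally hold.
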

In other words, as the maximum number of patients that can be proactively transferred at the same time in the $N$-patient MDP increases, an optimal policy in the single-patient MDP transfers more patients.
\begin{proof}[Proof of Proposition \ref{prop:m-threshold}.]

Remember that $\bm{p}_{0}$ is the initial distribution over the set of states $\{1,...,n\}$.
From Equation \eqref{eq:Bellman-N-sep}, the goal of the decision-maker in the relaxation of the $N$-patient MDP is to solve
\begin{align*}
\min_{\mu \geq 0} \; \; \dfrac{m \cdot  \mu}{1-\lambda} + \sum_{\ell=1}^{N} \bm{p}_{0}^{\top}\bm{V}^{\mu}_{\ell}.
\end{align*}
By a standard duality argument, this is a convex minimization problem. Now, let us consider $m$ and $m'$ such that $m \leq m'$. Let $f: \mu \mapsto  \sum_{\ell=1}^{N} \bm{p}_{0}^{\top}\bm{V}^{\mu}_{\ell}$. We write $g_{m}: \mu \mapsto \dfrac{m \cdot  \mu}{1-\lambda} + f(\mu)$ and $g_{m'}: \mu \mapsto \dfrac{m' \cdot  \mu}{1-\lambda} + f(\mu)$. Clearly,  for any $\mu \geq 0$, we have $g_{m'}'(\mu) = m' + f'(\mu) \geq m +f'(\mu) = g'_{m}(\mu).$

Let $\mu^{*}$ and $\mu^{' \; *}$ denote the optimal Lagrange multipliers for $m$ and $m'$, characterized by
$g'_{m}(\mu^{*})=0$ and $g'_{m'}(\mu^{' \; *}) =0$.
 Therefore $0 = g'_{m'}(\mu^{' \; *}) \geq g'_{m}(\mu^{' \; *}).$ Since $g$ is convex, this means that $\mu^{*} \geq \mu^{' \; *}$.

Therefore, we have proved that the optimal Lagrange multiplier $\mu^{*}$ is a \textit{non-increasing} function of $m$.

In order to conclude our proof of Proposition \ref{prop:m-threshold}, we note that we have proved the following.

\begin{itemize}
\item  When $m$ increases, the optimal Lagrange multiplier $\lambda^{*}$ is non-increasing.
\item This results in higher reward for proactive transfer in the single-patient MDP, as a Lagrange multiplier of $\mu$ in the $N$-patient MDP results in a penalty of $-\mu/\lambda$ for $r_{PT}$ in the single-patient MDP.
\item In Corollary \ref{cor:whittle-interpretation}, we have proved that the threshold of an optimal policy is monotonically decreasing as $r_{PT}$ is increasing.
\item Therefore, we can conclude that the threshold of an optimal policy in the single-patient MDP is non-increasing as the maximum number of proactive transfers augments in the $N$-patient MDP.
\end{itemize}
\end{proof}

\subsection{Summary of the $N$-patient MDP.}
We have proved the following.
\begin{itemize}
\item \textbf{Lagrangian relaxation.}
Our single-patient MDP can be viewed as the Lagrangian relaxation of a more complex $N$-patient MDP. The Lagrangian multiplier $\mu \geq 0$ becomes a penalty for the reward $r_{PT}$ in the single-patient MDP.
\item \textbf{Whittle index.} We proved that the threshold of an optimal policy in the single-patient MDP is \textit{monotonically} decreasing as $r_{PT}$ is increasing. This is related to the \textit{Whittle index} of the single-patient MDP and we show that the single-patient MDP is indexable.
\item \textbf{Exogenous parameter $m$.} Finally, we study the impact of the parameter $m$ (the maximum number of patients that can be proactively transferred at the same time, reflecting the ICU capacity constraint) on the threshold of an optimal policy in the single-patient MDP. Leveraging our results on the Whittle index, we show that as $m$ increases in the $N$-patient MDP, an optimal policy transfers more and more patients in the single-patient MDP.  Thus, one can expect that it is reasonable to be more aggressive with proactive transfers as the number of available ICU beds increases. 
\end{itemize}
}

\section{Homogeneity and Translation for conditions \eqref{eq:assumption:sub-1} and \eqref{eq:assumption:sub-2}.}\label{app:Lemmas-tr-sc}
We prove the following two lemmas.
\begin{lemma}\label{lem:assumption-simple-tr-sc}
Let $(r_{W},r_{CR},r_{RL},r_{D},r_{PT}) \in \R^{5}_{+}$ denote some rewards and $\bm{T}$ a transition matrix such that condition \eqref{eq:assumption:sub-1} and condition \eqref{eq:assumption:sub-2} hold.
\begin{enumerate}
\item Let $\alpha \geq 0$. For $\alpha \cdot (r_{W},r_{CR},r_{RL},r_{D},r_{PT})$ and $\bm{T}$, condition \eqref{eq:assumption:sub-1} and condition \eqref{eq:assumption:sub-2} still hold.
\item Let $\alpha \geq 0.$ For $ (r_{W} + \alpha,r_{CR}+ \alpha,r_{RL}+ \alpha,r_{D}+ \alpha,r_{PT}+ \alpha)$ and $\bm{T}$, condition \eqref{eq:assumption:sub-2} still holds.
\end{enumerate}
\end{lemma}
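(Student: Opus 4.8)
The plan is to treat the two parts separately, exploiting the fact that $out(\cdot)$ is linear in the rewards and that the right-hand side of condition \eqref{eq:assumption:sub-2} depends only on the transition matrix $\bm{T}$, not on the rewards. Throughout I would keep the exit probabilities $T_{i,n+1},T_{i,n+2},T_{i,n+3}$ and the cumulative sums $\sum_{j=1}^{n}T_{ij}$ fixed, since $\bm{T}$ is held fixed in both statements.

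For Part 1 (scaling), I would first observe that $out(i)=r_{CR}T_{i,n+1}+r_{RL}T_{i,n+2}+r_{D}T_{i,n+3}$ is linear and homogeneous in the reward vector, so replacing the rewards by $\alpha$ times themselves replaces $out(i)$ by $\alpha\cdot out(i)$. Hence for $\alpha\geq 0$ the inequality $out(i)\geq out(i+1)$ is preserved by multiplying both sides by $\alpha$, giving \eqref{eq:assumption:sub-1}. For condition \eqref{eq:assumption:sub-2}, the left-hand ratio $\frac{r_W+\lambda r_{PT}}{r_W+\lambda r_{RL}}$ becomes $\frac{\alpha(r_W+\lambda r_{PT})}{\alpha(r_W+\lambda r_{RL})}$, which equals the original ratio for every $\alpha>0$, while the right-hand side is unaffected; the degenerate case $\alpha=0$ makes all rewards vanish, so \eqref{eq:assumption:sub-1} holds trivially while the left-hand side of \eqref{eq:assumption:sub-2} becomes the indeterminate form $0/0$, so I would simply restrict attention to $\alpha>0$ where the ratio is well-defined and unchanged.

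For Part 2 (translation), I would write the translated left-hand side of \eqref{eq:assumption:sub-2} as $\frac{(r_W+\lambda r_{PT})+\alpha(1+\lambda)}{(r_W+\lambda r_{RL})+\alpha(1+\lambda)}$, since adding $\alpha$ to every reward increments both numerator and denominator by the same amount $\alpha(1+\lambda)$. The crux is then the elementary monotonicity fact that $t\mapsto\frac{a+t}{b+t}$ is non-decreasing on $t\geq 0$ whenever $a\leq b$ and $b+t>0$ (its derivative equals $\frac{b-a}{(b+t)^{2}}\geq 0$). I would invoke the previously established ordering $r_{PT}\leq r_{RL}$ to get $a:=r_W+\lambda r_{PT}\leq r_W+\lambda r_{RL}=:b$, so with $t=\alpha(1+\lambda)\geq 0$ the translated ratio is at least the original ratio $\frac{a}{b}$, which by hypothesis already dominates the right-hand side of \eqref{eq:assumption:sub-2}. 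Chaining these two inequalities yields the claim for the translated rewards.

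I expect the only non-routine step to be the monotonicity argument in Part 2, together with checking that the correct ingredient is the inequality $r_{PT}\leq r_{RL}$ (rather than the reverse, which would flip the direction of monotonicity). It is worth noting in passing why \eqref{eq:assumption:sub-1} is deliberately \emph{not} claimed under translation: translating shifts $out(i)$ by $\alpha$ times the exit probability $T_{i,n+1}+T_{i,n+2}+T_{i,n+3}$, which is itself $i$-dependent (and typically increasing in severity), so the shift can destroy the monotonicity of $out(\cdot)$ — consistent with the lemma asserting translation invariance only for \eqref{eq:assumption:sub-2}.
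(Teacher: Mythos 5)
Your proof is correct and takes essentially the same route as the paper: Part~1 is the same homogeneity/cancellation computation for $out(i)$ and the ratio, and Part~2 reproduces the paper's argument exactly --- the paper shows $\phi(\alpha)=\dfrac{r_{W}+\alpha+\lambda (r_{PT}+\alpha)}{r_{W}+\alpha+\lambda (r_{RL}+\alpha)}$ is non-decreasing via $\phi'(\alpha)=\dfrac{\lambda(1+\lambda)(r_{RL}-r_{PT})}{(r_{W}+\alpha+\lambda(r_{RL}+\alpha))^{2}}\geq 0$ using $r_{RL}\geq r_{PT}$, which is precisely your monotonicity of $t\mapsto (a+t)/(b+t)$ with $t=\alpha(1+\lambda)$. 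Your explicit restriction to $\alpha>0$ in Part~1 to avoid the $0/0$ degeneracy is a minor point of care that the paper's own proof glosses over.
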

\begin{proof}

\begin{enumerate}
\item This follows from $\alpha \geq 0$ and $\alpha \cdot r_{CR} \cdot T_{i,n+1}^{0} + \alpha \cdot  r_{RL} \cdot T_{i,n+2}^{0} +\alpha \cdot r_{D} \cdot T_{i,n+3}^{0} = \alpha \cdot ( r_{CR} \cdot T_{i,n+1}^{0} + r_{RL} \cdot T_{i,n+2}^{0} + r_{D} \cdot T_{i,n+3}^{0}), $ and
$ \dfrac{\alpha \cdot r_{W} + \lambda \cdot \alpha \cdot  r_{PT}}{\alpha \cdot r_{W} + \lambda \cdot \alpha \cdot  r_{RL}} = \dfrac{ r_{W} + \lambda \cdot  r_{PT}}{ r_{W} + \lambda  \cdot  r_{RL}}.$
\item Let us assume that $
     \dfrac{r_{W} + \lambda \cdot r_{PT}}{r_{W} + \lambda \cdot r_{RL}} \geq  \dfrac{\left( \sum_{j=1}^{n} T_{i+1,j}^{0} \right) }{\left( \sum_{j=1}^{n} T_{ij}^{0} \right)}, \forall \; i \in [n-1].
$
We write $\phi$ the function of $\R$ such that for any scalar $\alpha$,
$\phi(\alpha)=\dfrac{r_{W} + \alpha + \lambda \cdot ( r_{PT}+ \alpha )}{r_{W}+ \alpha  + \lambda \cdot (r_{RL}+ \alpha )}.$ We will prove that $\phi$ is non-decreasing and therefore that $$\phi(\alpha) \geq \phi(0) = \dfrac{r_{W} + \lambda \cdot r_{PT}}{r_{W} + \lambda \cdot r_{RL}} \geq  \dfrac{\left( \sum_{j=1}^{n} T_{i+1,j}^{0} \right) }{\left( \sum_{j=1}^{n} T_{ij}^{0} \right)}, \forall \; i \in [n-1].$$
Indeed, $\phi$ has a derivative in $\R_{+}$ and
$\phi'(\alpha) =  \dfrac{\lambda (1+\lambda) \cdot (r_{RL}-r_{PT})}{(r_{W}+ \alpha  + \lambda \cdot (r_{RL}+ \alpha ))^{2}} \geq 0,$
since $r_{RL} \geq r_{PT}.$ Therefore $\phi$ is a non-decreasing function, and for all $\alpha \geq 0$, the condition \eqref{eq:assumption:sub-2} holds. 
\end{enumerate}
\end{proof}

\begin{lemma}\label{lem:assumption-1-tr-sc}
Let $(r_{W},r_{CR},r_{RL},r_{D},r_{PT}) \in \R^{5}_{+}$ denote some rewards and $\bm{T}$ a transition matrix such that condition \eqref{assumption-2} holds.
\begin{enumerate}
\item Let $\alpha \geq 0.$ For $\alpha \cdot (r_{W},r_{CR},r_{RL},r_{D},r_{PT})$ and $\bm{T}$, condition \eqref{assumption-2} still holds.
\item Let $\alpha \geq 0$ and let us assume that
$ \sum_{j=1}^{n} T_{ij} \geq \sum_{j=1}^{n} T_{i+1,j}, \forall \; i \in [n-1].$
Then for $ (r_{W}+\alpha,r_{CR}+\alpha,r_{RL}+\alpha,r_{D}+\alpha,r_{PT}+\alpha)$ and $\bm{T}$, condition \eqref{assumption-2} still holds.
\end{enumerate}
\end{lemma}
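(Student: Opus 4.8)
The plan is to verify both parts by direct substitution into condition \eqref{assumption-2}, exploiting its algebraic structure. For part 1, the key observation is that every quantity appearing in \eqref{assumption-2} is homogeneous of degree one in the rewards: the factors $r_{W} + \lambda \cdot r_{PT}$ and $r_{W} + \lambda \cdot r_{RL}$ are linear in the rewards, and the outside option $out(i) = r_{CR} T_{i,n+1} + r_{RL} T_{i,n+2} + r_{D} T_{i,n+3}$ is linear as well. Hence replacing the reward vector by $\alpha$ times itself, with $\alpha \geq 0$, multiplies both the left-hand and the right-hand side of \eqref{assumption-2} by $\alpha$, and multiplication by a nonnegative scalar preserves the inequality. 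This part requires no further computation.

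For part 2, I would track how each side of \eqref{assumption-2} changes under the translation $r_{\bullet} \mapsto r_{\bullet} + \alpha$. The crucial ingredient is that each row of $\bm{T}$ is a probability distribution, so $T_{i,n+1} + T_{i,n+2} + T_{i,n+3} = 1 - \sum_{j=1}^{n} T_{ij}$. Using this identity, the translated outside option becomes $out(i) + \alpha \bigl(1 - \sum_{j=1}^{n} T_{ij}\bigr)$. Substituting the translated rewards into the left-hand side of \eqref{assumption-2} and collecting the $\alpha$-terms, the coefficient $\sum_{j=1}^{n} T_{ij}$ multiplying $(r_{W} + \lambda \cdot r_{PT})$ contributes $\alpha(1+\lambda)\sum_{j=1}^{n} T_{ij}$, which combines with the $\alpha\bigl(1 - \sum_{j=1}^{n} T_{ij}\bigr)$ coming from the outside option to yield an excess of exactly $\alpha + \alpha \lambda \sum_{j=1}^{n} T_{ij}$ over the original left-hand side. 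An identical computation for the right-hand side, using the row-stochasticity of row $i+1$, produces an excess of $\alpha + \alpha \lambda \sum_{j=1}^{n} T_{i+1,j}$.

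Therefore the translated inequality holds provided the original one does and provided the left excess dominates the right excess, i.e. $\alpha \lambda \sum_{j=1}^{n} T_{ij} \geq \alpha \lambda \sum_{j=1}^{n} T_{i+1,j}$. Since $\alpha \geq 0$ and $\lambda \in (0,1)$, this is exactly the hypothesis $\sum_{j=1}^{n} T_{ij} \geq \sum_{j=1}^{n} T_{i+1,j}$ assumed in part 2, so the translated condition \eqref{assumption-2} follows. The only delicate point is the bookkeeping of the $\alpha$-terms: the pure additive constant $\alpha$ cancels between the two sides, and it is precisely the row-stochasticity that makes the surviving terms collapse to $\lambda \sum_{j} T_{ij}$ rather than leaving an uncontrolled dependence on the exit probabilities. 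I expect this cancellation to be the one step worth writing out explicitly, while everything else is routine algebra.
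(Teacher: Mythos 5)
Your proposal is correct and follows essentially the same route as the paper's proof: part 1 by homogeneity of degree one of both sides of \eqref{assumption-2} in the rewards, and part 2 by using row-stochasticity to show the translation adds exactly $\alpha + \alpha\lambda\sum_{j=1}^{n}T_{ij}$ to the left-hand side and $\alpha + \alpha\lambda\sum_{j=1}^{n}T_{i+1,j}$ to the right-hand side, after which the hypothesis $\sum_{j=1}^{n}T_{ij}\geq\sum_{j=1}^{n}T_{i+1,j}$ closes the argument. Your explicit bookkeeping of the $\alpha$-terms is exactly the computation the paper sketches more briefly.
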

\begin{proof}

\begin{enumerate}
\item Let $\alpha$ be a non-negative scalar. For the same reason as in Lemma \ref{lem:assumption-simple-tr-sc}, condition \eqref{assumption-2} still holds for $\alpha \cdot (r_{W},r_{PT},r_{RL},r_{CR})$ and $\bm{T}$.
\item Let $\alpha \geq 0.$ For any $i \in [n-1]$, we have
\begin{align}\label{eq:app:lem:3}
     \left( \sum_{j=1}^{n} T_{ij}^{0} \right) \cdot ( r_{W}+ \lambda \cdot r_{PT}) +  out(i) \geq \left( \sum_{j=1}^{n} T_{i+1,j}^{0} \right) \cdot (r_{W} + \lambda \cdot r_{RL}) +out(i+1),
\end{align}
where $ out(i)= r_{CR} \cdot T_{i,n+1}^{0} + r_{RL} \cdot T_{i,n+2}^{0} + r_{D} \cdot T_{i,n+3}^{0}.$
Since $\sum_{j=1}^{n+3} T_{\ell,j} =1$ for any severity score $\ell \in [n-1]$, we notice that adding $\alpha$ to all rewards is equivalent to adding $\alpha + \lambda \cdot \alpha \cdot \left( \sum_{j=1}^{10} T_{ij} \right)$ to the left-hand side of \eqref{eq:app:lem:3} and $\alpha + \lambda \cdot \alpha \cdot \left( \sum_{j=1}^{10} T_{i+1,j} \right)$ to the right-hand side of \eqref{eq:app:lem:3}. Therefore, condition \eqref{assumption-2} holds for all $\alpha \geq 0$, as long as
$ \sum_{j=1}^{n} T_{ij} \geq \sum_{j=1}^{n} T_{i+1,j}, \forall \; i \in [n-1].
 $

\end{enumerate}

\end{proof}

\section{Proof of Theorem \ref{th:nom-thr}.}\label{app:pf-main-th}
\begin{proof}
Let $\boldsymbol{V}^{0} \in \R^{n+4}$ such that $ V_{i}^{0}  =0, \forall \; i \in [n],   V_{n+1}^{0} = V_{n+2}^{0} = V_{n+3}^{0}=0,    V_{n+4}^{0}  =r_{PT}.$
Let $F:\R^{n+4} \rightarrow \R^{n+4}$ denote the function that maps $\boldsymbol{V} \in \R^{n+4}$ to $F(\boldsymbol{V}),$ where
\begin{align*}
    F(V)_{i} & = \max \{r_{W}+\lambda \cdot \sum_{j=1}^{n+3}T_{ij}^{0}V_{j},r_{W}+\lambda \cdot r_{PT} \}, \forall \; i \in [n], \\
    F(V)_{n+1}& = r_{CR}, F(V)_{n+2}=r_{RL}, F(V)_{n+3}=r_{D},\\
        F(V)_{n+4} & =r_{PT}.
\end{align*}
The function $F$ is the Bellman operator associated with our single-patient MDP with transition kernel $\boldsymbol{T}^{0}.$ Therefore, we know that the value iteration algorithm finds an optimal policy \citep{Puterman}:
if $\pi^{*}$ is an optimal policy and $\bm{V}^{\pi^{*}}$ is its value vector, then
$ \lim_{t \rightarrow \infty} F^{t}(\boldsymbol{V}^{0}) =\lim_{t \rightarrow \infty} \boldsymbol{V}^{t} =  \boldsymbol{V}^{\pi^{*}},$
and $ \lim_{t \rightarrow \infty} \pi^{t} \rightarrow \pi^{*}$ where $\pi^{t}$ is the sequence of deterministic policies such that $$ \pi^{t}(i) = 1 \iff r_{W}+\lambda \cdot \sum_{j=1}^{n+3}T_{ij}^{0}V_{j}^{t} <r_{W}+\lambda \cdot r_{PT}, \forall \; i \in [n].$$
%

We will prove by induction that the policy $\pi^{t}$ is a threshold policy at every iteration $t \geq 1.$

At $t=1$, for $i \in [n]$,
\begin{align*}
F(\bm{V})_{i} & = \max \{r_{W}+\lambda \cdot \sum_{j=1}^{n+3}T_{ij}V^{0}_{j},r_{W}+\lambda \cdot r_{PT} \}
=\max \{r_{W}+\lambda \cdot 0,r_{W}+\lambda \cdot r_{PT} \},
\end{align*}
and therefore we have
$
\pi^{1}(i) =1, \forall \; i \in [n].
$
Therefore, $\pi^{1}$ is a threshold policy, and its threshold is $1: \pi^{1} = \pi^{[1]}.$

Let $t \geq 1$ and let us assume that $\pi^{t}$ is a threshold policy, and let its threshold be $\tau \in [n+1]$. We prove that the policy $\pi^{t+1}$ is a threshold policy. In order to do so, we will prove that for any $i \in [n-1],$
$\pi^{t+1}(i)=1 \Rightarrow \pi^{t+1}(i+1)=1.$
Let $i \in [n-1]$ such that $\pi^{t+1}(i)=1$.
From the definition of $\pi^{t+1}$,
$$ r_{W}+\lambda \cdot \left( \sum_{j=1}^{n}T_{ij}^{0}V^{t}_{j} + out(i) \right) < r_{W}+\lambda \cdot r_{PT}. $$
Moreover, since $\pi^{t}=\pi^{[\tau]}$, we know that the vector $\boldsymbol{V}^{t}$ is such that
\begin{align}
 \label{ineq:vec:th}   V_{\ell}^{t} & > r_{W} + \lambda \cdot r_{PT}, \forall \; \ell\in \{1,...,\tau-1\}, \\
    V_{\ell}^{t} & = r_{W}+\lambda \cdot r_{PT}, \forall \; \ell \in \{\tau,...n\},\\
    V_{n+1}^{t}& = r_{CR}, V_{n+2}^{t}=r_{RL}, V_{n+3}^{t}=r_{D},\\
        V_{n+4}^{t} & =r_{PT}.
\end{align}
Now following the value iteration algorithm, we know that
\begin{align*}
F(\boldsymbol{V}^{t})_{\ell} & \in  \max \{r_{W}+\lambda \cdot \sum_{j=1}^{n+3}T_{\ell,j}^{0}V^{t}_{j},r_{W}+\lambda \cdot r_{PT} \}, \forall \; \ell \in [n].
\end{align*}
We have
\begin{align}
  \label{eq:1}  r_{W}+\lambda \cdot r_{PT} & \geq r_{W}+\lambda \cdot \sum_{j=1}^{n+3}T_{ij}^{0}V^{t}_{j} \\
 \label{eq:2}   & \geq r_{W}+\lambda \cdot \sum_{j=1}^{n}T_{ij}^{0}V^{t}_{j} + \lambda \cdot out(i) \\
 \label{eq:3}   & \geq r_{W}+\lambda \cdot \sum_{j=1}^{\tau-1}T_{ij}^{0}V^{t}_{j} + \lambda \cdot \sum_{j=\tau}^{n}T_{ij}^{0}V^{t}_{j} + \lambda \cdot out(i) \\
\label{eq:4}    & \geq r_{W}+\lambda \cdot \sum_{j=1}^{\tau-1}T_{ij}^{0}V^{t}_{j} + \lambda \cdot \sum_{j=\tau}^{n}T_{ij}^{0}(r_{W}+\lambda \cdot r_{PT}) + \lambda \cdot out(i) \\
 \label{eq:5}   & \geq r_{W}+\lambda \cdot \sum_{j=1}^{\tau-1}T_{ij}^{0}(r_{W}+\lambda \cdot r_{PT}) + \lambda \cdot \sum_{j=\tau}^{n}T_{ij}^{0}(r_{W}+\lambda \cdot r_{PT}) + \lambda \cdot out(i) \\
 \label{eq:6}   & \geq r_{W}+\lambda \cdot \sum_{j=1}^{\tau-1}T_{i+1,j}^{0}(r_{W}+\lambda \cdot r_{RL}) + \lambda \cdot \sum_{j=\tau}^{n}T_{i+1, j}^{0}(r_{W}+\lambda \cdot r_{PT}) + \lambda \cdot out(i+1) \\
  \label{eq:7}  & \geq r_{W}+\lambda \cdot \sum_{j=1}^{\tau-1}T_{i+1,j}^{0}V^{t}_{j} + \lambda \cdot \sum_{j=th}^{n}T_{i+1,j}^{0}(r_{W}+\lambda \cdot r_{PT}) + \lambda \cdot out(i+1) \\
 \label{eq:8}   & \geq r_{W}+\lambda \cdot \sum_{j=1}^{n+3}T_{i+1,j}^{0}V^{t}_{j},
\end{align}
where Inequality \eqref{eq:1} follows from $\pi^{t+1}(i)=1,$ Inequality \eqref{eq:4} follows from the fact that the policy $\pi^{t}$ is a threshold policy and therefore the vector $\boldsymbol{V}^{t}$ satisfies Inequalities \eqref{ineq:vec:th}. Inequality \eqref{eq:6} follows from Assumption \ref{assumption-1}, because Assumption \ref{assumption-1} and $r_{PT} \leq r_{RL}$ imply that for all threshold $\tau \in \{1,...,n+1\}$,
    $$ \left( \sum_{j=1}^{n} T_{ij}^{0} \right) \cdot ( r_{W}+ \lambda \cdot r_{PT}) + \lambda \cdot out(i) \geq \left( \sum_{j=1}^{\tau-1} T_{i+1,j}^{0} \right) \cdot (r_{W} + \lambda \cdot r_{RL}) + \left( \sum_{j=\tau}^{n} T_{i+1, j}^{0} \right) \cdot (r_{W} + \lambda \cdot r_{PT})   + \lambda \cdot out(i+1).$$ Inequality \eqref{eq:7} follows from Lemma \ref{lem-Vinfty}: $\forall \ell \in [n]
V_{\ell}^{t}  \leq V^{\pi^{*}}_{\ell}
\leq r_{W} + \lambda \cdot r_{RL},$
where the first inequality follows from the fact that the operator $F$ is a \textit{non-increasing} mapping and
$ \lim_{t \rightarrow \infty} \boldsymbol{V}^{t} = \lim_{t \rightarrow \infty} F^{t}(\boldsymbol{V}^{0}) = \boldsymbol{V}^{\pi^{*}}.$

Therefore, we conclude that
$r_{W} + \lambda \cdot r_{PT} \geq r_{W}+\lambda \cdot \sum_{j=1}^{n+3}T_{i+1,j}^{0}V^{t}_{j},$
which implies that $\pi^{t+1}(i+1)=1.$ We can thus conclude that the policy $\pi^{t+1}$ is a threshold policy.

Therefore, for all $t \geq 1,$ the policy $\pi^{t}$ is threshold. We can conclude that there exists an optimal policy that is a threshold policy.
\end{proof}

\section{Non-threshold optimal policies.}\label{app:counterex}
We provide an example of a single-patient MDP which does not satisfy Assumption \ref{assumption-1} and for which the optimal nominal policy is not threshold. In particular, the optimal policy in the MDP of Figure \ref{fig:counterexample} is to proactively transfer a patient in state $1$ and to not proactively transfer a patient in state 2.
 \begin{figure}[H]
\center \includegraphics[scale=0.2]{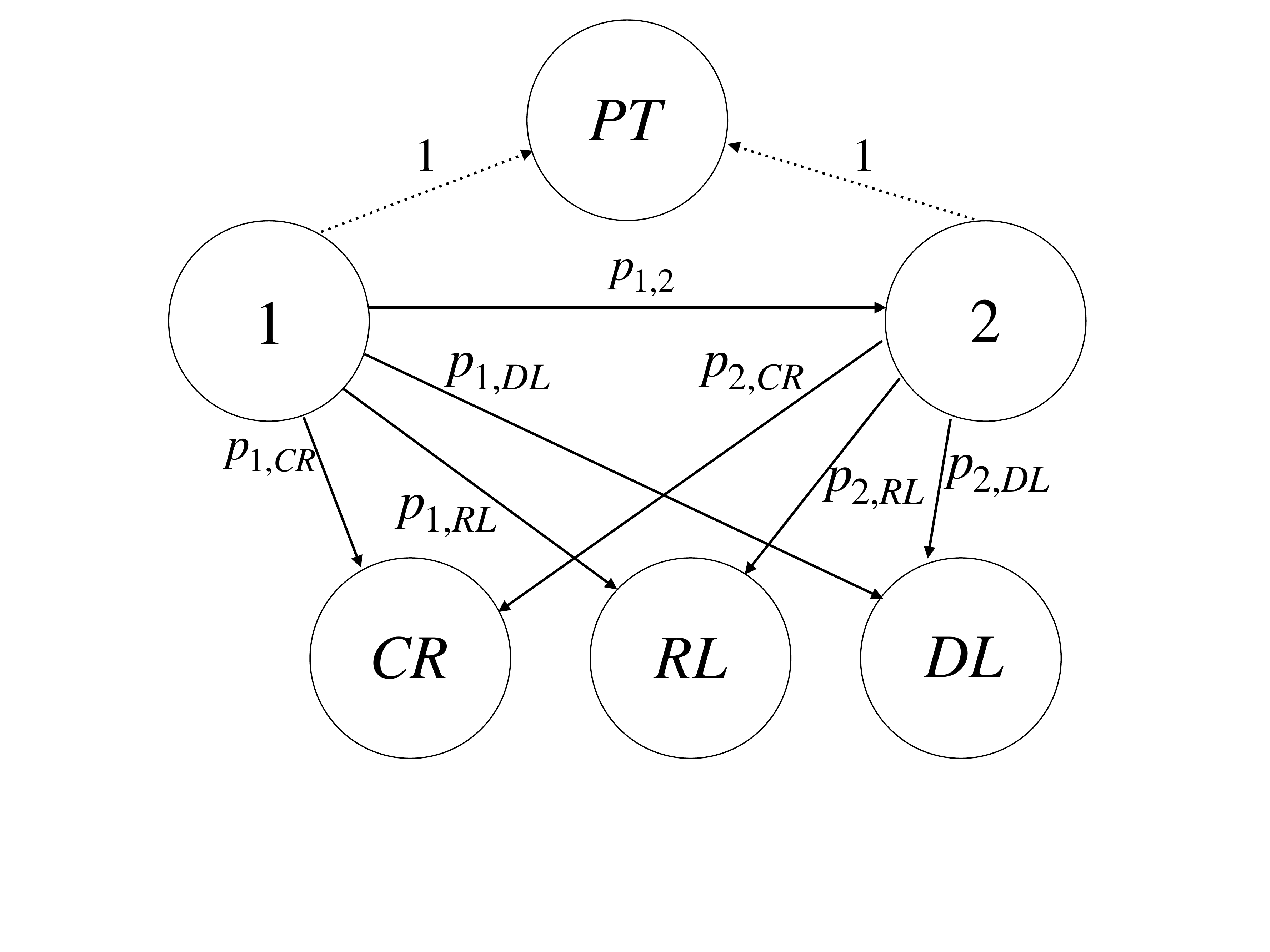}
 \caption{Example of an MPD where the optimal policy is not threshold. There is no self-transition in state $1$ or $2$. In state $1$, the patient transitions to state $2, CR, RL$ or $D$ (solid arcs), or is proactively transferred (dashed arc). In state $2$, the patient has to exit the ward, either by proactive transfer , in which case s/he transitions to $PT$ with probability $1$ (dashed arc), either by transitioning $CR, RL$ or $D$ (solid arcs).  The patient can not transition back to state $1$. We provide values of the rewards and transitions for which the optimal policy is not threshold.}
 \label{fig:counterexample}
 \end{figure}
 Condition \eqref{eq:assumption:sub-1} is not satisfied, i.e., $out(1) < out(2).$ We also set the rewards $r_{W}, r_{PT}, r_{RL},r_{CR},r_{D}$  such that $out(2) > r_{PT}$, which means that the optimal nominal policy will not proactively transfer the patient when in state $2$: $\pi^{*}(1)=0$. However,
\begin{align*}
\pi^{*}(1) = 1 & \iff r_{W} + \lambda \cdot r_{PT} > r_{W} + \lambda \cdot (out(1) + p_{1,2} (r_{W} + \lambda \cdot out(2)) \\
& \iff r_{PT} > out(1) + p_{1,2} (r_{W} + \lambda \cdot out(2)).
\end{align*}
Therefore, when $out(1) < r_{PT} < out(2)$ and the discount factor $\lambda$ is small enough, the decision-maker has an incentive to proactively transfer the patient in state $1$. In particular, this is the case for the following set of parameters: $$(p_{1,RL},p_{1,CR},p_{1,D})=(0.3,0,0.3), p_{1,2}=0.4, (p_{2,RL},p_{2,CR},p_{2,D})=(0.3,0.4,0.3), \lambda=0.01,$$
 $$r_{W}=1.6, r_{RL}=3,r_{CR}=2,r_{D}=1.5,r_{PT}=2.15.$$
We detail the computation of an optimal policy for the single-patient MDP of Figure \ref{fig:counterexample}.
We start with the Bellman Equation in state $2$:
$V^{*}_{2} = \max \{ r_{W} + \lambda \cdot out(2), r_{W} + \lambda \cdot r_{PT} \}.$
Since $out(2)= (0.3,0.4,0.3)^{\top}(3,2,1.5) = 2.15 > r_{PT}=2$, we know that $\pi^{*}(2)=0$, and the optimal policy does not transfer the patient with a severity score of $2$. Moreover,
$V^{*}_{2}=r_{W} + \lambda \cdot out(2) = 1.6 + 0.01 \cdot 2.15 = 1.6215.$
Let us compute $V^{*}_{1}.$  The Bellman Equation in state $1$ gives
$$V^{*}_{1} = \max \{ r_{W} + \lambda \cdot (p_{1,2} \cdot V^{*}_{2}+ out(1)), r_{W} + \lambda \cdot r_{PT} \}.$$
Moreover, $out(1)=(0.3,0,0.3)^{\top}(3,2,1.5) = 1.35 <out(2) =2.15 .$
Therefore,
$$V^{*}_{1} = \max \{ 1.6 + 0.01\cdot (0.4 \cdot 1.6215+ 1.35), 1.6 + 0.01 \cdot 2 \} = \max \{1.619986, 1.62\}=1.62, $$
from which we conclude that $\pi^{*}(1)=1$, and the optimal policy proactively transfers the patients with severity score of $1$. Therefore, the optimal nominal policy is not threshold. We would like to note that we could have chosen any set of parameters for which
$ out(2) > r_{PT}>out(1), r_{PT} > p_{1,2} \cdot (r_{W} + \lambda \cdot out(2) )+out(1).$
In practice, the discount factor is likely to be significantly higher than $0.01$, since the decision-maker in the hospital likely does care about the long-term impacts of the transfer policies.

\section{Details about the nominal matrix.}
\label{app:values-nom-parameters}
\paragraph{Confidence intervals.}We use the method in \cite{conf_interval} to compute $95 \%$ confidence intervals around the nominal matrix $\bm{T}^{0}$. This method yields
\begin{align}
[T_{ij}^{0} - \alpha_{i}, T_{ij}^{0}+2 \cdot \alpha_{i}], \forall \; (i,j) \in [10] \times [13], \boldsymbol{\alpha}= 10^{-4} \cdot (  4 ,8, 10, 14,  15,  43,  46, 47, 46, 45).
\end{align}
We notice that the confidence intervals are larger for small severity scores than for larger severity scores (up to one order of magnitude). This is because large severity scores correspond to more serious health conditions, which are less likely to be observed than smaller severity scores.
\paragraph{Details on nominal factor matrix.}
 We want to know if the errors between $\bm{T}^{0}$ and $\bm{\hat{T}}$ are more important for some severity scores than others. Therefore, we compute the maximum absolute and relative deviations between each row of $\bm{\hat{T}}$ and each row of $\bm{T}^{0}$. In general, we notice that the absolute errors are higher for high severity scores. For instance, for severity score $1$ the maximum absolute error is $0.0023$. On the other hand, for severity score of $9$, the maximum error is $0.0049$. However, the maximum \textit{relative} error is higher for low severity scores (from $1$ to $5$). Even though the absolute deviations are small, they amount to large relative deviations because they occur on coefficients that are already small. For instance, $T_{1,7}^{0} = 5.40 \cdot 10^{-5}$ and $\hat{T}_{1,7} = 6.23 \cdot 10^{-5}$, which gives a relative deviation of about $15 \%$, even though the absolute deviation is in the order of $10^{-5}.$

\section{Sensitivity Analysis for our single-parameter MDP and our hospital simulations.}\label{app:sens}
In this section we present a detailed sensitivity analysis for our single-patient MDP and our hospital simulations.  We have mentioned that the \textit{ordering} of the rewards of our single-patient MDP can be inferred from the outcomes of the patients and the use of ICU resources (see \eqref{eq:rec-def}). We consider the impact of a change in the reward parameters presented in Section \ref{sec:MDP-parameters}. We have seen in Section \ref{sec:MDP-emp-results} that for this setting of rewards, the optimal nominal policy is $\pi^{[6]}$, while the optimal robust policy is $\pi^{[5]}$ (for $\U_{\min}$ and $\U_{\sf emp}$). We choose to study the variations in $r_{RL}$ and $r_{PT-RL}$, since these rewards have reversed influences on the thresholds of the optimal policies. Indeed, $r_{RL}$ is associated with a patient recovering from the ward, i.e., a patient who has not been proactively transferred, while $r_{PT-RL}$ is the reward associated with a patient recovering \textit{after} having been proactively transferred.

We present in Table \ref{tab:sens-hosp-PT} the variations in the hospital performance of the optimal robust policies, for changes in the value of $r_{RL}$ (from $240$ to $260$) and in the value of $r_{PT-RL}$ (from $180$ to $200$). We would like to note that both the thresholds of the optimal robust policies and the associated worst-case transition matrix may vary when the rewards parameters change.

\begin{table}[H]

 \caption{Worst-case mortality and average ICU occupancy of the optimal robust policies for the uncertainty sets $\U_{\min}, \U_{\sf emp}$ and $\U_{\sf sa}$, for variations in the rewards $r_{PT-RL}$ and $r_{RL}$.}
    \centering
       \label{tab:sens-hosp-PT}
	\begin{tabular}{c||ccc|ccc||c|ccc|ccc|}
    	\hline
    	\multicolumn{1}{c||}{$r_{PT-RL}$} & \multicolumn{3}{c}{Mort. ($\%$)} &  \multicolumn{3}{c}{ICU occupancy ($\%$)}  & \multicolumn{1}{||c|}{$r_{RL}$} & \multicolumn{3}{c}{Mort. ($\%$)} & \multicolumn{3}{c}{ICU occupancy ($\%$)}  \\
    	    	\hline
 $ \; $ & \; $\U_{\min}$ \;  &  $\U_{\sf emp}$ & \; $\U_{\sf sa}$ \; & \; $\U_{\min}$ \;  &  $\U_{\sf emp}$ & \; $\U_{\sf sa}$ & $ \; $ & \; $\U_{\min}$ \;  &  $\U_{\sf emp}$ & \; $\U_{\sf sa}$ \;  & \; $\U_{\min}$ \;  &  $\U_{\sf emp}$ & \; $\U_{\sf sa}$ \\
	      \hline
$180$ & 5.73	& 6.47	& 7.02  & 71.88&	75.34 &	77.25 & $240$ & 4.78	& 5.46 & 	6.02   & 75.4 & 	77.97	& 78.96\\
$185$& 5.57 &	6.35 &	7.02  & 72.34&75.62&	77.34 & $245$& 5.11	& 5.82&	6.41  & 73.95&	76.70	& 77.81\\
$190$ & 5.11&	5.83	 & 6.40  & 73.95&	76.70 &	77.80 & $250$ & 5.11 & 	5.83	 & 6.40  & 73.95 &	76.70	& 77.80\\
$195$ & 5.11&	5.46	& 6.02   & 73.94	&77.98	&78.96 & $255$ &5.57 &	6.36 &	6.41  & 72.33 &	75.63	&77.81\\
$200$ & 4.78	& 5.12	 & 5.64 & 75.39	&79.03&	80.02 & $260$ & 5.57 &	6.36	& 7.02  & 72.33	&75.63	&77.24\\
   	   \hline
    \end{tabular}

\end{table}

In Table \ref{tab:sens-hosp-PT}, we notice that the hospital performance of the optimal robust policies can vary when the set of reward parameters of our single-patient MDP does change. However, these changes are mostly due to the fact that we are comparing the optimal robust policies for different values of the rewards, and therefore the thresholds of the policies that we are comparing do vary. For instance, for $\U_{\min}$, the optimal robust policy is $\pi^{[8]}$ for $r_{PT-RL}=240$ but it is $\pi^{[4]}$ for $r_{PT-RL}=260$. On the contrary, we notice that when the optimal robust threshold is the same, \textit{variations in rewards value do not impact the hospital worst-case performance.} For instance, the optimal robust policies are the same (equal to $\pi^{[5]}$ across all three uncertainty sets) when $r_{RL}=250$ and when $r_{RL}=245$. The hospital worst-case performance (mortality, LOS, ICU occupancy) are also the same, when $r_{RL}=250$ and when $r_{RL}=245$. However, these two rows of worst-case performance are computed for \textit{different} worst-case matrices (since the reward parameters for our single-patient MDP were different).

Therefore, we can conclude that even though the optimal robust and nominal policies can vary in our single-patient MDP, the hospital performance remain stable for each threshold policy individually. The ordering of the rewards does yield worst-case transition matrices for our single-patient MDP that are \textit{good} and \textit{robust} candidate worst-case transition matrices for the hospital worst-case performance, since variations in the rewards parameters on the single-patient MDP side still yield worst-case hospital performance that are very similar.

%

\section{Numerical results for rank $r=7$.}\label{app:r=7}
In this section we present our numerical results for the performance of the hospital, when the NMF approximation $\bm{\hat{T}}$ is of rank $r=7$.
\paragraph{Errors of the NMF approximations.}
 For $r=7$, we compute a new $\hat{\bm{T}}$ solution of the NMF optimization program. Our solution $\boldsymbol{\hat{T}}=\boldsymbol{U\hat{W}}^{\top}$ achieves the following errors:
    $ \| \boldsymbol{T}^{0} - \boldsymbol{\hat{T}} \|_{1} = 0.1932, \| \boldsymbol{T}^{0} - \boldsymbol{\hat{T}} \|_{\infty} = 0.0224, \| \boldsymbol{T}^{0} - \boldsymbol{\hat{T}} \|_{\sf relat,\boldsymbol{T}^{0}} = 0.4093.$
    In more details, $\bm{\hat{T}}$ achieves the following errors.
\begin{table}[H]
\center
\begin{tabular}{|c|cccc|}
\hline
                   & max. & mean    & median  & 95\% percentile \\
                   \hline
absolute deviation & 0.0224  & 0.0015 & 0.0004 & 0.0069          \\
relative deviation & 0.4093  & 0.0856  & 0.0432  & 0.3247 \\
\hline
\end{tabular}
\caption{Statistics of the absolute and relative deviations of $\bm{\hat{T}}$ from $\bm{T}^{0}$ for a rank $r=7$.}
\label{tab:abs-relat-r-7}
\end{table}
As we can see in Table \ref{tab:abs-relat-r-7}, the absolute deviations remains small. Additionally, the relative differences between the coefficients are moderate, with half being less than $8.56 \%$. That said, the maximum relative different is 40.93\%. This occurs with $\hat{T}_{4,6}=0.0035$, while $T_{4,6}^{0}=0.0060$; so while the relative deviation is quite large, the  absolute variation is only in the order of $10^{-3}.$

\paragraph{Mortality and Length-Of-Stay.} We present the worst-case performance of the $11$ threshold policies, for our uncertainty sets $\U_{\min}$ and $\U_{\sf emp}$, when the rank is $r=7$. For references we still show the performance for the nominal transition kernel $\bm{T}^{0}$ (nominal performance), for the uncertainty set $\U_{\sf sa}$ and for our NMF solution of rank $r=7$.

We first note that the hospital performance with our NMF approximation of rank $r=7$ are very close to the hospital performance for $\bm{T}^{0}$, which provides support that $\hat{\bm{T}}$ is a plausible transition matrix.
We notice that the performance of the threshold policies can still significantly deteriorate, even for small variations from the nominal matrix $\bm{T}^{0}$. In particular, there is a $20 \%$ increase in the average mortality, for some worst-cases matrices in $\U_{\min}$ and $\U_{\sf emp}.$ Interestingly, the uncertainty set $\U_{\min}$ yields worst-case mortality rates that are higher than for worst-cases matrices in $\U_{\sf emp}$, contrary to what we noticed in Section \ref{sec:exp} for rank $r=8$. However, these two uncertainty sets still yield the same insights, which are that the performance can significantly degrade even for small deviations, and that in worst-case, the initial decrease for proactively transferring the patients with the highest severity scores (policy $\pi^{[11]}$, top-left of each curve, to policy $\pi^{[6]}$, the sixth point of each curve, starting from the left) is steeper than the initial decrease for the nominal performance. Moreover, these insights are still different from the worst-cases performance in $\U_{\sf sa}$, since the results for $\U_{\sf sa}$ are independent of the rank chosen for our NMF approximation. In particular, for worst-cases in $\U_{\sf sa}$, the decision-maker appears to be able to proactively transfer the patients with severity scores in $\{8,9,10\}$, \textit{without} increasing the ICU occupancy.

Therefore, our numerical simulations for rank $r=7$ are corroborating our numerical simulations of Section \ref{sec:exp} for rank $r=8$. We do not present the hospital simulations for lower ranks, since the NMF approximations become very poor for rank $r$ lower than $7$. For instance, for $r=6$, there are $54$ coefficients (out of $130$ coefficients) outside of the confidence intervals, and for a rank $r=5$, our NMF solution has $70$ coefficients  that are outside the $95 \%$ confidence intervals.

\begin{figure}[H]
\begin{subfigure}{0.24\textwidth}
 \includegraphics[width=1.1\linewidth,height=5cm]{figures/Fig4a}
\caption{Random samples analysis (mortality).}
\label{fig:mort_rand_bump_1_rk_7}
\end{subfigure}
\begin{subfigure}{0.24\textwidth}
  \includegraphics[width=1.1\linewidth,height=5cm]{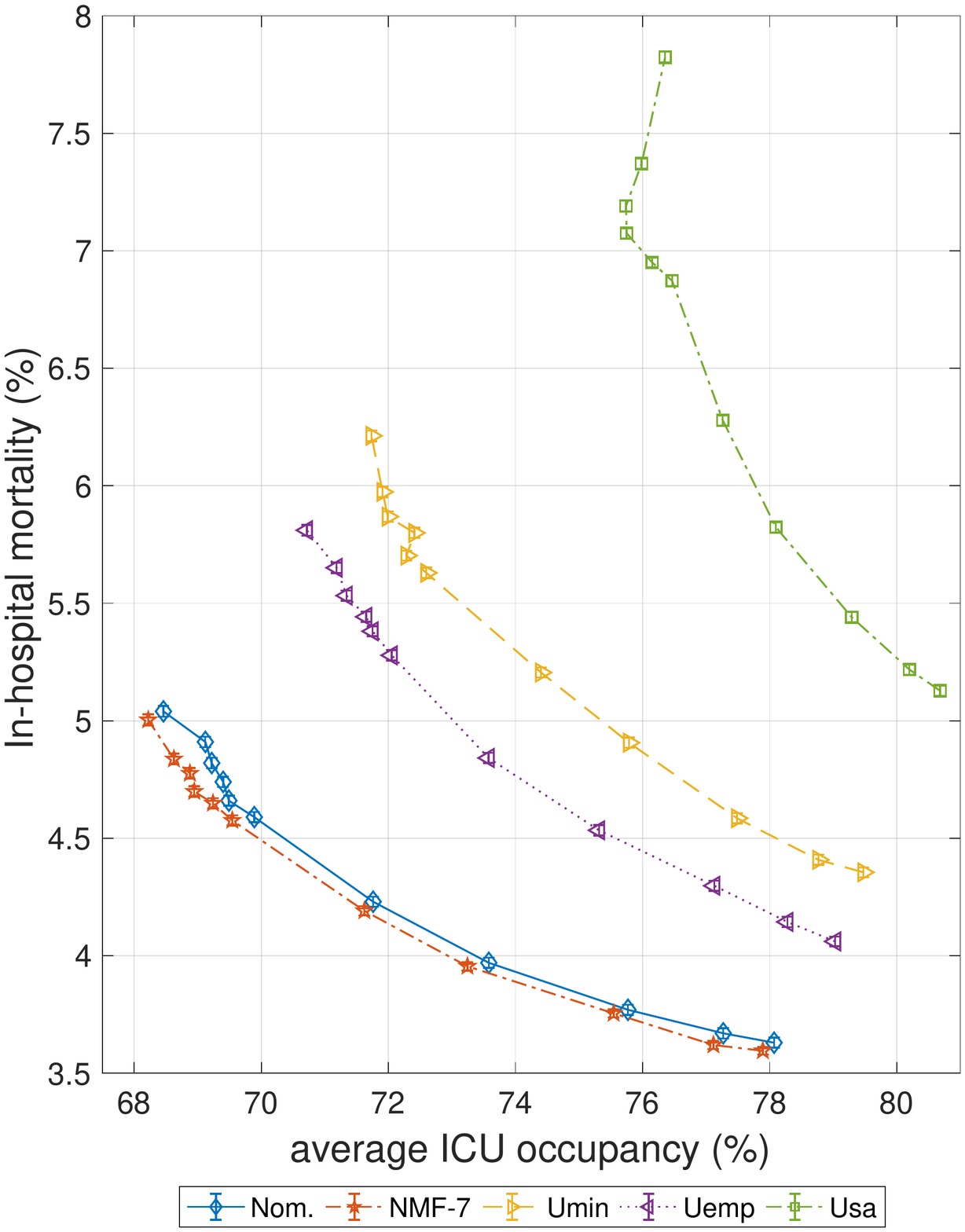}
\caption{Worst-case analysis (mortality).}
\label{fig:mort_worst_bump_1_rk_7}
\end{subfigure}
\begin{subfigure}{0.24\textwidth}
 \includegraphics[width=1.1\linewidth,height=5cm]{figures/Fig5a}
\caption{Random samples analysis (LOS).}
\label{fig:LOS_rand_bump_1_rk_7}
\end{subfigure}
\begin{subfigure}{0.24\textwidth}
  \includegraphics[width=1.1\linewidth,height=5cm]{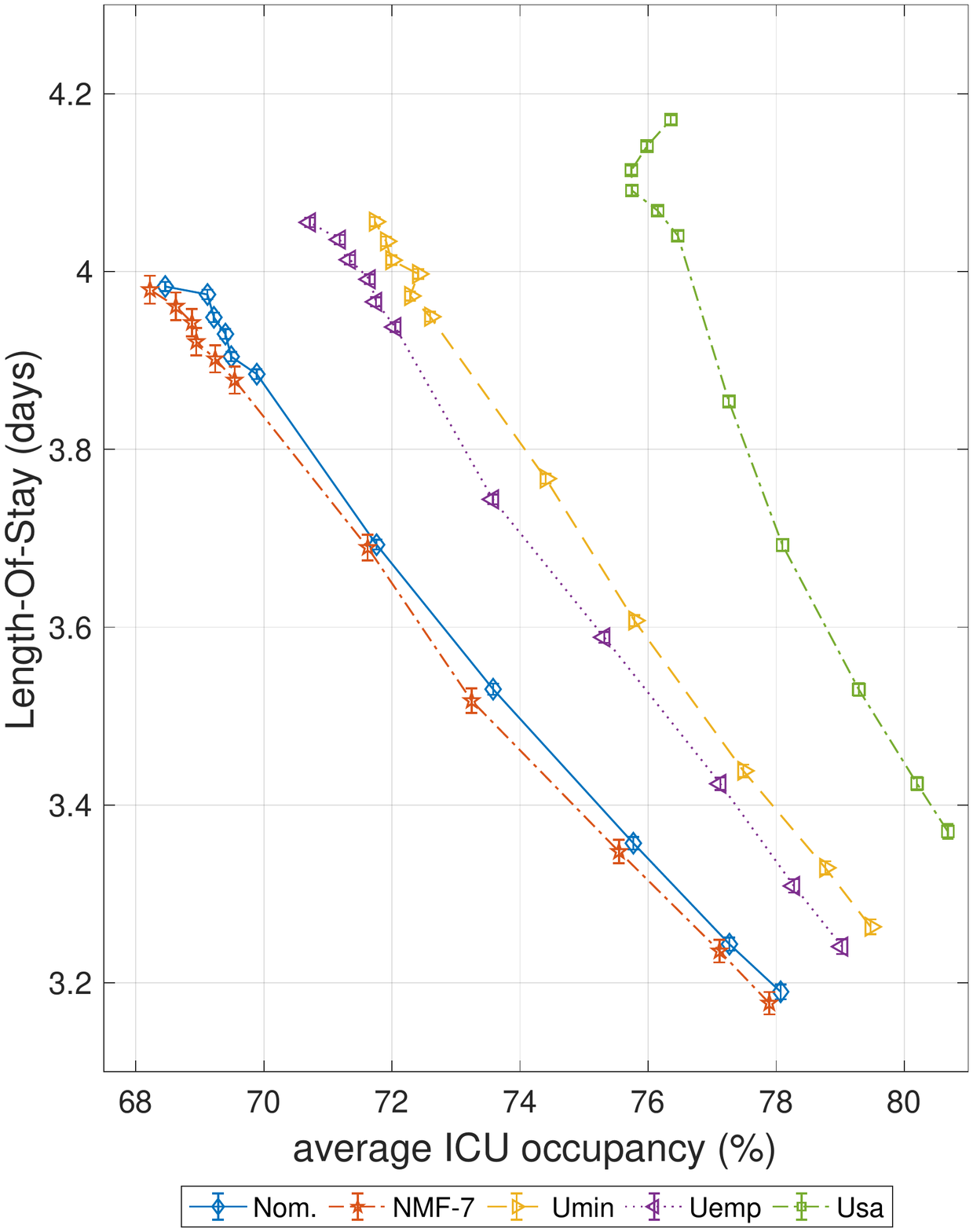}
\caption{Worst-case analysis (LOS).}
\label{fig:LOS_worst_bump_1_rk_7}
\end{subfigure}
\caption{In-hospital mortality and length-of-stay of the $11$ threshold policies for the nominal estimated matrix, randomly sampled matrices in the $95\%$ confidence intervals and  the worst-case matrices found by our single MDP model (right-hand side).}
\label{fig:mort-rk-7}
\end{figure}
%

\section{Hospital performance for random deviations around the nominal kernel.}\label{app:rand-sample}

We sample at random $20$ matrices in the confidence intervals \eqref{eq:conf_unrelated}. In order to do so, we first sample a matrix of deviations $\bm{D} \in \R^{10 \times 13}$, with $D_{ij} \in [- \alpha_{i},+[ 2 \cdot \alpha_{i}], \forall (i,j) \in [10] \times [13].$ Note that the matrix $\bm{T}^{0} + \bm{D}$ is not necessarily a transition matrix, because each of its row does not necessarily sum up to $1$. Therefore, we project each of the rows of the matrix $\bm{T}^{0} + \bm{D}$ onto the simplex and we obtain a matrix a new matrix $\tilde{\bm{T}}$. If the corresponding matrix $\tilde{\bm{T}}$ is inside the confidence-intervals, we compute the hospital performance of the $11$ threshold policies. Otherwise, we reject $\tilde{\bm{T}}$ and sample a new deviation matrix $\bm{D}.$

Using this method, we compute the performance of the threshold policies for $20$ matrices chosen randomly inside the confidence intervals \eqref{eq:conf_unrelated}. Out of these $20$ simulations, $8$ were pessimistic (higher mortality / Length-Of-Stay / ICU occupancy than in the nominal case) and $12$ were optimistic (lower mortality / Length-Of-Stay / ICU occupancy than in the nominal case)

\paragraph{Mortality.} For the in-hospital mortality, the average relative deviations from the nominal performance ranged from $3.00 \%$ to $3.84 \%$ (from threshold $0$ to threshold $11$). For each policy, the maximum relative deviation from the nominal performance ranged from $6.19 \%$ to $8.82 \%$ (again for threshold $0$ to threshold $11$).

\paragraph{LOS.}

For the Length-Of-Stay, the average relative deviations from the nominal performance ranged from $0.34 \%$ to $1.04\%$ (for threshold $3$ and threshold $11$). For each policy, the maximum relative deviation from the nominal performance ranged from $0.91 \%$ to $2.79 \%$ (for threshold $0$ and threshold $11$).

\paragraph{ICU occupancy.}

For the average ICU occupancy, the average relative deviations from the nominal performance ranged from $0.37 \%$ to $0.57\%$ (for threshold $0$ and threshold $11$). For each policy, the maximum relative deviation from the nominal performance ranged from $0.99 \%$ to $1.57 \%$ (for threshold $11$ and threshold $0$).

{\color{black}
\section{Additional figures for Section \ref{sec:simu-queue}.}\label{app:proportion-and-patient-types}
\subsection{Worst-case simulations for the probability to enter the queue}
We present in Figure \ref{fig:worst-case-prob} the worst-case probabilities that a patient of a certain type (direct admit, crashed and readmitted) will enter the waiting queue. The nominal probabilities are given in Figure \ref{fig:proba_types_in_queue}. We notice that in the worst-case, the probabilities can moderately deteriorate; still, the trends remain the same as in the nominal case. Namely, the probability that a certain type of patient enters the queue remains fairly stable until the threshold increases above 5 (proactively transferring the patients with the top 10 \% riskiest severity conditions).
\begin{figure}[h]
\begin{subfigure}{0.27\textwidth}
 \includegraphics[width=1.1\linewidth,height=6cm]{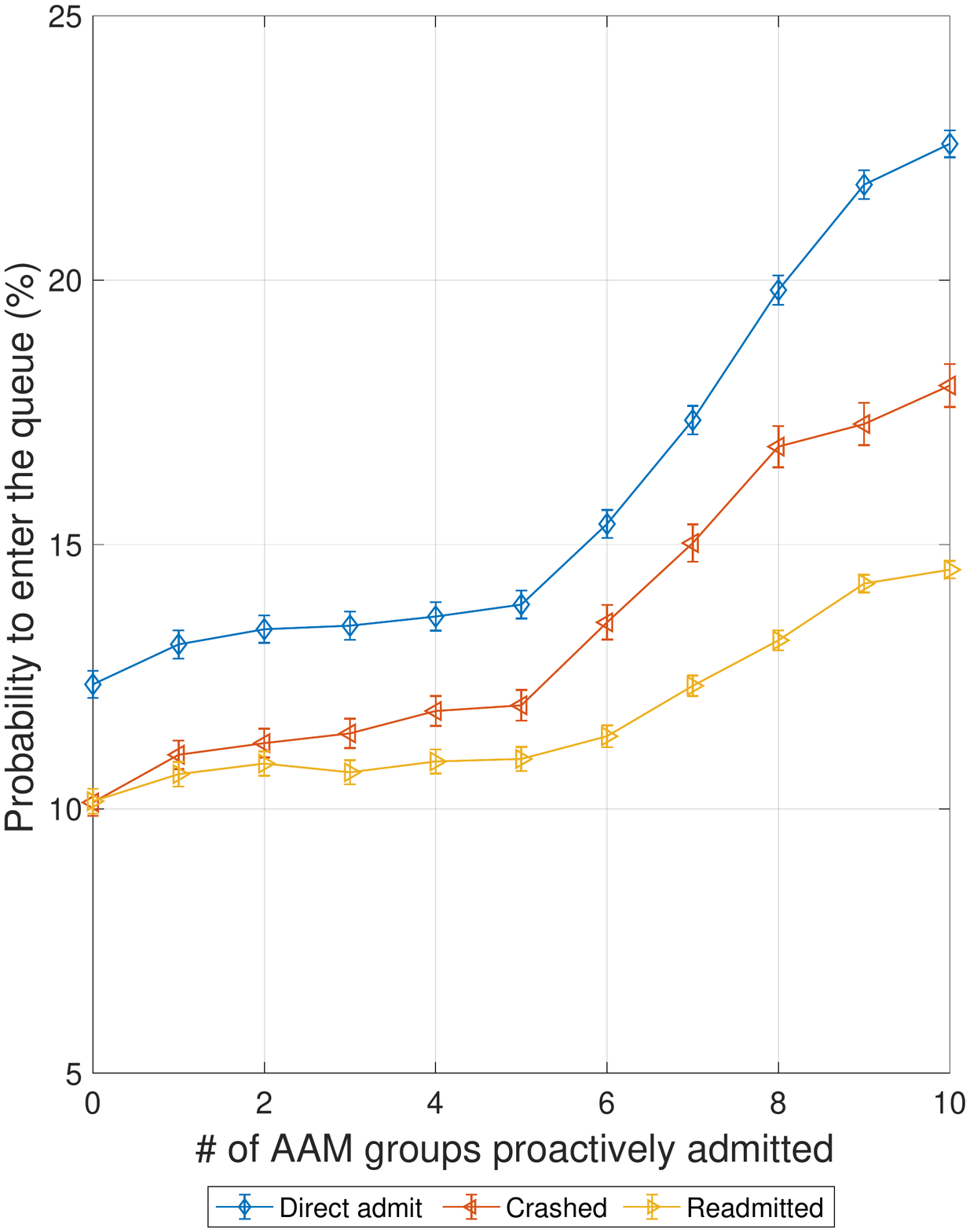}
\caption{$\U=\U_{\min}$.}
\label{fig:prob-U-min}
\end{subfigure}
\begin{subfigure}{0.27\textwidth}
 \includegraphics[width=1.1\linewidth,height=6cm]{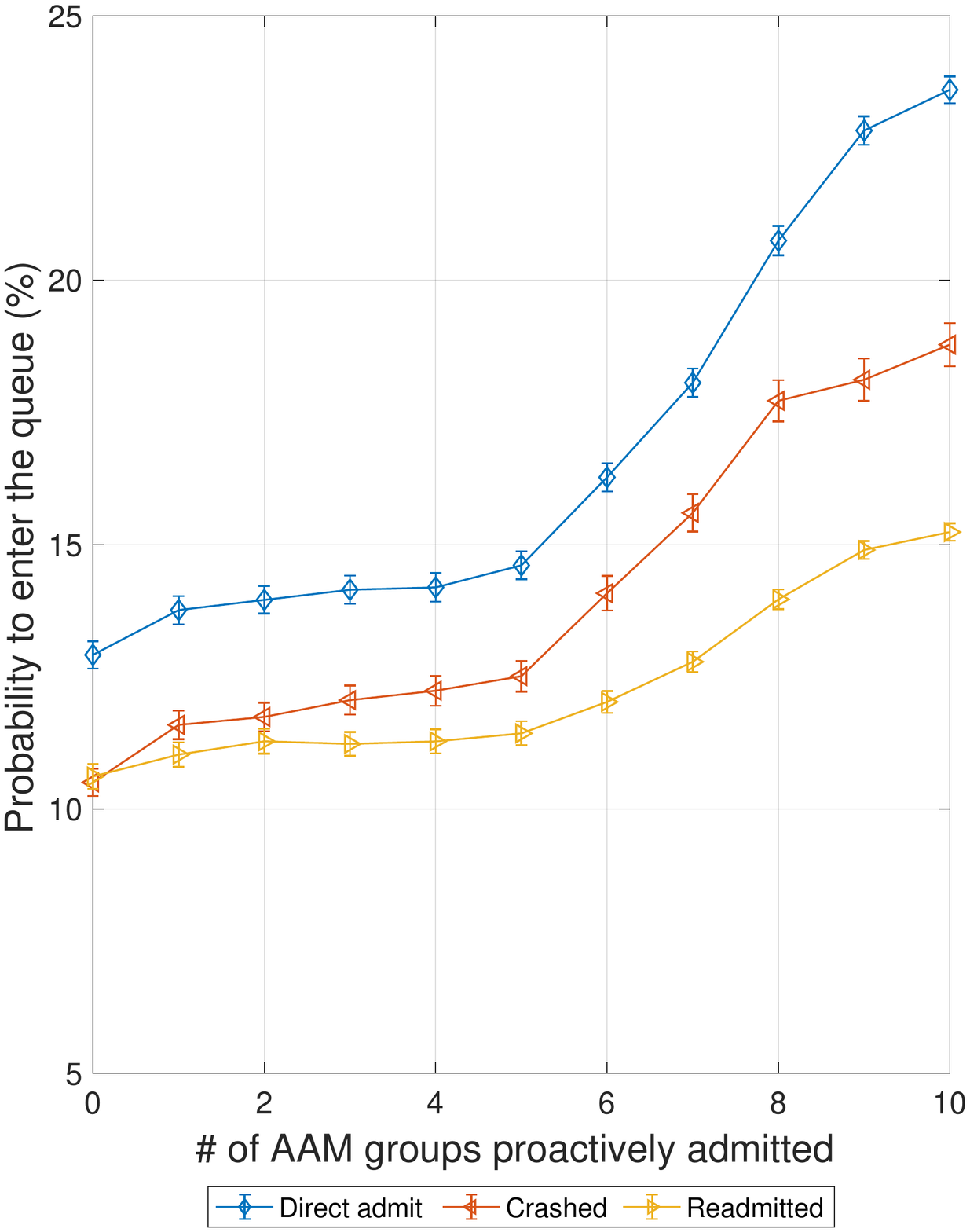}
\caption{$\U=\U_{\sf emp}$.}
\label{fig:prob-U-emp}
\end{subfigure}
\begin{subfigure}{0.27\textwidth}
 \includegraphics[width=1.1\linewidth,height=6cm]{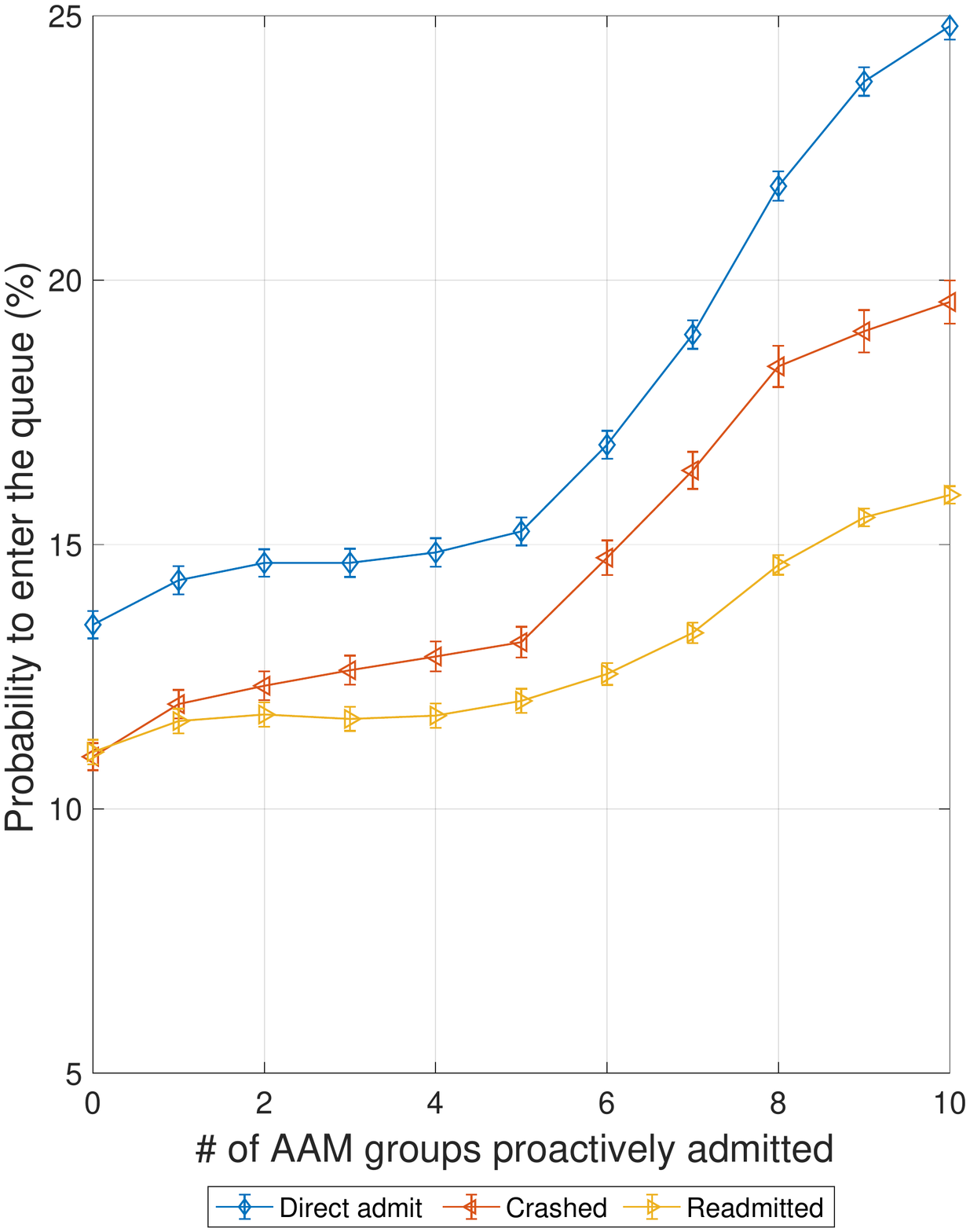}
\caption{$\U=\U_{\sf sa}$.}
\label{fig:prob-U-sa}
\end{subfigure}
\caption{For different uncertainty sets, worst-case probability to enter the queue for different patient types (direct admits, crashed and readmitted), for different threshold policies.}
\label{fig:worst-case-prob}
\end{figure}
\subsection{Worst-case simulations for the proportion of patient types in the queue}
We present here our worst-case simulations for the proportion of patient types in the waiting queue. We note that the worst-case results are very similar to the nominal case, which is as expected since we are computing \textit{proportions} (and not the absolute number of patient types in the queue). The only change compared to the nominal case is that the proportion of crashed patients slightly increases as the worst-case transition matrices chosen in the uncertainty sets are increasing the likelihood of crash. The first six threshold policies only moderately increase the worst-case proportions.
\begin{figure}[h]
\begin{subfigure}{0.24\textwidth}
 \includegraphics[width=1.1\linewidth,height=6cm]{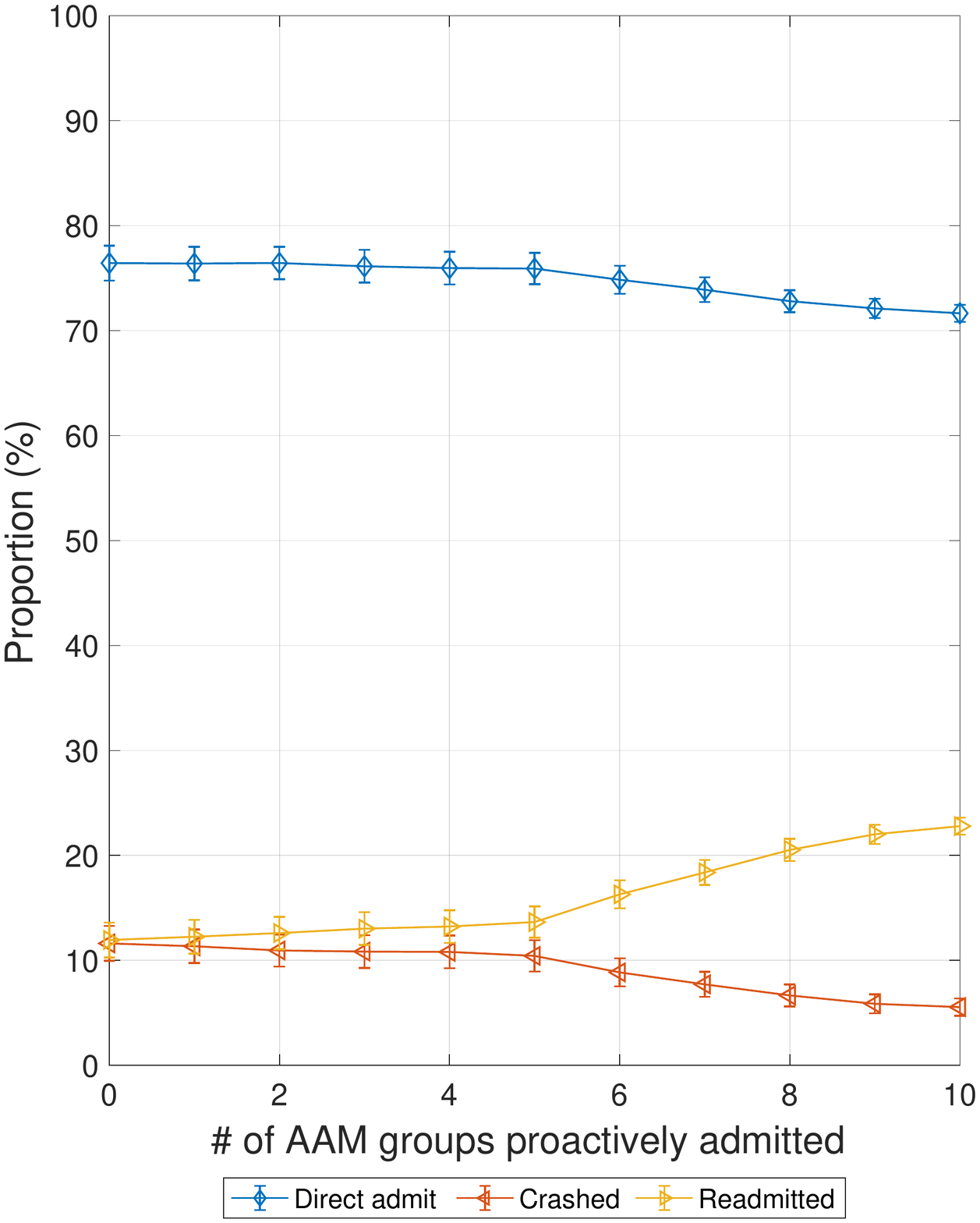}
\caption{Nominal.}
\end{subfigure}
\begin{subfigure}{0.24\textwidth}
 \includegraphics[width=1.1\linewidth,height=6cm]{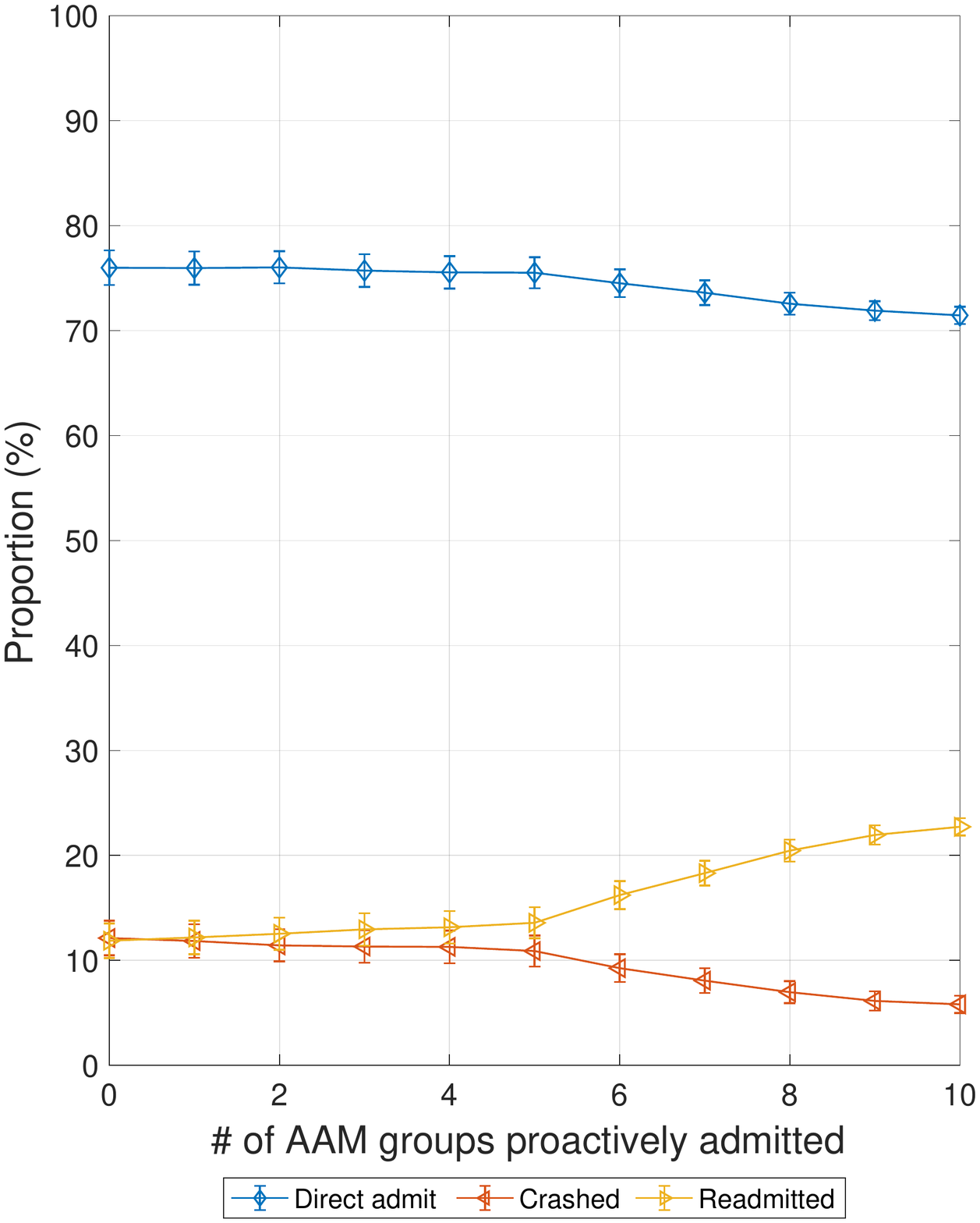}
\caption{$\U=\U_{\min}$.}
\label{fig:proportion-U-min}
\end{subfigure}
\begin{subfigure}{0.24\textwidth}
 \includegraphics[width=1.1\linewidth,height=6cm]{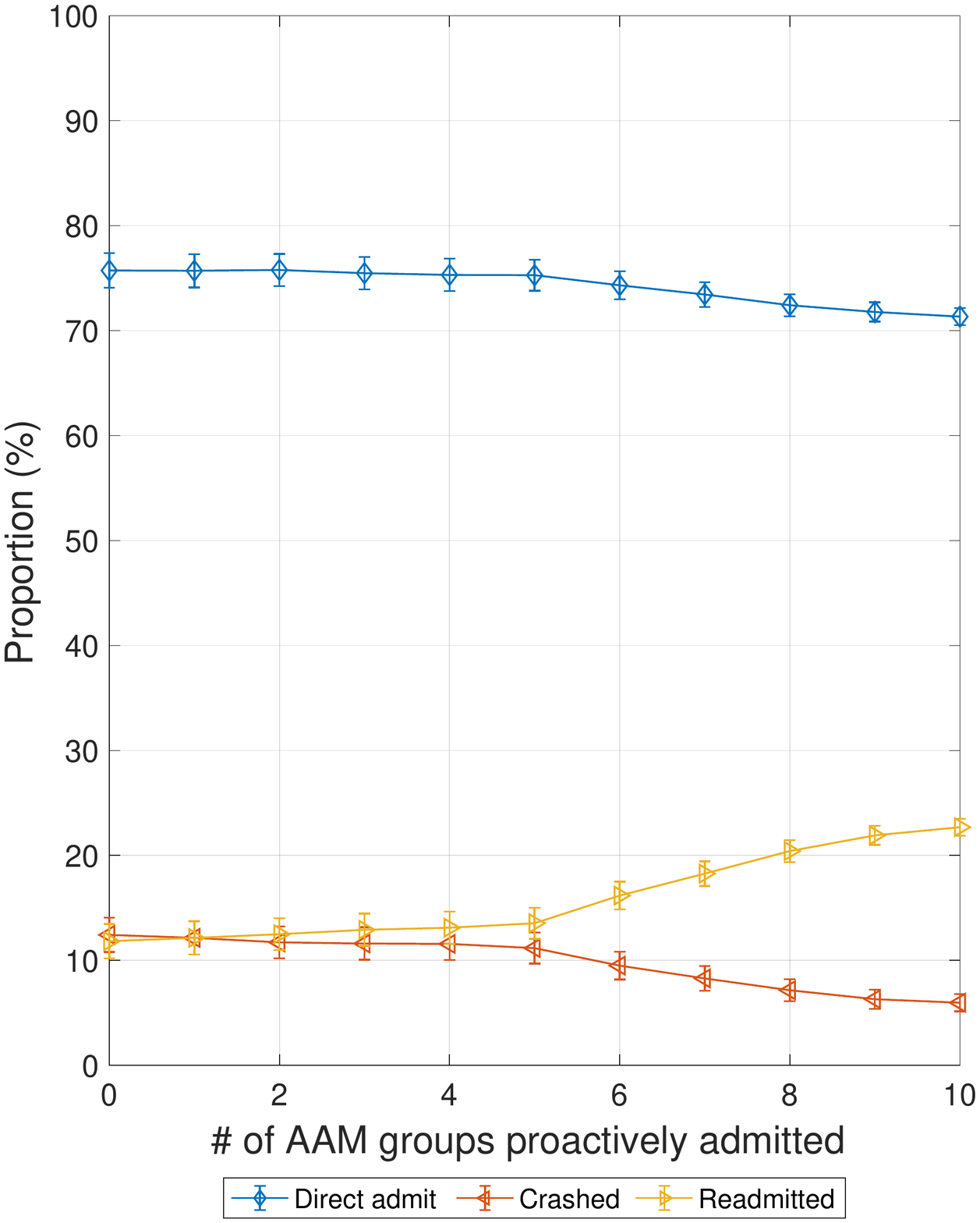}
\caption{$\U=\U_{\sf emp}$.}
\label{fig:proportion-U-emp}
\end{subfigure}
\begin{subfigure}{0.24\textwidth}
 \includegraphics[width=1.1\linewidth,height=6cm]{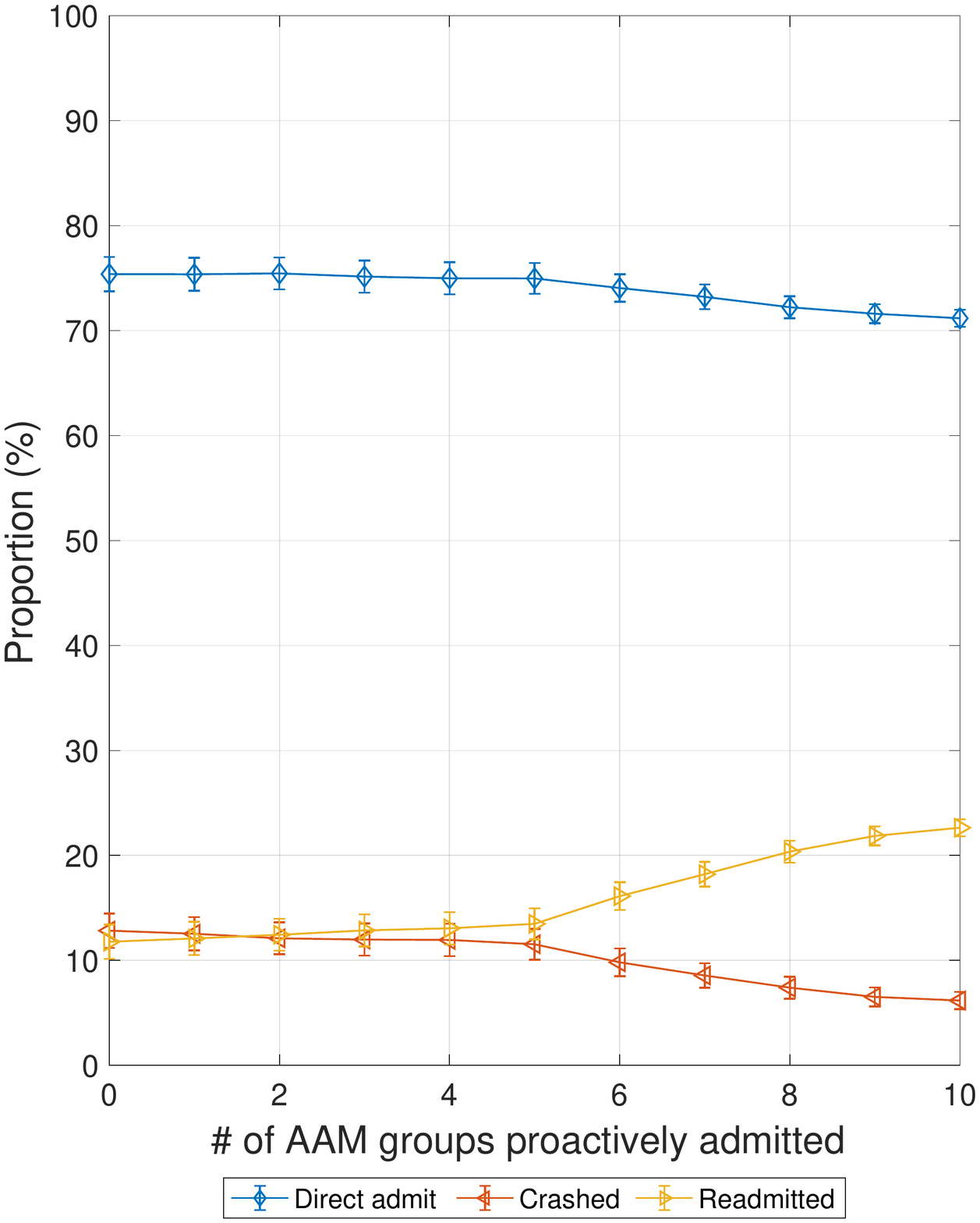}
\caption{$\U=\U_{\sf sa}$.}
\label{fig:proportion-U-sa}
\end{subfigure}
\label{fig:worst-case-proportion-type}
\caption{For the nominal matrix and for the worst-case for different uncertainty sets, proportions of patient types (direct admits, crashed and readmitted) in the waiting queue for different threshold policies.}
\end{figure}
}



\end{document}